\documentclass{article}

\usepackage{microtype}
\usepackage{graphicx}
\usepackage{subcaption}
\usepackage{booktabs}

\usepackage{hyperref}

\newcommand{\dk}{k}

\usepackage{algorithm}
\usepackage[algo2e,ruled,vlined]{algorithm2e}

\usepackage[preprint]{noticml2026}

\usepackage{amsmath}
\usepackage{amssymb}
\usepackage{mathtools}
\usepackage{amsthm}

\usepackage[capitalize,noabbrev]{cleveref}

\theoremstyle{plain}
\newtheorem{theorem}{Theorem}[section]

\newtheorem{lemma}[theorem]{Lemma}
\newtheorem{corollary}[theorem]{Corollary}
\theoremstyle{definition}
\newtheorem{definition}[theorem]{Definition}
\newtheorem{assumption}[theorem]{Assumption}
\theoremstyle{remark}
\newtheorem{remark}[theorem]{Remark}

\usepackage[textsize=tiny]{todonotes}

\usepackage{enumitem}
\usepackage{multirow}

\newcommand{\restatetheorem}[2]{
\begingroup
\def\thetheorem{\ref{#1}}
\begin{theorem}[Restated]
#2
\end{theorem}
\addtocounter{theorem}{-1}
\endgroup
}

\newcommand{\restatelemma}[2]{
\begingroup
\renewcommand{\thetheorem}{\ref{#1}}
\begin{lemma}[Restated]
#2
\end{lemma}
\addtocounter{theorem}{-1}
\endgroup
}

\newcommand{\restatecorollary}[2]{
\begingroup
\renewcommand{\thetheorem}{\ref{#1}}
\begin{corollary}[Restated]
#2
\end{corollary}
\addtocounter{theorem}{-1}
\endgroup
}

\usepackage{mathabx}

\newcommand{\rbr}[1]{\left(#1\right)}
\newcommand{\sbr}[1]{\left[#1\right]}
\newcommand{\cbr}[1]{\left\{#1\right\}}

\DeclareMathOperator*{\argmin}{argmin}

\newcommand{\pair}[1]{\langle{#1}\rangle}

\newtheorem{fact}[theorem]{Fact}

\makeatletter
\newenvironment{algorithm-scheduler}[1][htb]{
    \renewcommand{\ALG@name}{Scheduler}
   \begin{algorithm}[#1]
  }{\end{algorithm}}
\makeatother

\newcommand{\E}{\mathbb{E}}

\DeclareMathOperator*{\ind}{\mathbb I}

\newcommand{\tnorm}[1]{\| #1 \|}
\newcommand{\norm}[1]{\left\| #1 \right\|}

\newcommand{\tfloor}[1]{\lfloor\, {#1}\,\rfloor}

\newcommand{\tsum}{\textstyle\sum}

\newcommand\N{\mathbb N}
\newcommand\R{\mathbb R}
\newcommand\Rp{\mathbb R_{+}}

\newcommand{\Sphere}{\mathbb{S}^{k-1}}
\newcommand{\Ball}{\mathbb{B}^{k}}

\newlength{\pgmtab}  
\def\qedsketch{\ifmmode\Box\else{\unskip\nobreak\hfil
\penalty50\hskip1em\null\nobreak\hfil$\Box$
\parfillskip=0pt\finalhyphendemerits=0\endgraf}\fi}

\newlength{\tpush}
\newcommand{\handout}[5]{
   \noindent
   \begin{center}
   \framebox{ \vbox{ \hbox to \textwidth { {\bf \coursenum\ :\  \coursename} \hfill #5 }
       \vspace{3mm}
       \hbox to \textwidth { {\Large \hfill #2  \hfill} }
       \vspace{1mm}
       \hbox to \textwidth { {\it #3 \hfill #4} }
     }
   }
   \end{center}
   \vspace*{4mm}
   \newcommand{\lecturenum}{#1}
   \addcontentsline{toc}{chapter}{Lecture #1 -- #2}
}

\newcommand{\normop}[1]{{\left\vert\kern-0.25ex\left\vert\kern-0.25ex\left\vert #1 
		\right\vert\kern-0.25ex\right\vert\kern-0.25ex\right\vert}}

\newcommand{\cA}{\mathcal{A}}
\newcommand{\cB}{\mathcal{B}}

\newcommand{\cD}{\mathcal{D}}

\newcommand{\cF}{\mathcal{F}}

\newcommand{\cK}{\mathcal{K}}

\newcommand{\cS}{{\mathcal{S}}}

\newcommand{\cW}{\mathcal{W}}

\newcommand{\wtl}{\widetilde{l}}

\newcommand{\wtu}{\widetilde{u}}

\newcommand{\eR}{\mathfrak{R}}

\newcommand{\eD}{\mathfrak{D}}

\newcommand{\wtd}{\widetilde{d}}

\newcommand{\wtf}{\widetilde{f}}
\newcommand{\wtx}{\widetilde{x}}

\newcommand{\wtg}{\widetilde{g}}

\newcommand{\wtddual}{\widetilde d^{\star}}
\newcommand{\wtsigdual}{\widetilde \sigma^{\star}}

\newcommand{\wty}{\widetilde{y}}

\newcommand{\wtdelta}{\widetilde{\delta}}

\newcommand{\wtsig}{\widetilde{\sigma}}

\newcommand{\ddual}{d^{\star}}
\newcommand{\sigdual}{\sigma^{\star}}
\newcommand{\ddualmax}{\ddual_{\textnormal{max}}}
\newcommand{\sigdualmax}{\sigdual_{\textnormal{max}}}

\newcommand{\dtot}{d_{\textnormal{tot}}}

\newcommand{\dmax}{d_{\textnormal{max}}}

\newcommand{\sigmax}{\sigma_{\textnormal{max}}}

\newcommand{\myeq}[1]{\stackrel{\text{(#1)}}{=}}
\newcommand{\myle}[1]{\stackrel{\text{(#1)}}{\le}}

\newcommand{\htgdelta}{\widehat{g}^{\delta}}
\newcommand{\wtgdelta}{\widetilde{g}^{\delta}}

\newcommand{\Unif}{\textnormal{Unif}}

\newcommand{\wteta}{\eta}

\newcommand{\deltatot}{\delta_{\textnormal{tot}}}
\newcommand{\wthtgdelta}{\widetilde{\widehat{g}}^{\delta}}
\newcommand{\ydelta}{y^{\delta}}
\newcommand{\wtydelta}{\widetilde{y}^{\delta}}

\newcommand{\wtvdelta}{\widetilde{v}^{\delta}}
\newcommand{\wtfdelta}{\widetilde{f}^{\delta}}

\usetikzlibrary{decorations.pathreplacing}

\newcommand{\sz}{z}
\newcommand{\wtI}{\widetilde{I}}

\newcommand{\epa}{l}
\newcommand{\epb}{r}
\newcommand{\wtepa}{\widetilde{l}}
\newcommand{\wtepb}{\widetilde{r}}

\newcommand{\Rtwo}{R^{\textnormal{2p}}}
\newcommand{\wbRtwo}{\widebar{R}^{\textnormal{2p}}}

\newcommand{\zcur}{\widebar{z}}

\icmltitlerunning{A Reduction from Delayed to Immediate Feedback for Online Convex Optimization with Improved Guarantees}

\begin{document}

\twocolumn[

  \icmltitle{A Reduction from Delayed to Immediate Feedback for Online Convex Optimization with Improved Guarantees
}

  \begin{icmlauthorlist}
    \icmlauthor{Alexander Ryabchenko}{uoft,vector}
    \icmlauthor{Idan Attias}{ideal}
    \icmlauthor{Daniel M. Roy}{uoft,vector}
  
  \end{icmlauthorlist}

  \icmlaffiliation{uoft}{University of Toronto}
  \icmlaffiliation{vector}{Vector Institute}
  \icmlaffiliation{ideal}{Institute for Data, Econometrics, Algorithms, and Learning (IDEAL), hosted by UIC and TTIC}

  \icmlcorrespondingauthor{Alexander Ryabchenko}{alex.rbch.research@gmail.com}
  \icmlcorrespondingauthor{Idan Attias}{idanattias88@gmail.com}
  \icmlcorrespondingauthor{Daniel M. Roy}{daniel.roy@utoronto.ca}

  \icmlkeywords{Machine Learning, ICML}

  \vskip 0.3in
]

\printAffiliationsAndNotice{} 

\begin{abstract}
    We develop a reduction-based framework for online learning with delayed feedback that recovers and improves upon existing results for both first-order and bandit convex optimization.
    Our approach introduces a continuous-time model under which regret decomposes into a delay-independent learning term and a delay-induced drift term, yielding a delay-adaptive reduction that converts any algorithm for online linear optimization into one that handles round-dependent delays.
    For bandit convex optimization, we significantly improve existing regret bounds, with delay-dependent terms matching state-of-the-art first-order rates. For first-order feedback, we recover state-of-the-art regret bounds via a simpler, unified analysis.    
    
    Quantitatively, for bandit convex optimization we obtain $O(\sqrt{d_{\text{tot}}} + T^{\frac{3}{4}}\sqrt{\dk})$ regret, improving the delay-dependent term from $O(\min\{\sqrt{T d_{\text{max}}},(Td_{\text{tot}})^{\frac{1}{3}}\})$ in previous work to $O(\sqrt{d_{\text{tot}}})$. Here, $\dk$, $T$, $d_{\text{max}}$, and $d_{\text{tot}}$ denote the dimension, time horizon, maximum delay, and total delay, respectively.
    Under strong convexity, we achieve $O(\min\{\sigma_{\text{max}} \ln T, \sqrt{d_{\text{tot}}}\} + (T^2\ln T)^{\frac{1}{3}} {\dk}^{\frac{2}{3}})$, 
    improving the delay-dependent term from $O(d_{\textnormal{max}} \ln T)$ in previous work to $O(\min\{\sigma_{\text{max}} \ln T, \sqrt{d_{\text{tot}}}\})$,
    where $\sigma_{\text{max}}$ denotes the maximum number of outstanding observations and may be considerably smaller than $d_{\text{max}}$.
\end{abstract}

\section{Introduction}

\begin{table*}[t]
\caption{Notation: $T$ is the horizon, $\dk$ is the dimension, $\dtot = \sum_{t=1}^T d_t$ is the total delay, $\dmax = \max_{t \in [T]} d_t$ is the maximum delay, and $\sigma_{\textnormal{max}} \le \dmax$ is the maximum number of outstanding observations at any round.
The bounds of \citet{wan2024improvedregretbanditconvex} require advance knowledge of $\dmax$ and $T$; all other algorithms are fully adaptive.}
\centering
\setlength{\tabcolsep}{10pt}
\renewcommand{\arraystretch}{1.3}
\resizebox{\textwidth}{!}{
\begin{tabular}{|c|c|c|c|}
\hline
\multirow{2}{*}{\textbf{Loss type}} &
\multicolumn{3}{c|}{\textbf{Regret bounds for bandit convex optimization with delays}} \\ \cline{2-4}
& {\citet{bistritz2022}} &
  {\citet{wan2024improvedregretbanditconvex}} &
  {Our work} \\
\hline
Convex
  & $(T \dtot)^{1/3} \sqrt{\dk} + T^{3/4} \dk$
  & $\sqrt{T\dmax} + T^{3/4} \sqrt{\dk}$
  & $\sqrt{\dtot} + T^{3/4} \sqrt{\dk}$ \\[2pt]
\hline
Strongly convex
  & N/A
  & $d_{\textnormal{max}} \ln T + (T^2 \ln T)^{1/3} {\dk}^{2/3} $
  & $\min\{\sigma_{\textnormal{max}}\ln T,\sqrt{d_{\text{tot}}}\} + (T^2 \ln T)^{1/3} {\dk}^{2/3}$ \\[2pt]
\hline
\end{tabular}
}
\label{tab:regret-summary}
\end{table*}

Online convex optimization (OCO) is a fundamental framework for sequential decision-making under uncertainty~\cite{hazan2023introductiononlineconvexoptimization, orabona_book}. In the classic OCO setting, a player repeatedly selects an action from a convex domain, incurs a loss specified by a convex loss function, and then receives feedback about that loss. In the full-information setting, the entire loss function is revealed. In the first-order setting, the player observes the gradient of the loss at the chosen action. In the zeroth-order (bandit) setting, the player observes only the scalar loss value.

In practice, observations are often \emph{delayed}: feedback from round $t$ may arrive only at round $t + d_t$, where $d_t$ is the round-dependent delay. This forces the player to make decisions without up-to-date information. For example, consider a recommendation system that must select content instantaneously for each arriving user. Feedback needed to update the system's parameters (such as clicks or purchases) arrives only after user sessions conclude. When many users are served concurrently, new arrivals must be handled while feedback from earlier users is pending, resulting in delays. Delayed feedback has been studied extensively in many variants of sequential decision-making, including online learning, bandits, and reinforcement learning (e.g., \citealp[]{mesterharm2005, cesa16, zimmert2020, vanderhoeven2023}).

A natural question is whether delayed algorithms can be obtained as reductions from non-delayed ones. Prior work has made significant progress on this question \citep{weinberger2002, joulani2013,joulani2016}. However, existing approaches either require advance knowledge of the delay process, incur memory overhead scaling with the amount of outstanding feedback, or are limited to full-information feedback. Developing a unified, delay-adaptive reduction---one that requires no advance knowledge of delays, adapts to function regularity (e.g., strong convexity), and accommodates different feedback models (first-order or bandit)---is the main goal of this work.

In this paper, we introduce a continuous-time model for delayed feedback that subsumes the standard discrete-time model and enables a clean reduction from delayed feedback to immediate feedback for both online convex optimization with first-order feedback (OCO) and bandit convex optimization (BCO). For BCO, this reduction yields improved regret rates, while for OCO it recovers state-of-the-art bounds via a simple, unified analysis. Our contributions are summarized as follows:

\newpage
\begin{enumerate}[leftmargin=0.5cm, after=\vspace{-0.2cm}, itemsep=0.05cm]
    \item \textbf{Continuous-time model (Section~\ref{sec:CTM}).}
    We introduce a continuous-time model for delayed feedback. Instead of the usual round-based view in which each round reveals an unordered batch of observations, we place predictions and observations as events on a shared timeline, and show that the effective delay structure is induced by their interleaving.
    Theorem~\ref{thm:CTM} characterizes this structure, identifying new invariants and equivalences among delay-related quantities that are widely used across the delayed-learning literature.
    Ordering the interaction by observation times provides a natural framework for analyzing algorithms that run single-instance updates upon each feedback arrival (e.g., \citet{joulani2016}), yielding a regret decomposition into a non-delayed term and a prediction-drift term (Figure~\ref{eq:regret-decomposition}). Our analysis provides new tools for controlling this drift, which we exploit to improve regret bounds through a unified analysis, without requiring advance knowledge of problem parameters such as the maximum delay~$\dmax$, total delay~$\dtot$, maximum number of outstanding observations~$\sigmax$, or the time horizon~$T$.

    \item \textbf{Reduction to non-delayed learning (Sections~\ref{sec:drift-penalization}--\ref{sec:OCO}).} Building on this decomposition, we reduce delayed OCO to online linear optimization with penalties for changing predictions, a problem with no delays. We provide wrappers that transform any algorithm for this setting into a delayed OCO or BCO algorithm whose regret is controlled by the base algorithm's performance (Theorems~\ref{thm:OCO-decomp} and~\ref{thm:BCO-decomp}). The BCO wrapper incorporates single-point gradient estimation \citep{flaxman2004}.

    \item \textbf{First-order feedback (Section~\ref{sec:reduction-OCO}).} Wrapping Proximal Follow-The-Regularized-Leader (P-FTRL) and Online Mirror Descent (OMD) update rules recovers state-of-the-art bounds for delayed OCO: $O(\sqrt{\dtot} + \sqrt{T})$ for convex losses (Corollary~\ref{cor:OCO-regular}) and $O(\min\{\sigmax \ln T, \sqrt{\dtot}\} + \ln T)$ for strongly convex losses (Corollary~\ref{cor:OCO-sc}), matching \citet{quanrud2015} and \citet{esposito2025} respectively via an adaptive approach requiring no knowledge of delays or horizon.
    
    \item \textbf{Bandit feedback (Section~\ref{sec:BCO}).} For delayed BCO, wrapping P-FTRL achieves $O(\sqrt{\dtot} + T^{3/4}\sqrt{\dk})$ expected regret (Corollary~\ref{cor:BCO-regular}), where $k$ is the dimension of the domain, improving the delay dependence over $(T\dtot)^{1/3}$ of \citet{bistritz2022} and $\sqrt{T\dmax}$ of \citet{wan2024improvedregretbanditconvex}.  For strongly convex losses, we obtain $O(\min\{\sigmax \ln T, \sqrt{\dtot}\} + (T^2 \ln T)^{1/3} {\dk}^{2/3})$ regret (Corollary~\ref{cor:BCO-sc}), improving over $O(\dmax \ln T + (T^2 \ln T)^{1/3} {\dk}^{2/3})$ of \citet{wan2024improvedregretbanditconvex}. Note that $\sigmax$ is always at most $d_{\textnormal{max}}$, and it can be much smaller in practice.\footnote{For example, if $d_1=\Theta(T)$ and $d_t=0$ for all $t\ge 2$, then $\dmax=\dtot=\Theta(T)$ while $\sigmax=1$.}
    These results are summarized in Table~\ref{tab:regret-summary}.
    
    \item \textbf{Skipping scheme (Section~\ref{sec:skipping}).} We incorporate the adaptive skipping technique of \citet{zimmert2020}, as an external wrapper (Algorithm~\ref{alg:skipping-scheme}), improving the delay term from $O(\sqrt{\dtot})$ to $O(\min_{Q\subseteq [T]}\{|Q| + ({\sum_{t \notin Q} d_t})^{\frac{1}{2}} \})$. This yields substantial gains when a few rounds have exceptionally large delays (Corollary~\ref{cor:skipping-OCO-BCO}).

    \item \textbf{Two-point bandit feedback (Appendix~\ref{app:two-point-BCO}).} As a further application of our reduction, we give the first regret bounds for delayed BCO under the two-point feedback model of \citet{agarwal2010}. Using the standard two-point gradient estimator and its variance bound \citep{shamir2015}, we obtain regret $O(\sqrt{\dtot} + \sqrt{T\dk})$ for convex losses and $O(\min\{\sigmax \ln T, \sqrt{\dtot}\} + k \ln T)$ under strong convexity.
\end{enumerate}

\textbf{Unified regret decomposition summary.} Across all settings, our reduction approach yields regret bounds that decompose into a \emph{delay term} $\sqrt{\dtot}$, improvable via skipping to
$O(\min_{Q\subseteq [T]}\{|Q| + ({\sum_{t \notin Q} d_t})^{\frac{1}{2}}\})$ (or $\min\{\sigmax \ln T,\sqrt{\dtot}\}$ for strongly convex losses), and a \emph{learning term} matching the corresponding no-delay rate: $\sqrt{T}$ (or $\ln T$) for OCO with first-order feedback; $T^{3/4}\sqrt{\dk}$ (or $(T^2\ln T)^{1/3}k^{2/3}$) for BCO; and $\sqrt{T\dk}$ (or $k\ln T$) for BCO with two-point feedback.

\textbf{Related work.} Methodologically, the work closest to ours is \citet{joulani2016}, who studied delayed OCO with full-information feedback. They propose a black-box-style reduction that feeds observations to a base (non-delayed) algorithm in order of arrival, bounding regret in terms of the base algorithm's regret and prediction drift. With known $\dtot$, they achieve regret $O(\sqrt{T+\dtot})$. They also establish a bound of $O(\sqrt{T\tau^*})$, where $\tau^*$ is the maximum number of observations that can arrive while some earlier feedback is still outstanding, known in advance. Using the tools developed in this paper, we show that $\tau^* = \Theta(\dmax)$, thus expressing this result in standard notation as $O(\sqrt{T\dmax})$.

\citet{weinberger2002} first studied online learning with delayed feedback and proposed a reduction that runs $d_{\text{fixed}}+1$ parallel copies of a base non-delayed algorithm, yielding regret $d_{\text{fixed}}\cdot R(T/d_{\text{fixed}})$ for a fixed, known delay $d_{\text{fixed}}$, where $R(T)$ denotes the base regret. \citet{joulani2013} extended this to round-dependent delays, giving $\dmax \cdot R(T/\dmax)$. \citet{quanrud2015} showed that, in delayed OCO, a simple delayed gradient-descent scheme, updating each round using whatever gradients have arrived so far with stepsize $\eta \asymp 1/\sqrt{T+\dtot}$, achieves regret $O(\sqrt{\dtot} +\sqrt{T})$. Delayed OCO with strongly convex losses has also received attention \citep{wan2021onlinestronglyconvexoptimization, wu2024, esposito2025}.

\citet{heliou2020} first extended the bandit gradient descent method of \citet{flaxman2004} to delayed BCO. \citet{bistritz2022} developed the first fully delay-adaptive algorithm using a doubling trick with respect to both horizon and total delay, and \citet{wan2024improvedregretbanditconvex} improved the bound via a blocking strategy; however, their method requires advance knowledge of $\dmax$ and $T$.

For additional related work on online convex optimization and online learning with delayed feedback, see Appendix~\ref{app:related-work}.

\section{Problem Setting}\label{sec:setting}

Let $T \in \N$ denote the time horizon, and let $\cK \subset \R^{\dk}$ be a non-empty, convex, and closed domain for the dimension $\dk \in \N$. We equip $\R^{\dk}$ with a norm $\tnorm{.}$ and its dual norm $\tnorm{.}_{\star}$.

The player interacts with an environment determined by an oblivious adversary over $T$ rounds (Figure~\ref{fig:setting}). Before the game begins, the adversary selects delays $d_t$ and convex, differentiable loss functions $f_t:\cK\to\R$ for each round. Every round $t$, the player predicts $x_t \in \cK$, receiving the corresponding feedback (gradient $\nabla f_t(x_t)$ or value $f_t(x_t)$, depending on the feedback model) at the end of round $t+d_t$. We assume $d_t \in [T-t]$ for all $t \in [T]$, since feedback arriving after round $T$ cannot affect any prediction.

\begin{figure}[H]
\centering
\fbox{\parbox{0.95\columnwidth}{
    \textbf{Online Convex Optimization with Delays}
    \vspace{2pt}
    
    $\bullet$ \textit{Latent parameters:} number of rounds $T$.\\
    $\bullet$ \textit{Pre-game:} adversary selects loss functions $f_{t}:\cK \to \R$ and delays $d_t \in [T-t]$ for $t \in [T]$.
    
    \vspace{3pt}
    \noindent For each round $t=1, 2, \ldots, T$:
    \begin{enumerate}[leftmargin=0.5cm,noitemsep,before=\vspace{-0.3cm},after=\vspace{-0.25cm}]
        \item The player predicts $x_t \in \cK$, incurring loss $f_t(x_t)$.
        \item For $s$ such that $s+d_s = t$, the environment reveals:
        \begin{itemize}[noitemsep,leftmargin=0.5cm,before=\vspace{-0.1cm},after=\vspace{-0.1cm}]
            \item $(s, \nabla f_s(x_s))$ in the first-order feedback model,
            \item $(s, f_s(x_s))$ in the bandit feedback model.
        \end{itemize}
    \end{enumerate}
}}
\captionsetup{font=footnotesize}
\caption{\small OCO with delays under first-order or bandit feedback.}
\label{fig:setting}
\end{figure}
The regret against a comparator $u \in \cK$ is defined as $R_T(u) = \tsum_{t=1}^T (f_t(x_t) - f_t(u))$, and its expected regret  as $\widebar{R}_T(u) = \E[R_T(u)]$. Letting $x^* \in \argmin_{x\in\cK} \tsum_{t=1}^T f_t(x)$ denote the best action in hindsight, the objective is to minimize $R_T(x^*)$ almost surely or its expectation $\widebar{R}_T(x^*)$.

\paragraph{Regularity assumptions.} We impose the following standard regularity assumptions, which are necessary to obtain sublinear regret guarantees in online convex optimization.
\begin{assumption}\label{assump:finite-diameter} 
    The diameter of $\cK$ is bounded by $D$, i.e., $\sup_{x,y\in\cK} \norm{x-y} \le D$.
\end{assumption}
\begin{assumption}\label{assump:gradient-norm-bound}
    For all $t \in [T]$, $f_t$ has norm of its gradient bounded by $G \ge 0$, i.e., $\sup_{x\in\cK} \norm{\nabla f_t(x)}_{\star} \le G$.
\end{assumption}

Assumptions \ref{assump:finite-diameter} and \ref{assump:gradient-norm-bound} hold throughout the paper. When deriving stronger guarantees, we also impose the following strong convexity condition, stated explicitly when invoked.
\begin{assumption}\label{assump:strong-convexity}
For all $t \in [T]$, $f_t$ is $\lambda$-strongly convex for $\lambda > 0$, i.e., $f_t(y) \ge f_t(x) + \pair{\nabla f_t(x),\, y - x} + \tfrac{\lambda}{2}\norm{y - x}^2$ for all $x,y\in \cK$.
\end{assumption}

For bandit convex optimization, we take $\tnorm{\cdot}$ to be the Euclidean norm $\tnorm{\cdot}_2$ and impose two additional assumptions.
\begin{assumption}\label{assump:max-val}
    For all $t \in [T]$, absolute values of $f_t$ are bounded by $M \ge 0$, i.e., $\sup_{x\in\cK} |f_t(x)| \le M$.
\end{assumption}
\begin{assumption}\label{assump:balls}
    The domain satisfies $r\Ball \subseteq \cK \subseteq R\Ball$ for some radii $0 < r < R$. 
\end{assumption}

\begin{remark}[Differentiability of loss functions]
While we assume throughout that the loss functions $f_t$ are differentiable, weaker regularity suffices: for first-order feedback, subdifferentiability with subgradient norms bounded by $G$, and for bandit feedback, integrability with $G$-Lipschitzness. We do not pursue this generality as it does not materially affect the analysis or the resulting bounds.
\end{remark}

\textbf{Steady algorithms.} An algorithm in this setting is a (possibly randomized) mapping from the history of received feedback and information about delays to predictions. The class of \emph{steady algorithms} is formally defined below.
\begin{definition}[Steady Algorithm]\label{def:steady-alg}
    An algorithm for online learning with delays is \emph{steady} if it does not change its prediction after rounds without feedback: $x_t = x_{t+1}$ whenever $\{s:s+d_s=t\}=\emptyset$. It is \emph{steady in expectation} if $\E[x_t]=\E[x_{t+1}]$ whenever $\{s : s+d_s=t\}=\emptyset$.
\end{definition}

\textbf{Notation.} For $n \in \N$, we let $[n] = \{1,\ldots,n\}$. For any sequence $\{\mathrm{a}_t\}_{t=1}^T$, indexed over $[T]$, we write $\mathrm{a}_{\textnormal{tot}} = \tsum_{t=1}^T \mathrm{a}_t$ and $\mathrm{a}_{\textnormal{max}} = \max_{t\in [T]} \mathrm{a}_t$. We denote by $\Ball$ and $\Sphere$ the unit Euclidean ball and sphere centered at the origin in $\R^{\dk}$, respectively. For any $S \subseteq \R^{\dk}$ and $c \in \R$, we let $cS = \{cx : x \in S\}$.

\section{Continuous Time Model for Delays}\label{sec:CTM}
The classical formulation of online learning with delays assumes a discrete, round-based time structure: a player makes a prediction in round $t$ and receives feedback in round $t + d_t$. Real-world systems, however, generate predictions and observations asynchronously. Rather than forcing these events into discrete rounds, we adopt a continuous time model that places them on a timeline $\Rp$.

\begin{definition}[Continuous Time Model (CTM)]\label{def:CTM}
     The timing of predictions and observations is determined by latency-intervals $\{I_t\}_{t=1}^T$, where $I_t = (\epa_t, \epb_t)\subset \Rp$. For every $t \in [T]$, the environment requests a prediction at time $\epa_t$ and reveals the corresponding observation at time $\epb_t$. All timestamps $\{\epa_t, \epb_t\}_{t=1}^T$ are distinct, and intervals are indexed in increasing order of
prediction times: $0 < \epa_1 < \ldots < \epa_T$.
\end{definition}

In the round-based model, the delay sequence $\{d_t\}_{t=1}^T$ fully specifies the time structure. In particular, it determines the number of outstanding observations in each round, $\sigma_t = \sum_{s=1}^{t-1} \ind(s + d_s \ge t)$, which is commonly used in the literature; for brevity, we refer to $\sigma_t$ as the \emph{backlog} in this paper. Under the CTM, both delays and backlogs arise directly from the geometry of the latency intervals.

Section~\ref{subsec:delay-structure} formalizes this structure. Building on it, Section~\ref{subsec:steady-algs} introduces an observation-centric view of the problem that complements the standard prediction-centric perspective. We show that steady algorithms admit a natural regret decomposition: non-delayed regret on observation-ordered losses plus delay-induced prediction drift (Fig.~\ref{eq:regret-decomposition}).

\subsection{Delays, Backlogs, and Their Duals}\label{subsec:delay-structure}

Under the CTM, rounds of the delayed problem (Fig.~\ref{fig:setting}) naturally correspond to time-intervals: round $t$ spans the interval $[\epa_t, \epa_{t+1})$, during which the $t$-th prediction is made at time $\epa_t$, and observations arrive at times $\{\epb_s: \epb_s \in (\epa_t, \epa_{t+1})\}$. Therefore, the discrete \emph{delay} $d_t$ and \emph{backlog} $\sigma_t$ arise as
\begin{align}\label{eq:delay-backlog-defn}
    d_t = |\{s:\, \epa_s \in I_t\}| \quad \text{and} \quad \sigma_t = |\{s:\, \epa_t \in I_s\}|,
\end{align}
where $d_t$ counts how many predictions are made between the $t$-th prediction and observing its feedback, and $\sigma_t$ counts how many previous predictions still await feedback immediately before making the $t$-th prediction.

The following lemma and remark show that these definitions recover the classical round-based formulation.

\begin{lemma}\label{lem:CTM-props}
    For quantities in \eqref{eq:delay-backlog-defn}, it holds that:
    \begin{align*}
        \epb_t \in (\epa_{t+d_t}, \epa_{t+d_t+1}), \quad \sigma_t = |\{s: s < t \le s+d_s\}|.
    \end{align*}
\end{lemma}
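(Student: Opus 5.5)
The first claim follows from monotonicity of the prediction times, and the second from rewriting the condition $\epa_t \in I_s$ through the first claim. Throughout I use $0<\epa_1<\dots<\epa_T$ from Definition~\ref{def:CTM}, the distinctness of timestamps, and the conventions $\epa_{T+1}=+\infty$ and $\epa_0 = 0$ so that boundary cases need no separate treatment.

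\emph{First claim.} I fix $t$ and use that $\epb_t>\epa_t$, since $I_t=(\epa_t,\epb_t)$ is a nonempty interval. Because the $\epa_s$ are strictly increasing and $\epb_t$ differs from all of them, there is a unique index $m\ge t$ with $\epa_m<\epb_t<\epa_{m+1}$. The predictions falling inside $I_t$ are then exactly those with $\epa_t<\epa_s<\epb_t$. By monotonicity this means $s\in\{t+1,\dots,m\}$, so by \eqref{eq:delay-backlog-defn} we get $d_t=m-t$. Substituting $m=t+d_t$ gives $\epb_t\in(\epa_{t+d_t},\epa_{t+d_t+1})$.

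\emph{Second claim.} I show that $\epa_t\in I_s$ holds iff $s<t\le s+d_s$.
\begin{itemize}
\item Suppose $\epa_s<\epa_t<\epb_s$. The first inequality gives $s<t$. The first claim gives $\epb_s<\epa_{s+d_s+1}$, so $\epa_t<\epa_{s+d_s+1}$, and monotonicity yields $t\le s+d_s$.
\item Conversely, suppose $s<t\le s+d_s$. Then $\epa_s<\epa_t\le\epa_{s+d_s}<\epb_s$, again using the first claim.
\end{itemize}
Counting the indices $s$ that satisfy either condition gives $\sigma_t=|\{s:s<t\le s+d_s\}|$.

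The only delicate point is the interplay between strict and non-strict inequalities. This is handled by the distinctness of timestamps together with the convention $\epa_{T+1}=\infty$, which covers observations that arrive after the last prediction. The rest is bookkeeping.
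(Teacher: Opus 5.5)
Your proof is correct and matches the paper's argument: the paper also counts that exactly $t+d_t$ prediction times precede $\epb_t$, which locates $\epb_t$ in $(\epa_{t+d_t},\epa_{t+d_t+1})$. The paper then says the backlog identity follows immediately, and your two-direction check is precisely that step written out; your convention $\epa_{T+1}=+\infty$ also handles observations arriving after the last prediction, a case the paper leaves implicit.
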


\begin{proof}
    Exactly $|\{s\!:\!\epa_s\!<\!\epb_t\}|=|\{s\!:\epa_s\!\le\!\epa_t\}|\!+\!|\{s\!:\!\epa_t\!<\!\epa_s\!<\!\epb_t\}| = t+d_t$ prediction times precede time $\epb_t$, so $\epb_t \in (\epa_{t+d_t}, \epa_{t+d_t+1})$. Then, the identity for backlog $\sigma_t = |\{s: \epa_s < \epa_t < \epb_s\}|$ follows immediately.
\end{proof}

\begin{remark}[Consistency and Realizability in the CTM]
    Lemma~\ref{lem:CTM-props} confirms that our definitions are consistent with round-based protocols: $d_t$ equals the number of round-intervals $[\epa_{s}, \epa_{s+1})$ spanned by $I_t$, placing observation time $\epb_t$ in round $t+d_t$; $\sigma_t$ matches the classical definition as the number of outstanding delays. Conversely, any round-based delayed-feedback protocol can be represented within the CTM. Specifically, given a delay sequence $\{d_t\}_{t=1}^T$, we can set $(\epa_t, \epb_t) = (t, t + d_t + 1 - 2^{-t})$ so that round $t$ spans interval $[t,t+1)$ and observation time $\epb_t$ clearly falls within round $t+d_t$ spanned by interval $[t+d_t, t+d_t+1)$.
\end{remark}

To fully characterize the delay structure in the CTM, we define novel \emph{dual-delay} and \emph{dual-backlog} sequences
\begin{align}\label{eq:dual-delay-backlog-defn}
    \ddual_t = |\{s:\, \epb_s \in I_t\}| \quad \text{and} \quad \sigdual_t = |\{s:\, \epb_t \in I_s\}|.
\end{align}
Here, dual-delay $\ddual_t$ counts 
how many observations occur between the $t$-th prediction and its observation, while $\sigdual_t$ counts how many predictions still await feedback when receiving observation for the $t$-th prediction.

The following theorem establishes the fundamental properties and relations among the four sequences.

\begin{theorem}\label{thm:CTM}
    For quantities in \eqref{eq:delay-backlog-defn} and \eqref{eq:dual-delay-backlog-defn}, it holds that:
    \begin{enumerate}[label=(\alph*),ref=(\alph*),noitemsep,leftmargin=1cm,before=\vspace{-0.3cm},after=\vspace{-0.3cm}]
        \item $\sum_{t=1}^T d_t = \sum_{t=1}^T \sigma_t = \sum_{t=1}^T \ddual_t = \sum_{t=1}^T \sigdual_t$,
        \item $\sigmax = \sigdualmax, \quad \tfrac{1}{2} \dmax \le \ddualmax \le 2\dmax$,
        \item $\ddual_t = \sigma_t + \beta(t) - t, \quad \sigdual_t = d_t + t - \beta(t)$,
    \end{enumerate}
    where $\beta\!:\![T]\!\to\![T]$ is the permutation $\beta(t)\!=\!|\{s\!:\!\epb_s\!\le\!\epb_t\}|$.
\end{theorem}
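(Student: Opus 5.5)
We prove all three parts by counting incidences between interval endpoints and intervals, and we establish (c) first because (a) and part of (b) follow from it.

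\textbf{Part (c).} Fix $t$ and count the observation times preceding $\epb_t$; by definition of $\beta$ there are $\beta(t)-1$ of them. Each such index $s \ne t$ has $\epb_s < \epb_t$, and we split these indices according to whether $\epb_s < \epa_t$ or $\epb_s \in I_t$. The second group has size exactly $\ddual_t$.

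The first group consists of the $s$ with $\epb_s<\epa_t$. Any such $s$ satisfies $\epa_s<\epb_s<\epa_t$, so $s<t$. Among the $t-1$ indices $s<t$, those with $\epb_s>\epa_t$ are exactly those with $\epa_t\in I_s$, and there are $\sigma_t$ of them. Hence the first group has size $t-1-\sigma_t$. This gives $\beta(t)-1 = t-1-\sigma_t+\ddual_t$, which is the first identity.

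Symmetrically, count the prediction times preceding $\epb_t$. By Lemma~\ref{lem:CTM-props} there are $t+d_t$ of them. Partition them into those with $\epb_s<\epb_t$ and those with $\epb_s>\epb_t$. The latter are exactly the $s$ with $\epb_t\in I_s$, so there are $\sigdual_t$ of them. The former number $\beta(t)$, since $s=t$ is included and every $s$ with $\epb_s<\epb_t$ has $\epa_s<\epb_s<\epb_t$. This gives $t+d_t=\beta(t)+\sigdual_t$, which is the second identity.

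\textbf{Part (a).} Sum the identities in (c) over $t$. Because $\beta$ is a permutation, $\sum_t(\beta(t)-t)=0$, so $\sum_t\ddual_t=\sum_t\sigma_t$ and $\sum_t\sigdual_t=\sum_t d_t$. For the remaining equality, $\sum_t d_t=\sum_t\sigma_t$ is double counting of the pairs $(s,t)$ with $\epa_s\in I_t$, counted once by the left endpoint and once by the containing interval.

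\textbf{Part (b), backlogs.} For $\sigmax=\sigdualmax$, let $N(\tau)$ be the number of intervals containing time $\tau$. This function changes only at endpoints: it rises by one at each $\epa$ and falls by one at each $\epb$. The quantity $\sigma_t$ equals $N$ just before $\epa_t$, and $\sigdual_t$ equals $N$ just after $\epb_t$. The maximum of $N$ over time is attained on an interval starting at some $\epa$; its value is $\sigma_t+1$ at that $\epa_t$. Following that maximal stretch, the first event after it must be some $\epb_s$, which gives $\sigdual_s=\max N-1$. Conversely, any value of $\sigdual$ is $N$ after a drop, and it is at most $\max N-1$ since the value just before the drop was $\sigdual_s+1\le \max N$.

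We then need every $\sigma_t$ to be at most $\max N-1$, which holds because $\sigma_t+1$ is the value right after $\epa_t$. A maximal value $M$ of $N$ is reached right after some $\epa_t$, so $\sigma_t=M-1$. Symmetrically, it is left right before some $\epb_s$, so $\sigdual_s=M-1$. Hence both maxima equal $M-1$.

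\textbf{Part (b), delays.} This is the main obstacle. The count $\ddual_t$ is the number of $\epb$ endpoints in $I_t$, while $d_t$ is the number of $\epa$ endpoints in $I_t$. We use the running count: $N(\epb_t^-)-N(\epa_t^+)=d_t-\ddual_t$, where both values lie between $1$ and $M=\sigmax+1$. This yields $|d_t-\ddual_t|\le \sigmax$.

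It then suffices to show $\sigmax\le \dmax$. For this, take the interval $I_s$ among those alive at the peak that has the earliest left endpoint; the other $\sigmax$ alive intervals start inside $I_s$, so $d_s\ge\sigmax$. The bound $\ddualmax\le 2\dmax$ follows directly. The reverse bound needs $\sigmax\le\ddualmax$, which holds by the mirror argument: take the alive interval with the latest right endpoint, so the other alive intervals end inside it. Then $d_t\le \ddual_t+\sigmax\le 2\ddualmax$.
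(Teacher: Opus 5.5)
Your proof is correct. Parts (a) and (c) follow the paper's counting essentially verbatim. The paper writes $t+d_t=|\{s:\epa_s<\epb_t\}|=\beta(t)+\sigdual_t$ and $t-\sigma_t=|\{s:\epb_s<\epa_t\}\cup\{t\}|=\beta(t)-\ddual_t$, then sums using that $\beta$ is a permutation. The only slip is in your second identity: the split should be $\epb_s\le\epb_t$ versus $\epb_s>\epb_t$, as your next sentence already assumes.

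Part (b) is where you genuinely diverge. The paper proves $\sigmax=\sigdualmax$ through index-shifted pointwise inequalities: $\sigdual_t\le\sigma_{t+d_t}$ (from Lemma~\ref{lem:CTM-props}) and $\wtsig_n\le\wtsigdual_{n-\wtddual_n}$ (from Lemma~\ref{lem:CTM-dual-props}). It gets the delay bounds from set inclusions giving $\ddual_t\le d_t+\sigma_t$ and $d_t\le\ddual_t+\sigdual_t$, combined with $\sigmax\le\dmax$ and $\sigdualmax\le\ddualmax$, each read off from one of those two lemmas.

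You replace all of this with one sweep-line object, the occupancy count $N(\tau)$:
\begin{itemize}
\item backlogs and dual-backlogs are its values next to left and right endpoints, so both maxima equal $\max_\tau N(\tau)-1$;
\item the net change of $N$ across $I_t$ gives $d_t-\ddual_t=\sigdual_t-\sigma_t$, hence $|d_t-\ddual_t|\le\sigmax$;
\item the peak argument (earliest-starting or latest-ending alive interval) gives $\sigmax\le\min\{\dmax,\ddualmax\}$.
\end{itemize}

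Your route is self-contained and treats prediction and observation events symmetrically. It does not need the observation-ordered Lemma~\ref{lem:CTM-dual-props}, and it gives the clean interpretation of $\sigmax$ as maximal concurrency minus one. The paper's route instead gives local comparisons between specific entries: each dual-backlog is dominated by a particular backlog and vice versa, phrased in the round and observation indexing. Your net-change identity also recovers the paper's one-sided pointwise bounds immediately, since $\sigma_t,\sigdual_t\ge 0$.
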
 

In particular, all four sequences sum to the total delay $\dtot$. Figure~\ref{fig:illustration} illustrates this in a concrete example: delays $d_t$ (dual-delays $\ddual_t$) correspond to black (white) triangles along intervals $I_t$, while backlogs $\sigma_t$ (dual-backlogs $\sigdual_t$) correspond to black (white) triangles at timestamps $\epa_t$ ($\epb_t$), with the total number of triangles of each color $\dtot = 6$.

\begin{figure}[t]
\centering

\noindent\hspace*{-1.3cm}\resizebox{1.15\columnwidth}{!}{\newcommand{\xunit}{0.85cm}
\newcommand{\yunit}{0.35cm}
\newcommand{\tshift}{0.5}

\usetikzlibrary{decorations.pathreplacing}
\usetikzlibrary{shapes.geometric}

\newcommand{\drawinterval}[3]{%

  \draw[thick] (#1+\tshift,#3) -- (#2+\tshift,#3);

  \node[fulldot]  at (#1+\tshift,#3) {};
  \node[opendot]  at (#2+\tshift,#3) {};

  \draw[proj] (#1+\tshift,#3) -- (#1+\tshift,0);
  \draw[proj] (#2+\tshift,#3) -- (#2+\tshift,0);

  \node[fulldot]  at (#1+\tshift,0) {};
  \node[opendot]  at (#2+\tshift,0) {};
}

\newcommand{\rowbrace}[4]{%
  \draw[decorate,decoration={brace,mirror,amplitude=4pt}]
    (#1+\tshift -0.2 ,#3-0.2) -- (#2+\tshift -0.25,#3-0.2)
    node[midway, yshift=-8pt] {\tiny #4};
}

\begin{tikzpicture}[x=\xunit,y=\yunit]

  \tikzset{
    fulldot/.style = {circle, fill=black, draw=black, inner sep=1.5pt},
    opendot/.style = {circle, fill=white, draw=black, inner sep=1.5pt},
    fullsquare/.style = {regular polygon, regular polygon sides=3, fill=black, draw=black, inner sep=0.8pt, shape border rotate=180},
    opensquare/.style = {regular polygon, regular polygon sides=3, fill=white, draw=black, inner sep=0.8pt},
    proj/.style    = {dashed, shorten >=2pt, shorten <=2pt}
  }

  \draw[->] (0.4+\tshift,0) -- (10.5+\tshift,0) node[below] {};

  \node[opendot] at (0.4+\tshift,0) {};
  \node[below=3pt, align=center] at (0.4+\tshift,0) {\tiny $0$};

  \foreach \x/\lab in {
    1/{\tiny $\epa_1$},
    2/{\tiny $\epa_2$},
    3/{\tiny $\epa_3$},
    5/{\tiny $\epa_4$},
    8/{\tiny $\epa_5$},
  } {
    \node[below=2pt, align=center] at (\x+\tshift,0) {\lab};
  }

  \foreach \x/\lab in {
    4/{\tiny $\epb_3$},
    6/{\tiny $\epb_2$},
    7/{\tiny $\epb_4$},
    9/{\tiny $\epb_1$},
    10/{\tiny $\epb_5$},
  } {
    \node[below=4pt, align=center] at (\x+\tshift,0) {\lab};
  }

  \drawinterval{1}{9}{5}
  \drawinterval{2}{6}{4}
  \drawinterval{3}{4}{3}
  \drawinterval{5}{7}{2}
  \drawinterval{8}{10}{1}

  \node[left] at (1-0.05+\tshift,5) {\tiny $I_1$};
  \node[fullsquare] at (2+\tshift,5) {};
  \node[fullsquare] at (3+\tshift,5) {};
  \node[opensquare] at (4+\tshift,5) {};
  \node[fullsquare] at (5+\tshift,5) {};
  \node[opensquare] at (6+\tshift,5) {};
  \node[opensquare] at (7+\tshift,5) {};
  \node[fullsquare] at (8+\tshift,5) {};

  \node[left] at (2-0.05+\tshift,4) {\tiny $I_2$};
  \node[fullsquare] at (3+\tshift,4) {};
  \node[opensquare] at (4+\tshift,4) {};
  \node[fullsquare] at (5+\tshift,4) {};

  \node[left] at (3-0.05+\tshift,3) {\tiny $I_3$};

  \node[left] at (5-0.05+\tshift,2) {\tiny $I_4$};
  \node[opensquare] at (6+\tshift,2) {};

  \node[left] at (8-0.05+\tshift,1) {\tiny $I_5$};
  \node[opensquare] at (9+\tshift,1) {};

  \rowbrace{1}{2}{-1.2}{round 1};
  \rowbrace{2}{3}{-1.2}{round 2};
  \rowbrace{3}{5}{-1.2}{round 3};
  \rowbrace{5}{8}{-1.2}{round 4};
  \rowbrace{8}{10.5}{-1.2}{round 5};
  
\end{tikzpicture}}

\vspace{2mm}

\resizebox{0.7\columnwidth}{!}{
  \begin{tabular}{|c|c|c|c|c|c|}
  \hline
          & $t=1$ & $t=2$ & $t=3$ & $t=4$ & $t=5$ \\
  \hline
  $d_t$      & $4$ & $2$ & $0$ & $0$ & $0$ \\
  \hline
  $\sigma_t$ & $0$ & $1$ & $2$ & $2$ & $1$ \\
  \hline
  $\ddual_t$   & $3$ & $1$ & $0$ & $1$ & $1$ \\
  \hline
  $\sigdual_t$ & $1$ & $2$ & $2$ & $1$ & $0$ \\
  \hline
  \multicolumn{6}{c}{}\\[-0.25cm]
  \hline
  $\beta(t)$ & $4$ & $2$ & $1$ & $3$ & $5$ \\
  \hline
  \end{tabular}
}

\caption{\footnotesize Example of latency-intervals for $T\!=\!5$. The table provides values $d_t, \sigma_t, \ddual_t, \sigdual_t$ and $\beta:[T]\to[T]$ from Theorem~\ref{thm:CTM}.}
\label{fig:illustration}
\end{figure}

See Appendix~\ref{app:CTM} for proofs of the results in this section, as well as additional properties of the CTM.

\begin{figure}[t]
\centering

\makebox[\columnwidth][l]{
  \hspace*{-0.5cm}\resizebox{1.05\columnwidth}{!}{\newcommand{\xunit}{0.8cm}
\newcommand{\yunit}{0.3cm}
\newcommand{\tshift}{0.5}

\usetikzlibrary{decorations.pathreplacing}
\usetikzlibrary{shapes.geometric}

\newcommand{\drawinterval}[3]{%

  \draw[thick] (#1+\tshift,#3) -- (#2+\tshift,#3);

  \node[fulldot]  at (#1+\tshift,#3) {};
  \node[opendot]  at (#2+\tshift,#3) {};

  \draw[proj] (#1+\tshift,#3) -- (#1+\tshift,0);
  \draw[proj] (#2+\tshift,#3) -- (#2+\tshift,0);

  \node[fulldot]  at (#1+\tshift,0) {};
  \node[opendot]  at (#2+\tshift,0) {};
}

\newcommand{\rowbrace}[4]{%
  \draw[decorate,decoration={brace,mirror,amplitude=4pt}]
    (#1+\tshift -0.2 ,#3-0.2) -- (#2+\tshift -0.25,#3-0.2)
    node[midway, yshift=-8pt] {\tiny #4};
}

\begin{tikzpicture}[x=\xunit,y=\yunit]

  \tikzset{
    fulldot/.style = {circle, fill=black, draw=black, inner sep=1.5pt},
    opendot/.style = {circle, fill=white, draw=black, inner sep=1.5pt},
    fullsquare/.style = {regular polygon, regular polygon sides=3, fill=black, draw=black, inner sep=0.8pt, shape border rotate=180},
    opensquare/.style = {regular polygon, regular polygon sides=3, fill=white, draw=black, inner sep=0.8pt},
    proj/.style    = {dashed, shorten >=2pt, shorten <=2pt}
  }

  \draw[->] (0+\tshift,0) -- (10.5+\tshift,0) node[below] {};

  \node[opendot] at (0+\tshift,0) {};
  \node[below=2pt, align=center] at (0.05+\tshift,0) {\tiny $\wtepb_0$};
  
  \foreach \x/\lab in {
    1/{\tiny $\epa_1$},
    2/{\tiny $\epa_2$},
    3/{\tiny $\epa_3$},
    5/{\tiny $\epa_4$},
    8/{\tiny $\epa_5$},
  } {
    \node[below=2pt, align=center] at (\x+\tshift+0.05,0) {\lab};
  }

  \foreach \x/\lab in {
    4/{\tiny $\wtepb_1$},
    6/{\tiny $\wtepb_2$},
    7/{\tiny $\wtepb_3$},
    9/{\tiny $\wtepb_4$},
    10/{\tiny $\wtepb_5$},
  } {
    \node[below=2pt, align=center] at (\x+\tshift+0.05,0) {\lab};
  }

  \drawinterval{1}{9}{5}
  \drawinterval{2}{6}{4}
  \drawinterval{3}{4}{3}
  \drawinterval{5}{7}{2}
  \drawinterval{8}{10}{1}

  \node[left] at (1-0.05+\tshift,5) {\tiny $I_1$};
  \node[left] at (2-0.05+\tshift,4) {\tiny $I_2$};
  \node[left] at (3-0.05+\tshift,3) {\tiny $I_3$};
  \node[left] at (5-0.05+\tshift,2) {\tiny $I_4$};
  \node[left] at (8-0.05+\tshift,1) {\tiny $I_5$};

  \rowbrace{0.05}{4.05}{-1.5}{$Z_1\!=\!\{1,\!2,\!3\}$};
  \rowbrace{4.05}{6.05}{-1.5}{$Z_2\!=\!\{4\}$};
  \rowbrace{6.05}{7.05}{-1.5}{$Z_3\!=\!\emptyset$};
  \rowbrace{7.05}{9.05}{-1.5}{$Z_4\!=\!\{5\}$};
  \rowbrace{9.05}{10.05}{-1.5}{$Z_5\!=\!\emptyset$};

\end{tikzpicture}}
}

\vspace{2mm}

\resizebox{0.7\columnwidth}{!}{
  \begin{tabular}{|c|c|c|c|c|c|}
  \hline
          & $n=1$ & $n=2$ & $n=3$ & $n=4$ & $n=5$ \\
  \hline
  $\widetilde{d}_n$      & $0$ & $2$ & $0$ & $4$ & $0$ \\
  \hline
  $\widetilde{\sigma}_n$ & $2$ & $1$ & $2$ & $0$ & $1$ \\
  \hline
  $\wtddual_n$           & $0$ & $1$ & $1$ & $3$ & $1$ \\
  \hline
  $\wtsigdual_n$         & $2$ & $2$ & $1$ & $1$ & $0$ \\
  \hline
  \multicolumn{6}{c}{}\\[-0.25cm]
  \hline
  $\rho(n)$ & $3$ & $2$ & $4$ & $1$ & $5$\\
  \hline
  \end{tabular}
}

\caption{\footnotesize Partition of rounds induced by the latency intervals in Fig.~\ref{fig:illustration}: if $t\in Z_n$ (equivalently, $l_t\in[\wtepb_{n-1},\wtepb_n)$), then a steady algorithm outputs $x_t=z_n$. The table provides corresponding values $\widetilde{d}_n, \widetilde{\sigma}_n, \wtddual_n, \wtsigdual_n$ and permutation $\rho:[T]\to[T]$.}
\label{fig:blackbox-illustration}
\end{figure}

\subsection{Observation-Ordering and Steady Algorithms}\label{subsec:steady-algs}

The CTM naturally suggests an observation-centric perspective, in which we order quantities by observation times rather than by prediction times. This viewpoint complements the standard prediction-centric view in the literature and is essential for our analysis of steady algorithms. To facilitate this, we introduce the notational convention for \emph{observation-ordering}, which we adopt throughout the paper.

\begin{definition}[Observation-Ordering]\label{def:observation-ordering}
    Under the CTM, let $\rho\!:\![T]\!\to\![T]$ be the permutation such that 
    $\epb_{\rho(1)}\!<\!...\!<\!\epb_{\rho(T)}$. 
    For any sequence $\{\mathrm{a}_t\}_{t=1}^T$ indexed by $t \in [T]$, define its observation-ordering $\{\widetilde{\mathrm{a}}_n\}_{n=1}^T$ by $\widetilde{\mathrm{a}}_n = \mathrm{a}_{\rho(n)}$ for $n \in [T]$.
\end{definition}

For ease of notation, we use subscripts $n \in [T]$ for observation-ordered quantities and $t \in [T]$ for prediction-ordered quantities throughout the paper. 

Under observation-ordering, the $n$-th observation arrives at time $\wtepb_n$ for the prediction $\wtx_n$ made at time $\wtl_n$ and evaluated on loss function $\wtf_n$. Setting $\wtepb_0\!=\!0$, we obtain a chronological sequence of observation times $0 = \wtepb_0 < \wtepb_1 < \ldots < \wtepb_T$ such that all timestamps lie within $[\wtepb_0, \wtepb_T]$. Since no two timestamps coincide, each prediction time $\wtepa_n$ lies strictly between two consecutive observation times $\{\wtepb_n\}_{n=0}^T$. The following lemma, the observation-centric analogue of Lemma~\ref{lem:CTM-props}, characterizes this interleaving and 
expresses the dual-backlog $\wtsigdual_n$ as the number of outstanding dual-delays.

\begin{lemma}\label{lem:CTM-dual-props}
    For quantities in \eqref{eq:dual-delay-backlog-defn}, it holds that:
    \begin{align*}
        \wtepa_n\!\in\!(\wtepb_{n-\wtddual_n-1},\wtepb_{n-\wtddual_n}), \quad \wtsigdual_n\!=\!|\{m\!:\!m\!-\!\wtddual_m\!\le\!n\!<\!m\}|.
    \end{align*}
\end{lemma}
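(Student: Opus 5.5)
The plan is to mirror the proof of Lemma~\ref{lem:CTM-props}, with the roles of prediction and observation times exchanged. Everything is carried out in observation-ordering. Fix $n\in[T]$ and let $t=\rho(n)$, so that $\wtepa_n=\epa_t$, $\wtepb_n=\epb_t$ and $\wtddual_n=\ddual_t=|\{s:\epb_s\in(\epa_t,\epb_t)\}|$.

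\textbf{Step 1: count observations preceding $\wtepa_n$.} Let $N=|\{m:\wtepb_m<\wtepa_n\}|$. Since $\wtepa_n<\wtepb_n$, every observation time with $\wtepb_m<\wtepb_n$ lies either before $\wtepa_n$ or in the open interval $(\wtepa_n,\wtepb_n)$. Timestamps are distinct, so these two cases are disjoint and exhaustive. The observation times strictly before $\wtepb_n$ are exactly $\wtepb_1,\dots,\wtepb_{n-1}$, so
\[
n-1=N+|\{m:\wtepb_m\in(\wtepa_n,\wtepb_n)\}|=N+\wtddual_n .
\]
Hence $N=n-\wtddual_n-1$. Because the observation times are increasingly ordered and $\wtepb_0=0<\wtepa_n$, the time $\wtepa_n$ lies strictly after the $N$-th observation and strictly before the $(N+1)$-th. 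That is,
\[
\wtepa_n\in(\wtepb_{n-\wtddual_n-1},\wtepb_{n-\wtddual_n}).
\]
This is the first claim.

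\textbf{Step 2: the dual-backlog identity.} By definition, $\wtsigdual_n=\sigdual_{\rho(n)}=|\{s:\epb_{\rho(n)}\in I_s\}|$. Reindex with $s=\rho(m)$, which gives $\wtsigdual_n=|\{m:\wtepa_m<\wtepb_n<\wtepb_m\}|$. The condition $\wtepb_n<\wtepb_m$ is equivalent to $n<m$. By Step 1 applied to $m$, $\wtepa_m$ sits between consecutive observation times $\wtepb_{m-\wtddual_m-1}$ and $\wtepb_{m-\wtddual_m}$. Therefore $\wtepa_m<\wtepb_n$ holds if and only if $n\ge m-\wtddual_m$. Combining the two conditions gives
\[
\wtsigdual_n=|\{m:m-\wtddual_m\le n<m\}|.
\]

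The only delicate point is the index bookkeeping in Step 2: in the equivalence $\wtepa_m<\wtepb_n\iff n\ge m-\wtddual_m$, one must check that strict versus weak inequalities line up at the endpoints of the interval from Step 1. Distinctness of the timestamps makes this clean. Some care is also needed to state that the index $n-\wtddual_n-1\ge0$ is well defined; this is immediate from $N\ge0$.
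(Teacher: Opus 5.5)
Your proposal is correct and follows essentially the same route as the paper: the same count $|\{m:\wtepb_m<\wtepa_n\}| = n-1-\wtddual_n$ locates $\wtepa_n$ between consecutive observation times. The paper then says the dual-backlog identity ``follows immediately'' from $\wtsigdual_n = |\{m:\wtepa_m<\wtepb_n<\wtepb_m\}|$, and your Step~2 simply spells out that step, including the equivalence $\wtepa_m<\wtepb_n \iff n\ge m-\wtddual_m$.
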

\begin{proof}
    Exactly $|\{m\!:\!\wtepb_m\!<\!\wtepa_n\}|\!=\!|\{m\!:\wtepb_m\!<\!\wtepb_n\}|\!-\!|\{m\!:\!\wtepa_n\!<\!\wtepb_m\!<\wtepb_n\}| = n\!-\!1\!-\!\wtddual_n$ observations precede time $\wtepa_n$, so $\wtepa_n \in (\wtepb_{n-\wtddual_n-1}, \wtepb_{n-\wtddual_n})$. The identity for dual-backlog $\wtsigdual_n = |\{m: \wtepa_m < \wtepb_n < \wtepb_m\}|$ follows immediately.
\end{proof}

The observation-centric perspective is particularly useful for analyzing steady algorithms (Definition~\ref{def:steady-alg}). Since a steady algorithm does not change its prediction in rounds without feedback, it admits a \emph{base-prediction sequence} $\{z_n\}_{n=1}^T \subset \cK$ such that the prediction remains constant between consecutive observation times: for each $n\in[T]$ and every prediction time $\epa_t \in (\wtepb_{n-1},\wtepb_n)$, the algorithm outputs $x_t = z_n$.  Equivalently, the rounds $[T]$ partition into sets $Z_n = \{t : l_t \in [\wtepb_{n-1},\wtepb_n)\}$, with $x_t = z_n$ if and only if $t \in Z_n$. Figure~\ref{fig:blackbox-illustration} illustrates this for the example in Figure~\ref{fig:illustration}.

Lemma~\ref{lem:CTM-dual-props} links the observation-ordered predictions to the base-predictions through the dual-delays: $\wtx_n = z_{n-\wtddual_n}$. Substituting this identity, we can write the regret of any steady algorithm entirely in terms of the base sequence ${z_n}$ evaluated on the observation-ordered losses ${\wtf_n}$. For $G$-Lipschitz losses, this yields the following decomposition:
\begin{figure}[H]
\centering
\scalebox{0.9}{
\fbox{$\displaystyle
\begin{aligned}
R_T(x^*)
&= \tsum_{n=1}^T \bigl(\wtf_n(z_{n-\wtddual_n})-\wtf_n(x^*)\bigr)\\
&\le \underbrace{\tsum_{n=1}^T \bigl(\wtf_n(z_n)\!-\!\wtf_n(x^*)\bigr)}_{\textnormal{Non-Delayed Regret}}
\;+\;
G\,\underbrace{\tsum_{n=1}^T \tnorm{z_n\!-\!z_{n\!-\!\wtddual_n}}}_{\textnormal{Prediction Drift}}.
\end{aligned}
$}}
\captionof{figure}{Regret decomposition for steady algorithms.}
\label{eq:regret-decomposition}
\end{figure}
Here, the \emph{non-delayed regret} is the regret of the sequence $\{z_n\}_{n=1}^T$ on the observation-ordered losses $\{\wtf_n\}_{n=1}^T$ and the \emph{prediction drift} captures the cost of the played predictions $\wtx_n = z_{n-\wtddual_n}$ lagging behind base-predictions $z_n$.

This decomposition underlies our reduction from online learning with delays to online learning with drift-penalization, which we define in the following section.

\section{Online Learning with Drift Penalization}\label{sec:drift-penalization}

This section introduces online linear optimization with drift penalization (Figure~\ref{fig:setting-DP}), a game that serves as the foundation for reductions in subsequent sections. Crucially, this formulation involves no delays; the connection to delayed feedback emerges only through the reductions.

In this game, a player selects predictions from $\cK$ (the same domain as in OCO with delays) over $N$ rounds, incurring in each round a linear loss and a penalty for moving too far from past predictions, measured against one of them. 
Specifically, in round $n$, the player chooses $z_n$ and incurs the loss $\pair{c_n,z_n}$ plus the drift penalty $\tnorm{z_n-z_{n-\lambda_n}}$, specified by the lag $\lambda_n$. The loss vectors and lags $\{c_n, \lambda_n\}_{n=1}^N$ are fixed by an adversary before the game begins. To update his strategy, after each round the player observes $(c_n,\lambda_n)$ as well as the number of outstanding lags $\nu_n = |\{m: m-\lambda_m\!\le n <m\}|$, which summarizes how many future penalties will involve predictions up to the current round $n$.

\begin{figure}[H]
\centering
\fbox{\parbox{0.95\columnwidth}{
    \textbf{Online Linear Optimization with Drift Penalization}\vspace{5pt}\\
    $\bullet$ \textit{Latent parameters:} number of rounds $N$.\\
    $\bullet$ \textit{Pre-game:} adversary picks loss vectors $c_n\in\R^{\dk}$ and lags $\lambda_n \in \{0, \ldots, n-1\}$ for $n \in [N]$.
    
    \vspace{5pt}
    \noindent For each round $n=1, 2, \ldots, N$:
    \begin{enumerate}[leftmargin=0.5cm,noitemsep,before=\vspace{-0.3cm},after=\vspace{-0.3cm}]
        \item The player predicts $z_n \in \cK$, incurring loss $\langle c_n, z_n \rangle$ and drift penalty $\|z_{n-\lambda_n} - z_n\|$.
        \item The environment reveals a tuple $(c_n, \lambda_n, \nu_n)$, where $\nu_n = |\{m: m\!-\!\lambda_m\!\le\!n\!<\!m\}|$ denotes the number of outstanding lags.
    \end{enumerate}
}}
\captionsetup{font=footnotesize}
\caption{\small Online learning with drift penalization for linear losses.}\label{fig:setting-DP}
\end{figure}
To measure performance, we define the drift-penalized regret with weight $W \ge 0$ against a comparator $u \in \cK$ as
\begin{equation}\label{eq:drift-regret}
    \eR^{\textnormal{drift}}_{N}(u;W) = \eR_N(u) + W \eD_N,
\end{equation}
where $\eR_N(u) = \tsum_{n=1}^N \pair{c_n, z_n - u}$ is the standard regret and $\eD_N = \tsum_{n=1}^N \tnorm{\sz_{n-\lambda_n}\!-\!\sz_n}$ is the total drift penalty. 

Any online linear optimization algorithm applies to this setting, with its standard regret bounding~$\eR_N(u)$; moreover, stable algorithms naturally incur smaller drift penalty $\eD_N$.

\paragraph{P-FTRL and OMD Updates.}

We analyze how Proximal Follow-The-Regularized-Leader (P-FTRL) and Online Mirror Descent (OMD) perform in online learning with drift penalization. Given a non-increasing learning rate sequence $\{\wteta_n\}_{n=1}^N \subset (0, \infty)$ and initial point $z_1 \in \cK$, these algorithms select predictions sequentially according to the following respective update rules:
\begin{align}
    \textnormal{P-FTRL}\colon \sz_{n+1}\!&=\!\argmin_{\sz\in\cK}\!\sum_{m\!=\!1}^{n}\!\left(\pair{c_m,\sz}+\tfrac{\alpha_n \norm{\sz_m\!-\!\sz}^2}{2}\right)\!,\label{eq:P-FTRL}\\
    \textnormal{OMD}\colon \sz_{n+1}\!&=\!\argmin_{\sz\in\cK} \left(\pair{c_n,\sz} + \tfrac{\norm{\sz_n\!-\!\sz}^2}{2 \wteta_{n}} \right),\label{eq:OMD}
\end{align}  
where $\{\alpha_n\}_{n=1}^N$ is given by $\alpha_1 = \tfrac{1}{\wteta_1}, \alpha_{n+1} = \tfrac{1}{\wteta_{n+1}}-\tfrac{1}{\wteta_n}$. The following theorem provides regret and drift guarantees.

\begin{theorem}\label{thm:linear-OCO}
    For every $u\in \cK$, predictions $\{\sz_n\}_{n=1}^T$ generated by P-FTRL~\eqref{eq:P-FTRL} or OMD~\eqref{eq:OMD} satisfy
    \begin{align*}
        \eR_N(u) &\le \tsum_{n=1}^{N} \tfrac{\frac{1}{\wteta_{n}} - \frac{1}{\wteta_{n-1}}}{2} \norm{\sz_n\!-\!u}^2 + \frac12 \sum_{n=1}^{N} \wteta_n \tnorm{c_n}_{\star}^2,\\
        \eD_N &\le \tsum_{n=1}^N \wteta_n \nu_n \tnorm{c_n}_{\star}.
    \end{align*}
\end{theorem}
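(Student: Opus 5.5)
The proof has two parts: the regret bound and the drift bound. The regret bound is standard and comes from the usual one-step analyses. The drift bound rests on a one-step stability estimate, $\tnorm{z_{n+1}-z_n}\le \eta_n\tnorm{c_n}_\star$, followed by a counting (double-sum exchange) argument that uses the outstanding-lag counts $\nu_n$. Throughout I use the convention $1/\eta_0=0$. I take $\tnorm{\cdot}^2/2$ as the $1$-strongly convex regularizer with respect to $\tnorm{\cdot}$; in the general-norm case this should be read as a Bregman divergence, but the argument is the same.

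\textbf{Regret bound.} For OMD I use the standard three-point inequality for the prox step (first-order optimality of \eqref{eq:OMD}):
\begin{align*}
\pair{c_n, z_{n+1}-u}\le \tfrac{1}{2\eta_n}\bigl(\tnorm{z_n-u}^2-\tnorm{z_{n+1}-u}^2-\tnorm{z_n-z_{n+1}}^2\bigr).
\end{align*}
I add $\pair{c_n,z_n-z_{n+1}}\le \tnorm{c_n}_\star\tnorm{z_n-z_{n+1}}$ to both sides. Young's inequality then cancels the negative squared term and leaves $\tfrac{\eta_n}{2}\tnorm{c_n}_\star^2$. Summing over $n$ by parts with $\eta_n$ non-increasing gives the stated sum $\sum_n \tfrac12(\tfrac1{\eta_n}-\tfrac1{\eta_{n-1}})\tnorm{z_n-u}^2$.

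For P-FTRL I use the strong FTRL lemma with proximal regularizers $r_m(z)=\tfrac{\alpha_m}{2}\tnorm{z-z_m}^2$. These vanish at $z_m$, and the cumulative regularizer after $n$ rounds is $1/\eta_n$-strongly convex, since $\sum_{m\le n}\alpha_m=1/\eta_n$. The standard proximal-FTRL bound is
\begin{align*}
\eR_N(u)\le \sum_n r_n(u)+\sum_n \tfrac{\eta_n}{2}\tnorm{c_n}_\star^2,
\end{align*}
and $r_n(u)=\tfrac{\alpha_n}{2}\tnorm{z_n-u}^2$ is exactly the first term, because $\alpha_n=\tfrac1{\eta_n}-\tfrac1{\eta_{n-1}}$. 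I will need to check the indexing carefully here. The objective in \eqref{eq:P-FTRL} as written seems to carry a typo: it should read $\alpha_m$ inside the sum, or equivalently cumulative regularizers $r_{1:n}$.

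\textbf{Stability.} For OMD, comparing the optimality condition at $z_{n+1}$ with the feasible point $z_n$ gives
\begin{align*}
\tfrac1{\eta_n}\tnorm{z_{n+1}-z_n}^2\le\pair{c_n,z_n-z_{n+1}},
\end{align*}
so $\tnorm{z_{n+1}-z_n}\le\eta_n\tnorm{c_n}_\star$.

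For P-FTRL, let $F_n=\sum_{m\le n}(\pair{c_m,\cdot}+r_m)$, so $z_{n+1}$ minimizes $F_n$ and $z_n$ minimizes $F_{n-1}$. I intend to exploit that $r_n$ is minimized at $z_n$. This makes $z_n$ also the minimizer of $F_{n-1}+r_n$, and that function is $1/\eta_n$-strongly convex. Since $F_n$ is that function plus the linear term $\pair{c_n,\cdot}$, the standard lemma on perturbing a strongly convex objective by a linear term gives $\tnorm{z_{n+1}-z_n}\le\eta_n\tnorm{c_n}_\star$. This is the step I expect to be the main obstacle: one must confirm that adding $r_n$ keeps $z_n$ as the constrained minimizer, which holds because $r_n$ is minimized at $z_n$, $r_n\ge0$, and $F_{n-1}+r_n$ agrees with $F_{n-1}$ at $z_n$.

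\textbf{Drift counting.} By the triangle inequality,
\begin{align*}
\tnorm{z_{n-\lambda_n}-z_n}\le\sum_{j=n-\lambda_n}^{n-1}\tnorm{z_{j+1}-z_j}\le\sum_{j=n-\lambda_n}^{n-1}\eta_j\tnorm{c_j}_\star.
\end{align*}
Summing over $n$ and exchanging the order of summation, each $j$ is counted once for every $m$ with $m-\lambda_m\le j<m$. That count is exactly $\nu_j$, which gives $\eD_N\le\sum_j\eta_j\nu_j\tnorm{c_j}_\star$.
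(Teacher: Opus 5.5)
Your proposal is correct and follows essentially the same route as the paper. The regret bounds come from the standard proximal-FTRL lemma and the OMD one-step (three-point) analysis. The one-step bound $\tnorm{z_{n+1}-z_n}\le\eta_n\tnorm{c_n}_\star$ for P-FTRL uses your device of absorbing $r_n$, which is minimized at $z_n$, into a $1/\eta_n$-strongly convex objective that $z_n$ still minimizes (Lemma~\ref{lem:P-FTRL-stability-core}); the drift bound then follows from the same triangle-inequality and summation-exchange count giving $\nu_j$. Your caveat that the squared-norm manipulations implicitly need an inner-product (Euclidean) norm applies equally to the paper's own argument.
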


The regret bound is standard (e.g., \citet{orabona_book}). The drift bound arises by decomposing drift into single-step terms satisfying $\tnorm{z_{n+1}-z_n} \le \wteta_n \tnorm{c_n}_{\star}$; the resulting dependence on $\nu_n$ is the key to our delay-adaptive analysis. We also establish a novel drift guarantee for P-FTRL, which is essential for our delayed BCO results.

\begin{lemma}\label{lem:P-FTRL-stability}
    P-FTRL~\eqref{eq:P-FTRL} generates $\{\sz_n\}_{n=1}^T$ such that
    \begin{align*}
        \eD_N \le \tsum_{n=1}^N \wteta_{n} \tnorm{\tsum_{m=n-\lambda_n}^{n-1} c_m}_{\star} + D\, H_{\eta},
    \end{align*}
    where $H_{\wteta} = \tsum_{n=1}^N (1 - \frac{\wteta_{n}}{\wteta_{n - \lambda_n}})$ and $D$ is an upper bound on the diameter of $\cK$ (Assumption~\ref{assump:finite-diameter}). 
\end{lemma}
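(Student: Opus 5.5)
The plan is to bound each drift term $\tnorm{\sz_n-\sz_{n-\lambda_n}}$ separately by comparing the optimality conditions of the two P-FTRL objectives that produce $\sz_n$ and $\sz_p$, where $p=n-\lambda_n$. Summing over $n$ then gives the claim. If $\lambda_n=0$ the term is zero, so fix $n$ with $p<n$.

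\textbf{Setup.} Write $\Phi_n(\sz)$ for the objective whose minimizer over $\cK$ is $\sz_n$. It is a linear term $\pair{C_n,\sz}$, with $C_n$ the sum of the loss vectors revealed before round $n$, plus the proximal terms $\sum_m \tfrac{\alpha_m}{2}\tnorm{\sz-\sz_m}^2$. The weights telescope, since $\alpha_1=1/\wteta_1$ and $\alpha_{m+1}=1/\wteta_{m+1}-1/\wteta_m$. I will align indices so that the total quadratic weight of $\Phi_n$ is $1/\wteta_n$; this is the convention under which the stated bound carries $\wteta_n$ and $\wteta_p$. Then
\begin{align*}
\Phi_n(\sz)=\Phi_p(\sz)+\psi(\sz),\qquad \psi(\sz)=\Big\langle \tsum_{m=p}^{n-1} c_m,\sz\Big\rangle+\tsum_{m}\tfrac{\alpha_m}{2}\tnorm{\sz-\sz_m}^2,
\end{align*}
where the added proximal weights sum to $A:=1/\wteta_n-1/\wteta_p\ge 0$. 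Set $S=\sum_{m=p}^{n-1}c_m$.

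\textbf{Key step.} I would use the first-order optimality (variational) inequalities rather than function-value strong convexity. Function values only yield the averaged modulus $\tfrac12(1/\wteta_n+1/\wteta_p)$, which loses a constant. The two variational inequalities are
\begin{align*}
\pair{\nabla\Phi_n(\sz_n),\sz_p-\sz_n}\ge 0,\qquad \pair{\nabla\Phi_p(\sz_p),\sz_n-\sz_p}\ge 0.
\end{align*}
Adding them and writing $\nabla\Phi_n(\sz_n)=\nabla\Phi_p(\sz_n)+\nabla\psi(\sz_n)$ gives
\begin{align*}
\pair{\nabla\Phi_p(\sz_n)-\nabla\Phi_p(\sz_p),\sz_n-\sz_p}\le\pair{\nabla\psi(\sz_n),\sz_p-\sz_n}.
\end{align*}
In the Euclidean case, $\nabla\Phi_p(\sz_n)-\nabla\Phi_p(\sz_p)=\tfrac{1}{\wteta_p}(\sz_n-\sz_p)$. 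For general norms I would invoke strong monotonicity with modulus $1/\wteta_p$, assuming $\tfrac12\tnorm{\cdot}^2$ is $1$-strongly convex. Hence the left side is at least $\tfrac1{\wteta_p}\tnorm{\sz_n-\sz_p}^2$.

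For the right side, $\nabla\psi(\sz_n)=S+\sum_m\alpha_m(\sz_n-\sz_m)$. Splitting $\sz_n-\sz_m=(\sz_n-\sz_p)+(\sz_p-\sz_m)$, the first piece contributes $-A\tnorm{\sz_n-\sz_p}^2$, which moves to the left side. The left side then has coefficient $1/\wteta_p+A=1/\wteta_n$. The remaining pieces are bounded by $\tnorm{S}_\star\tnorm{\sz_n-\sz_p}$ and, using $\tnorm{\sz_p-\sz_m}\le D$, by $AD\tnorm{\sz_n-\sz_p}$. This gives
\begin{align*}
\tfrac{1}{\wteta_n}\tnorm{\sz_n-\sz_p}^2\le\big(\tnorm{S}_\star+AD\big)\tnorm{\sz_n-\sz_p}.
\end{align*}
Dividing through yields $\tnorm{\sz_n-\sz_p}\le\wteta_n\tnorm{S}_\star+D(1-\wteta_n/\wteta_p)$. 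Summing over $n$ gives $\eD_N\le\sum_n\wteta_n\tnorm{\sum_{m=n-\lambda_n}^{n-1}c_m}_\star+DH_\eta$.

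\textbf{Main obstacle.} Getting the sharp coefficient $1/\wteta_n$, with no stray constant factor, is the delicate part. It hinges on absorbing the $-A\tnorm{\sz_n-\sz_p}^2$ term from the new proximal centers. For non-Euclidean norms this absorption is not an exact identity and would require a little more care. A secondary point is pinning down the index alignment of $\alpha$ in \eqref{eq:P-FTRL}, so that $\Phi_n$ indeed has modulus $1/\wteta_n$; this fixes which learning rates appear in $H_\eta$.
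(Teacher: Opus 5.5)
Your argument is correct, and its skeleton matches the paper's proof of Lemma~\ref{lem:P-FTRL-stability-core}. In both, you compare $z_n$ and $z_p$ (with $p=n-\lambda_n$) directly through their P-FTRL objectives, bound the linear part by $\|S\|_\star$, absorb the quadratic coming from the new proximal centers so the coefficient is exactly $1/\eta_n$, and bound the leftover cross term by $AD$, where $A=1/\eta_n-1/\eta_p=\sum_{m=p+1}^{n}\alpha_m$.

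The real difference is the combining step. You add the two first-order optimality conditions; the paper adds the two function-value strong-convexity inequalities of Lemma~\ref{lem:stability-lemma}.

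Your claim that the function-value route loses a constant is not correct. There the proximal difference expands as $\sum_{m=p+1}^{n}\tfrac{\alpha_m}{2}\bigl(\|z_p-z_m\|^2-\|z_n-z_m\|^2\bigr)=\bigl\langle \sum_{m=p+1}^{n}\alpha_m(z_p-z_m),\,z_p-z_n\bigr\rangle-\tfrac{A}{2}\|z_n-z_p\|^2$. The term $\tfrac{A}{2}$ exactly raises the averaged modulus $\tfrac12(1/\eta_n+1/\eta_p)$ to $1/\eta_n$, so both routes give the same sharp bound.

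The index alignment you left open is settled exactly as in the paper. $z_n$ also minimizes its literal objective plus $\tfrac{\alpha_n}{2}\|z-z_n\|^2$, since that term is nonnegative and vanishes with zero gradient at $z_n$. So the optimality condition is unchanged while the modulus rises to $1/\eta_n$. The same holds for $z_p$, including $p=1$, where $z_1$ trivially minimizes $\tfrac{\alpha_1}{2}\|z-z_1\|^2$. Without this observation you would obtain $\eta_{n-1}$ and $\eta_{p-1}$ in place of $\eta_n$ and $\eta_p$, which is weaker than the stated bound.

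The Euclidean restriction is shared by both proofs, since the paper's expansion is an inner-product identity. In your route it can in fact be lifted. Strong monotonicity of $J=\nabla\tfrac12\|\cdot\|^2$, together with $\|J(x)\|_\star=\|x\|$, gives $\langle J(z_n-z_m),\,z_p-z_n\rangle\le-\|z_n-z_p\|^2+\|z_p-z_m\|\,\|z_n-z_p\|$. That inequality is all the absorption step needs.
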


\section{Blackbox Reductions and Main Results}\label{sec:OCO}

Here, we develop algorithms for delayed OCO and BCO via reduction to drift-penalized online linear optimization. The key construction is a wrapper $\cW_{\textnormal{OCO}}$ (Algorithm~\ref{alg:reduction-OCO}) that takes any algorithm $\cB$ for drift-penalized OLO as a blackbox and produces a new algorithm $\cW_{\textnormal{OCO}}(\cB)$ for delayed OCO. The wrapper feeds observation-ordered gradients $\wtg_n = \nabla \wtf_n(\wtx_n)$ as loss vectors to $\cB$, with dual-delays $\wtddual_n$ as lags and dual-backlogs $\wtsigdual_n$ as numbers of outstanding lags. The base algorithm $\cB$ produces base-predictions that the wrapper outputs as its own predictions. An analogous wrapper $\cW_{\textnormal{BCO}}$ (Algorithm~\ref{alg:reduction-BCO}) handles bandit feedback by using single-point gradient estimates from the classical work by \citet{flaxman2004}.

\begin{remark}[Feasibility]
    This reduction is feasible because (i) the dual-backlog $\wtsigdual_n$ equals the number of outstanding dual-delays by Lemma~\ref{lem:CTM-dual-props}, and (ii) the wrapper runs updates on $\cB$ only at observation times $\wtepb_n$, when all required quantities are available. Specifically, at time $\wtepb_n$, the gradient $\wtg_n = \nabla \wtf_n(\wtx_n)$ (or the value $\wtf_n(\wtx_n)$ in the case of bandit feedback) is revealed, while the dual-delay $\wtddual_n = |\{m : \wtepa_n < \wtepb_m < \wtepb_n\}|$ and the dual-backlog $\wtsigdual_n = |\{m : \wtepa_m < \wtepb_n < \wtepb_m\}|$ can be computed from the observed history up to time $\wtepb_n$.
\end{remark}

\subsection{Delayed OCO via Drift-Penalized OLO}\label{sec:reduction-OCO}

We now present the wrapper $\cW_{\textnormal{OCO}}$ for delayed online convex optimization with first-order feedback. The wrapper maintains a current base-prediction $\zcur$ initialized by $\cB$, outputs $\zcur$ whenever a prediction is requested, and updates $\zcur$ by forwarding each received gradient to $\cB$ along with the dual-delay and dual-backlog.

\begin{algorithm}[H]
\caption{$\cW_{\textnormal{OCO}}(\cB)$}\label{alg:reduction-OCO}
\SetKwInOut{Input}{Input}
\SetKwInOut{Access}{Access}
\Access{ Algorithm $\cB$ for OLO with drift penalization.} 
\vspace{0.1cm}

Initialize $\zcur$ with the initial output of $\cB$.

\textbf{For} round $t = 1, 2, \ldots, T$:
\begin{enumerate}[leftmargin=0.5cm,noitemsep,before=\vspace{-0.3cm},after=\vspace{-0.3cm}]
    \item Predict $x_t = \zcur$.
    \item \textbf{For} each $s$ such that $s+d_s = t$, in observation order:
    \begin{itemize}[leftmargin=0.5cm,before=\vspace{-0.1cm},noitemsep]
        \item Receive $(s, \nabla f_s(x_s))$, send $(\nabla f_s(x_s), \ddual_s, \sigdual_s)$ to $\cB$, and set $\zcur$ to the new output of $\cB$.
    \end{itemize}
\end{enumerate}
\end{algorithm}

By construction, this algorithm is steady: it updates only upon receiving feedback and outputs the current base-prediction otherwise. Effectively, the algorithm updates $\cB$ sequentially on observation-ordered tuples $(\wtg_n, \wtddual_n, \wtsigdual_n)$, where $\wtg_n = \nabla \wtf_n(\wtx_n)$ (i.e., $g_t = \nabla f_t(x_t)$), producing base-predictions $z_n$ at observation times $\wtepb_{n-1}$ that yield predictions $\wtx_m = z_{m-\wtddual_m}$ at prediction times $\wtepa_m \in (\wtepb_{m-\wtddual_m-1}, \wtepb_{m-\wtddual_m})$.

The following theorem shows that the regret of $\cW_{\textnormal{OCO}}(\cB)$ is controlled through the drift-penalized regret of $\cB$.

\begin{theorem}[Regret of Algorithm~\ref{alg:reduction-OCO}]\label{thm:OCO-decomp}
    For any base algorithm $\cB$ and comparator $u \in \cK$, $\cW_{\textnormal{OCO}}(\cB)$ guarantees
    \begin{align*}
        R_T(u) \le \eR^{\textnormal{drift}}_{T}(u; G),
    \end{align*}
    where $\eR^{\textnormal{drift}}_{T}(u; W)$ is the drift-penalized regret \eqref{eq:drift-regret} of $\cB$ for loss vectors $c_n = \wtg_n$ and lags $\lambda_n = \wtddual_n$. 
    
    Under $\lambda$-strong convexity (\ref{assump:strong-convexity}), it further holds that
    \begin{align*}
        R_T(u) \le \eR^{\textnormal{drift}}_{T}(u; 3G) - \tfrac{\lambda}{2} \tsum_{n=1}^{T} \tnorm{\sz_{n}\!-\!u}^2.
    \end{align*}
\end{theorem}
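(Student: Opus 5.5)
The plan is to start from the decomposition in Figure~\ref{eq:regret-decomposition}, but to linearize at the point actually played rather than use Lipschitzness of $\wtf_n$ directly. Since $\cW_{\textnormal{OCO}}(\cB)$ is steady, Lemma~\ref{lem:CTM-dual-props} gives $\wtx_n = \sz_{n-\wtddual_n}$. The loss vector fed to $\cB$ is $\wtg_n = \nabla\wtf_n(\wtx_n)$, and the lag is $\lambda_n=\wtddual_n$. Reindexing the sum by observation order, convexity gives
\[
R_T(u)=\tsum_{n=1}^T\bigl(\wtf_n(\wtx_n)-\wtf_n(u)\bigr)\le \tsum_{n=1}^T \pair{\wtg_n,\wtx_n-u}.
\]
I will split each term as $\pair{\wtg_n,\sz_n-u}+\pair{\wtg_n,\sz_{n-\wtddual_n}-\sz_n}$. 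The first pieces sum to $\eR_T(u)$. Each second piece is at most $G\tnorm{\sz_{n-\lambda_n}-\sz_n}$ by H\"older and Assumption~\ref{assump:gradient-norm-bound}, so these sum to $G\eD_T$. Together this gives $R_T(u)\le \eR^{\textnormal{drift}}_T(u;G)$.

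For the strongly convex case, I use Assumption~\ref{assump:strong-convexity} at the played point instead. This gives $\wtf_n(\wtx_n)-\wtf_n(u)\le\pair{\wtg_n,\wtx_n-u}-\tfrac{\lambda}{2}\tnorm{\wtx_n-u}^2$. The linear term is handled exactly as above and contributes $\eR_T(u)+G\eD_T$. What remains is to convert $-\tfrac\lambda2\tnorm{\wtx_n-u}^2$ into $-\tfrac\lambda2\tnorm{\sz_n-u}^2$ plus a drift term.

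By the triangle inequality, $\tnorm{\sz_n-u}^2-\tnorm{\wtx_n-u}^2 \le \tnorm{\sz_n-\wtx_n}\bigl(\tnorm{\sz_n-u}+\tnorm{\wtx_n-u}\bigr)\le 2D\tnorm{\sz_n-\sz_{n-\wtddual_n}}$, using Assumption~\ref{assump:finite-diameter}. Hence
\[
-\tfrac\lambda2\tnorm{\wtx_n-u}^2\le-\tfrac\lambda2\tnorm{\sz_n-u}^2+\lambda D\,\tnorm{\sz_n-\sz_{n-\wtddual_n}}.
\]

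The main obstacle is bounding $\lambda D$ by a multiple of $G$ so that the weight becomes $3G$. To do this, I apply the strong convexity inequality for a single $f_t$ twice, with $x$ and $y$ swapped, and add the two. This gives $\lambda\tnorm{x-y}^2\le\pair{\nabla f_t(x)-\nabla f_t(y),x-y}\le 2G\tnorm{x-y}$, so $\lambda\tnorm{x-y}\le 2G$ for all $x,y\in\cK$. Taking the supremum over $x,y$ yields $\lambda D\le 2G$. Strictly, this uses $D$ equal to the actual diameter; if $D$ is only an upper bound, I will apply the triangle step with the true diameter, which is at most $D$ and satisfies the same inequality.

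Summing over $n$, the total drift weight is $G+2G=3G$. This gives $R_T(u)\le\eR^{\textnormal{drift}}_T(u;3G)-\tfrac\lambda2\tsum_{n}\tnorm{\sz_n-u}^2$, as claimed.
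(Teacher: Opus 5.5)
Your proposal is correct and follows essentially the same route as the paper: apply (strong) convexity at the played point, reorder by observations and substitute $\wtx_n=\sz_{n-\wtddual_n}$, split off the drift via H\"older, and in the strongly convex case trade $\tnorm{\wtx_n-u}^2$ for $\tnorm{\sz_n-u}^2$ using the diameter bound $\lambda D\le 2G$ (the paper's Fact~\ref{fact:sc-diameter}, proved exactly as you do). The only difference is cosmetic. You bound $\tnorm{\sz_n-u}^2-\tnorm{\wtx_n-u}^2$ with the reverse triangle inequality, whereas the paper uses the inner-product identity $\tnorm{a}^2-\tnorm{b}^2=\pair{a+b,\,a-b}$, so your version holds for an arbitrary norm rather than only the Euclidean one.
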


Instantiating $\cB$ with P-FTRL~\eqref{eq:P-FTRL} or OMD~\eqref{eq:OMD} with appropriate learning rates yields the following guarantees.

\begin{corollary}\label{cor:OCO-regular}
    $\cW_{\textnormal{OCO}}(\cB)$, where $\cB$ runs P-FTRL~\eqref{eq:P-FTRL} or OMD~\eqref{eq:OMD} with $\wteta_n = \frac{D/G}{\sqrt{n + \sum_{m=1}^{n} \wtsigdual_m}}$, guarantees
    \begin{align*}
        R_T(x^*) = O\rbr{ G D\, \sbr{\sqrt{\dtot} + \sqrt{T}}}.
    \end{align*}
\end{corollary}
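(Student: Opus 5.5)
We combine Theorem~\ref{thm:OCO-decomp} with Theorem~\ref{thm:linear-OCO}. By Theorem~\ref{thm:OCO-decomp}, $R_T(x^*) \le \eR_T(x^*) + G\,\eD_T$, where $\cB$ is run on $c_n = \wtg_n$ with lags $\lambda_n = \wtddual_n$. By Lemma~\ref{lem:CTM-dual-props}, the outstanding-lag counts satisfy $\nu_n = \wtsigdual_n$. Assumption~\ref{assump:gradient-norm-bound} gives $\tnorm{c_n}_\star \le G$ for every $n$.

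\textbf{Step 1: the regret term.} Write $S_n = n + \sum_{m=1}^n \wtsigdual_m$, so $\wteta_n = (D/G)/\sqrt{S_n}$. Since $S_n$ is nondecreasing, the learning rates are non-increasing, as Theorem~\ref{thm:linear-OCO} requires. Using $\tnorm{z_n - u} \le D$, the first sum telescopes:
\begin{align*}
\tfrac{D^2}{2\wteta_N} = \tfrac{GD}{2}\sqrt{S_N}.
\end{align*}
For the second sum, $\tfrac12\sum_n \wteta_n G^2 = \tfrac{GD}{2}\sum_n 1/\sqrt{S_n}$. Since $S_n \ge n$, this is at most $GD\sqrt{N}$.

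\textbf{Step 2: the drift term.} Theorem~\ref{thm:linear-OCO} gives
\begin{align*}
G\,\eD_T \le G\sum_n \wteta_n \wtsigdual_n G = GD \sum_n \frac{\wtsigdual_n}{\sqrt{S_n}}.
\end{align*}
Write $\wtsigdual_n = S_n - S_{n-1} - 1 \le S_n - S_{n-1}$. We then apply the standard inequality $\sum_n (S_n - S_{n-1})/\sqrt{S_n} \le 2\sqrt{S_N}$, which follows from
\begin{align*}
\frac{a_n}{\sqrt{S_n}} \le 2\bigl(\sqrt{S_n}-\sqrt{S_{n-1}}\bigr), \qquad a_n = S_n - S_{n-1}.
\end{align*}
This gives $G\,\eD_T \le 2GD\sqrt{S_T}$.

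\textbf{Step 3: conclusion.} By Theorem~\ref{thm:CTM}(a), $\sum_n \wtsigdual_n = \sum_t \sigdual_t = \dtot$, so $S_T = T + \dtot$. Combining the three steps,
\begin{align*}
R_T(x^*) \le O\bigl(GD\sqrt{T+\dtot}\bigr) \le O\bigl(GD[\sqrt{T} + \sqrt{\dtot}]\bigr).
\end{align*}

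The argument is mostly bookkeeping. The one step that needs care is Step 2: we must identify $\nu_n$ with the dual-backlog $\wtsigdual_n$ via Lemma~\ref{lem:CTM-dual-props}, and the learning-rate normalization is chosen precisely so that the drift sum telescopes against $\sqrt{S_n}$. Everything else follows directly from the cited results.
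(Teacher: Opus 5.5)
Your proof is correct and follows essentially the paper's route: Theorem~\ref{thm:OCO-decomp} combined with Theorem~\ref{thm:linear-OCO} using $\nu_n = \wtsigdual_n$, then square-root telescoping and $\sum_n \wtsigdual_n = \dtot$. The only cosmetic difference is in the telescoping step. The paper merges the $\tfrac12\sum_n \wteta_n G^2$ term and the drift term into one sum $\sum_n \wteta_n(\wtsigdual_n+1)$ and applies Fact~\ref{fact:sqrt-telescope} once with $X_n = \wtsigdual_n + 1$, whereas you bound the $\sum_n 1/\sqrt{S_n}$ part separately via $S_n \ge n$.
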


\begin{corollary}\label{cor:OCO-sc}
    Under $\lambda$-strong convexity (\ref{assump:strong-convexity}), $\cW_{\textnormal{OCO}}(\cB)$, where $\cB$ runs P-FTRL~\eqref{eq:P-FTRL} with $\wteta_n = \tfrac{1}{n\lambda}$, guarantees
    \begin{align*}
         R_T(x^*) =  O \rbr{\tfrac{G^2}{\lambda}\sbr{\min\cbr{(\sigmax\ln T, \, \sqrt{\dtot}} + \ln T}}.
    \end{align*}
    Running OMD~\eqref{eq:OMD} with $\wteta_n = \frac{1}{n\lambda}$ instead guarantees
    \begin{align*}
         R_T(x^*) =  O\rbr{\tfrac{G^2}{\lambda} (\sigmax\!+\!1)\ln T}.
    \end{align*}
\end{corollary}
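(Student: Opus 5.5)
The plan is to combine the strong-convexity part of Theorem~\ref{thm:OCO-decomp} with the guarantees for $\cB$ from Theorem~\ref{thm:linear-OCO} and Lemma~\ref{lem:P-FTRL-stability}. We take $c_n=\wtg_n$, so $\tnorm{c_n}_\star\le G$, and lags $\lambda_n=\wtddual_n$. Theorem~\ref{thm:OCO-decomp} gives
\[
R_T(x^*)\le \eR_T(x^*)+3G\,\eD_T-\tfrac{\lambda}{2}\tsum_{n=1}^T\tnorm{z_n-x^*}^2 .
\]
With $\wteta_n=\frac{1}{n\lambda}$ we have $\frac{1}{\wteta_n}-\frac{1}{\wteta_{n-1}}=\lambda$ for every $n$, using the convention $1/\wteta_0=0$. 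Hence the first sum in the regret bound of Theorem~\ref{thm:linear-OCO} equals $\frac{\lambda}{2}\sum_n\tnorm{z_n-x^*}^2$, which is cancelled exactly by the negative term. What remains of the regret is
\[
\tfrac12\tsum_n \wteta_n G^2=\tfrac{G^2}{2\lambda}\tsum_{n\le T}\tfrac1n=O\!\left(\tfrac{G^2}{\lambda}\ln T\right).
\]
This gives the learning term for both P-FTRL and OMD, so it remains to bound $3G\,\eD_T$ in two different ways.

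\textbf{The $\sigmax\ln T$ bound (P-FTRL and OMD).} We use the drift bound of Theorem~\ref{thm:linear-OCO} with $\nu_n=\wtsigdual_n$, which is the number of outstanding lags by Lemma~\ref{lem:CTM-dual-props}. This gives
\[
\eD_T\le \tsum_n \tfrac{G\wtsigdual_n}{n\lambda}\le \tfrac{G\,\sigdualmax}{\lambda}\,(1+\ln T).
\]
By Theorem~\ref{thm:CTM}(b), $\sigdualmax=\sigmax$. Therefore $3G\eD_T=O\bigl(\tfrac{G^2}{\lambda}\sigmax\ln T\bigr)$, which yields the OMD statement and the first branch of the min for P-FTRL.

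\textbf{The $\sqrt{\dtot}$ bound (P-FTRL only).} We apply Lemma~\ref{lem:P-FTRL-stability}. For the first term we use $\tnorm{\sum_{m=n-\lambda_n}^{n-1}c_m}_\star\le G\wtddual_n$. For the second, $H_\eta=\sum_n\bigl(1-\frac{n-\wtddual_n}{n}\bigr)=\sum_n\wtddual_n/n$. Together,
\[
\eD_T\le \bigl(\tfrac{G}{\lambda}+D\bigr)\tsum_n \tfrac{\wtddual_n}{n}.
\]
The $D$ factor is absorbed using strong convexity and the gradient bound. For any $x,y\in\cK$, $\lambda\tnorm{x-y}^2\le\pair{\nabla f_t(x)-\nabla f_t(y),x-y}\le 2G\tnorm{x-y}$, so we may take $D\le 2G/\lambda$, and hence $\eD_T=O\bigl(\tfrac{G}{\lambda}\sum_n \wtddual_n/n\bigr)$.

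The main (though elementary) step is to show $\sum_n \wtddual_n/n=O(\sqrt{\dtot})$. The constraints are $0\le\wtddual_n\le n-1$ (lags lie in $\{0,\dots,n-1\}$) and $\sum_n\wtddual_n=\dtot$ by Theorem~\ref{thm:CTM}(a). Split the sum at $K=\lceil\sqrt{\dtot}\,\rceil$. For $n\le K$, each term is at most $1$, so these contribute at most $K$. For $n>K$, each term is at most $\wtddual_n/K$, so these contribute at most $\dtot/K\le\sqrt{\dtot}$. Thus the sum is at most $2\sqrt{\dtot}+1$, and $3G\eD_T=O\bigl(\tfrac{G^2}{\lambda}\sqrt{\dtot}\bigr)$. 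Since both drift bounds hold simultaneously for P-FTRL, we take their minimum and add the $O(\frac{G^2}{\lambda}\ln T)$ learning term to conclude.
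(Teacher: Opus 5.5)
Your argument is correct, and its skeleton is the paper's: the strongly convex case of Theorem~\ref{thm:OCO-decomp}, exact cancellation of the quadratic terms by $\wteta_n = 1/(n\lambda)$, $D\le 2G/\lambda$, and $H_\eta=\tsum_n \wtddual_n/n$. Where you diverge is in how you obtain the two branches of the minimum for P-FTRL.

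The paper sends all of the P-FTRL drift through Lemma~\ref{lem:P-FTRL-stability}, reaching $9G^2\tsum_n(\wtddual_n+1)/(\lambda n)$. It then bounds $\tsum_n\wtddual_n/n$ in two ways (Fact~\ref{fact:telescopic-sums-with-duals}):
\begin{itemize}
\item the $\sigmax\ln(eT)$ branch uses the identity $\wtddual_n=\wtsig_n+n-\rho(n)$ together with the rearrangement inequality $\tsum_n\rho(n)/n\ge T$;
\item the $2\sqrt{\dtot}$ branch uses $n^2\ge 1+\tsum_{m\le n}\wtddual_m$ and square-root telescoping.
\end{itemize}

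You instead get the $\sigmax$ branch from the single-step drift bound of Theorem~\ref{thm:linear-OCO}. This is exactly the OMD argument, and it is legitimate because that bound is stated for P-FTRL as well. You invoke Lemma~\ref{lem:P-FTRL-stability} only for the $\sqrt{\dtot}$ branch. There, your split at $K=\lceil\sqrt{\dtot}\,\rceil$ is an equally valid substitute for the telescoping. The case $\dtot=0$ is trivial. Note that $\wtddual_n\le n-1$ holds because every observation counted in $\wtI_n$ precedes the $n$-th one, not merely because the drift-penalized game requires it. For first-order feedback your route is slightly more economical, since it needs neither the dual-relation identity nor the rearrangement inequality.

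The paper's single route pays off in the bandit setting (Corollary~\ref{cor:BCO-sc}). There the loss vectors satisfy only $\tnorm{c_n}\le \dk M/\wtdelta_n$, so the bound $\tsum_n\wteta_n\nu_n\tnorm{c_n}$ would multiply $\sigmax\ln T$ by the large factor $\dk M/\delta$. By contrast, Lemma~\ref{lem:P-FTRL-stability} combined with a concentration bound on $\tnorm{\tsum_{m=n-\wtddual_n}^{n-1}c_m}$ keeps the delay and exploration contributions additive. That is where the inequality $\tsum_n\wtddual_n/n\le\sigmax\ln(eT)$ is genuinely needed.
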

\vspace{-0.2cm}
In both corollaries, the learning rates form a deterministic sequence that can be computed online: $\wteta_n$ depends only on $n$ and the dual-backlogs $\wtsigdual_1, \ldots, \wtsigdual_{n}$, all of which are available at the $n$-th update. The proofs apply Theorem~\ref{thm:linear-OCO} and Lemma~\ref{lem:P-FTRL-stability} with our chosen learning rates. Details are deferred to Appendix~\ref{app:OCO-BCO-blackbox}.

\subsection{Delayed BCO via Drift-Penalized OLO}\label{sec:BCO}

We adapt the wrapper approach to bandit convex optimization, where only scalar loss values are observed, by using single-point gradient estimation from \citet{flaxman2004}.

Under Assumption~\ref{assump:balls}, i.e., $r\Ball \subseteq \cK \subseteq R\Ball$, we have $(1-\delta/r)\,\cK + \delta\,\Ball \subseteq \cK$ for all $\delta \in (0,r]$. This allows us to define, for any $\delta \in (0,r]$ and integrable $f:\cK\to\R$, the $\delta$-smoothing of $f$ as $f^{\delta}:(1-\delta/r)\cK\to\R$ given by $f^{\delta}(x) = \E_{v\sim \Unif(\Ball)}[f(x+\delta v)]$. The key properties of this construction are summarized in the following theorem.

\begin{theorem}[Single-Point Gradient Estimation, \citet{flaxman2004}]\label{thm:SPGE}
    For any $\delta\!\in\!(0,\!r]$ and integrable $f:\cK\!\to\!\R$:
    \begin{enumerate}[noitemsep,before=\vspace{-0.3cm},leftmargin=0.5cm,after=\vspace{-0.2cm}]
        \item The $\delta$-smoothing $f^{\delta}$ is differentiable with gradients $\nabla f^{\delta}(x) = \tfrac{\dk}{\delta}\E_{u\sim \Unif(\Sphere)}[f(x+\delta u)\,u]$.
        \item If $f$ is convex ($\lambda$-strongly convex), so is $f^{\delta}$.
        \item If $f$ is $G$-Lipschitz, then $|f^{\delta}(x) - f(x)| \le G\delta$ and $\tnorm{\nabla f^{\delta}(x)} \le G$ for $x \in (1-\delta/r)\cK$.
    \end{enumerate}
\end{theorem}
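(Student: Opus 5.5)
The three claims are classical, and I would prove them in order. Each one works by writing $f^{\delta}$ as an average of translates of $f$ and then moving the relevant operation (differentiation, convexity, Lipschitz bounds) inside the expectation. Throughout, fix $x\in(1-\delta/r)\cK$, so that $x+\delta v\in\cK$ for every $v\in\Ball$. This inclusion follows from $(1-\delta/r)\cK+\delta\Ball\subseteq\cK$, which is stated just before the theorem and comes from Assumption~\ref{assump:balls} and convexity.

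\textbf{Step 1 (gradient formula).} I would change variables to write
\[
f^{\delta}(x)=\frac{1}{\mathrm{vol}(\delta\Ball)}\int_{\delta\Ball} f(x+w)\,dw .
\]
Differentiating in $x$ then gives a boundary integral over $\delta\Sphere$ with outward unit normal $u$, by the divergence theorem or equivalently by differentiating the integral over the translated ball $x+\delta\Ball$ with respect to its center. This yields
\[
\nabla f^{\delta}(x)=\frac{\mathrm{vol}_{\dk-1}(\delta\Sphere)}{\mathrm{vol}(\delta\Ball)}\,\E_{u\sim\Unif(\Sphere)}[f(x+\delta u)u].
\]
The volume ratio is $\dk/\delta$, which gives the stated formula.

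This is the step I expect to be the main technical obstacle. When $f$ is only integrable, the divergence theorem cannot be applied to $f$ directly, so I would proceed in two stages. First, prove the identity for smooth $f$. Then extend it by approximating $f$ in $L^1$ with smooth functions, and note that both sides are continuous under this approximation. The left side can be handled in the weak sense, and continuity of $x\mapsto \E_u[f(x+\delta u)u]$ then upgrades weak differentiability to classical differentiability. In our setting $f$ is in fact Lipschitz and bounded, so dominated convergence makes this routine.

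\textbf{Step 2 (convexity).} For each fixed $v$, the map $x\mapsto f(x+\delta v)$ is convex, or $\lambda$-strongly convex, because it is a translate of $f$. Averaging over $v$ preserves the inequality $f(\theta x+(1-\theta)y)\le\theta f(x)+(1-\theta)f(y)-\tfrac{\lambda}{2}\theta(1-\theta)\|x-y\|^2$, since both sides are linear in the averaged function. The strongly convex case therefore follows by taking expectations in this inequality.

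\textbf{Step 3 (Lipschitz consequences).} For the approximation bound,
\[
|f^{\delta}(x)-f(x)|\le\E_v|f(x+\delta v)-f(x)|\le G\delta\,\E\|v\|\le G\delta .
\]
For the gradient bound, $f^{\delta}$ is $G$-Lipschitz: for any $x,y$,
\[
|f^{\delta}(x)-f^{\delta}(y)|\le\E_v|f(x+\delta v)-f(y+\delta v)|\le G\|x-y\| .
\]
A differentiable $G$-Lipschitz function has $\|\nabla f^{\delta}(x)\|\le G$ at interior points. At boundary points of $(1-\delta/r)\cK$ I would pass to the limit along interior directions, using continuity of the gradient formula from Step 1.
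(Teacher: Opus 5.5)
Your Steps~2 and~3 and the divergence-theorem computation in Step~1 (including the ratio $\dk/\delta$) are the classical argument. The paper gives no proof of its own and defers to \citet{flaxman2004}, whose gradient identity is this same Stokes'-theorem calculation. The genuine gap is the regularization step you use to reach merely integrable $f$.

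Under $L^1$ approximation, the right-hand side $h(x)=\tfrac{\dk}{\delta}\E_{u\sim\Unif(\Sphere)}[f(x+\delta u)u]$ does not converge pointwise, because $x+\delta\Sphere$ is a null set. It converges only in $L^1_{\mathrm{loc}}$ in $x$, so what you obtain is $\nabla f^{\delta}=h$ distributionally, i.e.\ almost everywhere. Everything then rests on the asserted continuity of $h$, which fails for discontinuous $f$. In fact item~1 itself is false at this generality, so no argument can close the gap.

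Take $\dk=1$, $\cK=[-R,R]$ (so $r\Ball\subseteq\cK\subseteq R\Ball$ for any $r<R$), $f(w)=\ind(w\ge 0)$, and $\delta<r/2$. Then $f^{\delta}(x)=\min\{1,\max\{0,\tfrac{x+\delta}{2\delta}\}\}$ has kinks at $x=\pm\delta$. These are interior points of $(1-\delta/r)\cK$, because $(1-\delta/r)R>r-\delta>\delta$. At $x=\delta$ the left derivative is $\tfrac{1}{2\delta}$ and the right derivative is $0$. Correspondingly, $h(x)=\tfrac{1}{2\delta}\bigl(f(x+\delta)-f(x-\delta)\bigr)$ jumps from $\tfrac{1}{2\delta}$ to $0$ there.

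So item~1 needs $f$ continuous. Under that hypothesis your argument goes through, and it is all the paper uses: the $f_t$ are differentiable and $G$-Lipschitz, and the remark on weaker regularity asks for Lipschitzness in the bandit case. Your closing hedge was the right instinct.

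In that setting the detour through $L^1$ is unnecessary. Extend $f$ to the $G$-Lipschitz function $\tilde f(x)=\min_{y\in\cK}\{f(y)+G\tnorm{x-y}\}$ on $\R^{\dk}$, and mollify to get smooth $f_j\to\tilde f$ uniformly. Then $f_j^{\delta}\to\tilde f^{\delta}$ and $\tfrac{\dk}{\delta}\E_u[f_j(\cdot+\delta u)u]\to\tfrac{\dk}{\delta}\E_u[\tilde f(\cdot+\delta u)u]$ uniformly, so $\tilde f^{\delta}\in C^1(\R^{\dk})$ with the stated gradient.

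That formula only queries points of $x+\delta\Sphere\subseteq\cK$, which buys two things:
\begin{itemize}
\item It settles differentiability at boundary points of $(1-\delta/r)\cK$. There $f^{\delta}$ is not defined on a full neighborhood, yet the paper evaluates $\nabla f^{\delta}$ at $(1-\delta_t/r)y_t$ with $y_t$ possibly on the boundary of $\cK$.
\item It gives $\tnorm{\nabla f^{\delta}}\le G$ everywhere, directly from the $G$-Lipschitzness of $\tilde f^{\delta}$. This replaces your limiting argument in Step~3, which has no interior to approach from when $\delta=r$ and $(1-\delta/r)\cK=\{0\}$.
\end{itemize}

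A last small point: Step~2 averages the interpolation form of strong convexity, while Assumption~\ref{assump:strong-convexity} is the first-order form. The two are equivalent for differentiable functions (divide by $\theta$ and let $\theta\to 0$), so you only need to say so.
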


The wrapper $\cW_{\textnormal{BCO}}$ (Algorithm~\ref{alg:reduction-BCO}) follows the structure of $\cW_{\textnormal{OCO}}$, but replaces true gradients with single-point estimates. The algorithm takes as input a non-increasing sequence of smoothing parameters $(\delta_t)_{t\ge 1} \subset (0,r]$, that can be specified independently of horizon $T$ when it's unknown. In round $t$, given the current base-prediction $y_t = \zcur$ from $\cB$, the algorithm samples an exploration direction $u_t\!\sim\!\Unif(\Sphere)$, predicts $x_t = (1-\delta_t/r)y_t + \delta_t u_t$, and upon receiving loss $f_s(x_s)$ constructs the gradient estimate $\htgdelta_s = \tfrac{\dk}{\delta_s} f_s(x_s) u_s$, which it forwards to $\cB$ along with dual-delay $\ddual_s$ and dual-backlog $\sigdual_s$ to update $\zcur$. By Theorem~\ref{thm:SPGE}, we have $\E[\htgdelta_t \mid y_t] = \nabla f^{\delta_t}_t((1-\delta_t/r) y_t)$ for all $t \in [T]$.

\begin{algorithm}[h]
\caption{$\cW_{\textnormal{BCO}}(\cB)$}\label{alg:reduction-BCO}
\SetKwInOut{Input}{Input}
\SetKwInOut{Access}{Access}
\Access{ Algorithm $\cB$ for OLO with drift penalization.}
\SetKwInOut{Parameters}{Parameters}
\Parameters{ Non-increasing sequence $(\delta_t)_{t\ge 1} \subset (0,r]$.}

\vspace{0.1cm}
Initialize $\zcur$ with initial output of $\cB$.

\textbf{For} round $t = 1, 2, \ldots, T$:
\begin{enumerate}[leftmargin=0.5cm,noitemsep,before=\vspace{-0.3cm},after=\vspace{-0.3cm}]
    \item Sample $u_t\!\sim\!\Unif(\Sphere)$; predict $x_t = (1\!-\!\tfrac{\delta_t}{r}) \zcur + \delta_t u_t$. 
    \item \textbf{For} each $s$ such that $s+d_s = t$, in observation order:
    \begin{itemize}[leftmargin=0.5cm,before=\vspace{-0.0cm},noitemsep]
        \item Receive $(s, f_s(x_s))$, send $(\frac{\dk}{\delta_s} f_s(x_s) u_s, \ddual_s, \sigdual_s)$ to $\cB$, and set $\zcur$ to the new output of $\cB$.
    \end{itemize}
\end{enumerate}
\end{algorithm}

Unlike the OCO case, the resulting algorithm is not steady due to random exploration, but is steady in expectation for a fixed schedule $\delta_t = \delta$. Nevertheless, since the current base-prediction is updated only upon receiving feedback, the same analysis applies. The algorithm updates $\cB$ sequentially on tuples $(\wthtgdelta_n, \wtddual_n, \wtsigdual_n)$, where $\wthtgdelta_n = \tfrac{\dk}{\wtdelta_n} \wtf_n(\wtx_n) \wtu_n$, producing base-predictions $z_n$ that yield predictions $\wtx_n = (1-\wtdelta_n/r) \wty_n + \wtdelta_n \wtu_n$ with $\wty_n = z_{n-\wtddual_n}$. 

The following theorem bounds the expected regret of $\cW_{\textnormal{BCO}}(\cB)$ in terms of the drift-penalized regret of $\cB$, with an additional bias of order $\deltatot = \sum_{t=1}^T \delta_t$ due to smoothing.

\begin{theorem}[Regret of Algorithm~\ref{alg:reduction-BCO}]\label{thm:BCO-decomp}
    For any base-algorithm $\cB$, non-increasing sequence $(\delta_t)_{t\ge 1} \subset (0,r]$, and comparator $u \in \cK$, $\cW_{\textnormal{BCO}}(\cB)$ guarantees
    \begin{align*}
       \widebar{R}_T(u) &\le \E\sbr{\eR^{\textnormal{drift}}_{T}(u; G)} + \tfrac{6GR\deltatot}{r},
    \end{align*}
    where $\eR^{\textnormal{drift}}_{T}(u; W)$ is the drift-penalized regret \eqref{eq:drift-regret} of $\cB$ for loss vectors $c_n = \wthtgdelta_n$ and lags $\lambda_n = \wtddual_n$. 

    Under $\lambda$-strong convexity (\ref{assump:strong-convexity}), it further holds that
    \begin{align*}
        \widebar{R}_T(u) &\le \E\!\sbr{\eR^{\textnormal{drift}}_{T}(u; 3G)\!-\!\tfrac{\lambda}{2}\!\tsum_{n=1}^{T}  \tnorm{\sz_{n}\!-\!u}^2}\!+\!\tfrac{10 G R \deltatot}{r}.
    \end{align*}
\end{theorem}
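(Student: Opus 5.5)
We prove the convex bound by comparing the played loss with the smoothed loss, and then reducing to the drift-penalized regret of $\cB$ in observation order, exactly as in the regret decomposition for steady algorithms.

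\textbf{Step 1: removing the exploration and shrinkage.} Write $\wty_n = z_{n-\wtddual_n}$ and $\hat y_n = (1-\wtdelta_n/r)\wty_n$. By $G$-Lipschitzness, $\wtf_n(\wtx_n) \le \wtf_n(\hat y_n) + G\wtdelta_n$. Item 3 of Theorem~\ref{thm:SPGE} gives $\wtf_n(\hat y_n) \le \wtf^{\delta}_n(\hat y_n) + G\wtdelta_n$, where $\wtf^{\delta}_n$ denotes the $\wtdelta_n$-smoothing of $\wtf_n$. On the comparator side, set $u' = (1-\wtdelta_n/r)u$. Then $\tnorm{u-u'} \le R\wtdelta_n/r$, so $\wtf_n(u) \ge \wtf^{\delta}_n(u') - G\wtdelta_n - GR\wtdelta_n/r$. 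Since $r<R$, each of these terms is at most $GR\wtdelta_n/r$, and summing gives a bias of order $GR\deltatot/r$; the constant $6$ should absorb it.

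\textbf{Step 2: linearization.} By convexity of $\wtf^{\delta}_n$,
\[
\wtf^{\delta}_n(\hat y_n)-\wtf^{\delta}_n(u') \le \pair{\nabla \wtf^{\delta}_n(\hat y_n),\, \hat y_n - u'} = (1-\wtdelta_n/r)\pair{\nabla \wtf^{\delta}_n(\hat y_n),\, \wty_n - u}.
\]
The prefactor is at most $1$, but the inner product may be negative, so we cannot simply drop it. Instead we write $(1-\wtdelta_n/r)\pair{g,\wty_n-u} = \pair{g,\wty_n-u} - (\wtdelta_n/r)\pair{g,\wty_n-u}$ and bound the second term by $GD\wtdelta_n/r \le 2GR\wtdelta_n/r$. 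This is another source of the constant.

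\textbf{Step 3: unbiasedness in observation order.} Condition on the history up to the prediction time $\wtepa_n$, which determines $\wty_n$. Since $\wtu_n$ is fresh, Theorem~\ref{thm:SPGE} gives $\E[\wthtgdelta_n \mid \cdot] = \nabla \wtf^{\delta}_n(\hat y_n)$. Hence $\E\pair{\nabla\wtf^{\delta}_n(\hat y_n), \wty_n-u} = \E\pair{\wthtgdelta_n, \wty_n-u}$. The delays and the ordering are fixed by the oblivious adversary, so the permutation $\rho$ is deterministic and the indexing is legitimate. Then split $\pair{\wthtgdelta_n,\wty_n-u} = \pair{\wthtgdelta_n, z_n-u} + \pair{\wthtgdelta_n, z_{n-\wtddual_n}-z_n}$. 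The first sum is $\eR_T(u)$.

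\textbf{Step 4: the drift term (main obstacle).} Bounding the second term by $\tnorm{\wthtgdelta_n}\,\tnorm{z_{n-\wtddual_n}-z_n}$ would give weight $\dk M/\delta$ instead of $G$. The fix is to never pass the drift through $\wthtgdelta_n$. Keep $\pair{\nabla\wtf^{\delta}_n(\hat y_n),\, \wty_n-u}$ (conditioned on the history at $\wtepa_n$) and note that $z_n$ is \emph{not} measurable at time $\wtepa_n$. So we should instead decompose $\pair{\nabla \wtf^{\delta}_n(\hat y_n), \wty_n - u} = \pair{\nabla \wtf^{\delta}_n(\hat y_n), z_n-u} + \pair{\nabla \wtf^{\delta}_n(\hat y_n), \wty_n - z_n}$. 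The second piece is at most $G\tnorm{z_{n-\wtddual_n}-z_n}$, which gives the $G\eD_T$ term. For the first piece we need $\E\pair{\nabla\wtf^{\delta}_n(\hat y_n) - \wthtgdelta_n, z_n-u}=0$. This holds because $z_n$ is produced from feedback observed before $\wtepb_n$, i.e.\ from $\wthtgdelta_1,\dots,\wthtgdelta_{n-1}$. These depend on the directions $\wtu_m$, $m<n$, and never on $\wtu_n$, so $z_n$ is independent of $\wtu_n$ given $\wty_n$ and the other directions. Checking this conditional independence carefully, conditioning on all directions except $\wtu_n$, is the delicate step.

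\textbf{Strongly convex case.} Repeat the argument using $\lambda$-strong convexity of $\wtf^{\delta}_n$ (item 2 of Theorem~\ref{thm:SPGE}), which yields $-\tfrac{\lambda}{2}\tnorm{\hat y_n-u'}^2$. Converting this to $-\tfrac\lambda2\tnorm{z_n-u}^2$ requires moving from $\hat y_n$ to $\wty_n$, at cost of order $\wtdelta_n$, and then to $z_n$, paying $\tfrac{\lambda}{2}(\tnorm{\wty_n-u}^2-\tnorm{z_n-u}^2)$. We bound the latter by $\lambda D\tnorm{\wty_n-z_n}$, using $\lambda D\le 2G$ (which follows from strong convexity together with the gradient bound). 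This accounts for the weight $3G$ and the larger constant $10$.
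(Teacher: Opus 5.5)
Your argument, once the Step~3 split through $\wthtgdelta_n$ is discarded in favour of the Step~4 decomposition, is essentially the paper's proof. It has the same four smoothing/shrinkage bias terms and the same $(\wtdelta_n/r)\pair{\cdot,\cdot}$ correction bounded via $D\le 2R$, giving $6$, and $10$ after the extra $\tfrac{\lambda\wtdelta_n}{r}D^2$ term. It also uses the same split of $\pair{\wtgdelta_n,\sz_{n-\wtddual_n}-u}$ into a regret part on $\sz_n$ plus a $G$-weighted drift part, unbiasedness from independence of $\wtu_n$ and the base predictions, and the strongly convex conversion $\tfrac{\lambda}{2}(\tnorm{\sz_n-u}^2-\tnorm{\sz_{n-\wtddual_n}-u}^2)\le \lambda D\tnorm{\sz_n-\sz_{n-\wtddual_n}}$ with $\lambda D\le 2G$.

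Conditioning on all directions other than $\wtu_n$ is the careful form of the paper's ``$\E[\wthtgdelta_n\mid \sz_n]=\wtgdelta_n$'', which must also fix $\sz_{n-\wtddual_n}$. Your strongly convex conversion term is written with the sign reversed, which is harmless since you bound it in absolute value.
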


Selecting an appropriate smoothing sequence $(\delta_t)_{t\ge 1}$ and instantiating $\cB$ with P-FTRL~\eqref{eq:P-FTRL} yields concrete guarantees. To state them, we introduce the \emph{smoothness ratio} $\nu = \frac{M}{Gr}$, measuring gradient magnitude relative to the curvature-adjusted domain size, where $M$ is the absolute value bound (Assumption~\ref{assump:max-val}). Since the $n$-th update occurs in round $\rho(n) + d_{\rho(n)}$, the observation round of feedback from round $\rho(n)$, the smoothing parameter $\delta'_n = \delta_{\rho(n) + d_{\rho(n)}}$ is available and can be incorporated into the learning rate $\wteta_n$.

\begin{corollary}\label{cor:BCO-regular}
     $\cW_{\textnormal{BCO}}(\cB)$, where $(\delta_t)_{t\ge 1}$ is set to either a round-dependent schedule $\delta_t = r\,\min\cbr{ 1, \frac{\sqrt{\nu \dk}}{t^{1/4}}}$ or a fixed schedule $\delta_t = r\,\min\cbr{ 1, \frac{\sqrt{\nu \dk}}{T^{1/4}}}$ and $\cB$ runs P-FTRL~\eqref{eq:P-FTRL} with $\wteta_n = \frac{D/G}{\sqrt{n + \sum_{m=1}^n (\wtsigdual_m  + ({\nu \dk r}/{\delta'_m})^2)} }$, guarantees
    \begin{align*}
        \widebar{R}_T(x^*) = O\rbr{GD\sbr{\sqrt{\dtot} + T^{3/4} \sqrt{\nu \dk}} }.
    \end{align*}
\end{corollary}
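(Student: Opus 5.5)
We start from Theorem~\ref{thm:BCO-decomp} with $u=x^*$, which reduces the task to bounding $\E[\eR_T(x^*)]+G\,\E[\eD_T]$ for P-FTRL run on the estimates $c_n=\wthtgdelta_n$, plus the smoothing bias $6GR\deltatot/r$. For the bias, the round-dependent schedule gives $\deltatot\le r\sqrt{\nu\dk}\sum_t t^{-1/4}=O(r\sqrt{\nu\dk}\,T^{3/4})$, and the fixed schedule gives the same order. Since $r\le R$ and $D\asymp R$ up to constants (we absorb $R/r$ into the constant, or note $D\ge 2r$), this term is $O(GD\,T^{3/4}\sqrt{\nu\dk})$. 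We use the pointwise bound $\tnorm{c_n}_\star\le \frac{\dk M}{\wtdelta_n}=G\frac{\nu\dk r}{\wtdelta_n}$. Because $\delta$ is non-increasing and $\wtdelta_n=\delta_{\rho(n)}\ge\delta_{\rho(n)+d_{\rho(n)}}=\delta'_n$, we also get $\tnorm{c_n}_\star\le G\nu\dk r/\delta'_n$. This is exactly the quantity built into $\wteta_n$.

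\textbf{Regret term.} By Theorem~\ref{thm:linear-OCO}, $\eR_T(x^*)\le \frac{D^2}{2\wteta_T}+\frac12\sum_n\wteta_n\tnorm{c_n}_\star^2$. Write $a_m=1+\wtsigdual_m+(\nu\dk r/\delta'_m)^2$ and $S_n=\sum_{m\le n}a_m$, so that $\wteta_n=\frac{D/G}{\sqrt{S_n}}$. Then $\wteta_n\tnorm{c_n}^2_\star\le (DG)\,a_n/\sqrt{S_n}$, and the standard inequality $\sum a_n/\sqrt{S_n}\le 2\sqrt{S_T}$ gives $\eR_T\le O(GD\sqrt{S_T})$. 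Next, $S_T=T+\dtot+\sum_m(\nu\dk r/\delta'_m)^2$ by Theorem~\ref{thm:CTM}(a). Each $\delta'_m\ge\delta_T$, and $(\nu\dk r/\delta_T)^2\le \max\{(\nu\dk)^2,\nu\dk\sqrt T\}$, so $\sqrt{S_T}=O(\sqrt T+\sqrt{\dtot}+T^{3/4}\sqrt{\nu\dk}+\nu\dk\sqrt T)$. The term $\nu\dk\sqrt T$ is dominated by $T^{3/4}\sqrt{\nu\dk}$ whenever $\nu\dk\le\sqrt T$. In the complementary regime the claimed bound exceeds the trivial regret $O(GDT)$, since $T^{3/4}\sqrt{\nu\dk}\ge T$; we handle that case separately at the outset. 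Likewise $\sqrt T\le T^{3/4}\sqrt{\nu\dk}$ when $\nu\dk\ge T^{-1/2}$, and otherwise $\delta_t$ caps at $r$ and the same computation goes through.

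\textbf{Drift term (main obstacle).} The bound $\wteta_n\nu_n\tnorm{c_n}_\star$ from Theorem~\ref{thm:linear-OCO} is too lossy, because $\tnorm{c_n}$ is large, of order $\dk M/\delta$. We therefore use Lemma~\ref{lem:P-FTRL-stability}: $\eD_T\le\sum_n\wteta_n\tnorm{\sum_{m=n-\wtddual_n}^{n-1}c_m}_\star+D H_\eta$. For the first part we apply Jensen and write $c_m=\E[c_m\mid\cdot]+\xi_m$ with mean part bounded by $G$, giving $\E\tnorm{\sum c_m}\le G\wtddual_n+\sqrt{\E\tnorm{\sum\xi_m}^2}$. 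The noise terms are martingale-like in observation order, so cross terms vanish; this is the delicate point, since $\xi_m$ depends on $u_{\rho(m)}$, which is independent of the base predictions made before its observation. We then obtain $\sqrt{\sum_{m}\tnorm{c_m}^2}\lesssim\sqrt{\wtddual_n}\,G\nu\dk r/\delta_T$. Summing $\wteta_n$ times these gives $\sum_n \wteta_n G\wtddual_n\le D\sum_n\wtddual_n/\sqrt{S_n}$. We would bound this by $O(D\sqrt{\dtot})$, exchanging $\wtddual$ sums with $\wtsigdual$ sums via Theorem~\ref{thm:CTM}(a) and a counting argument that charges each $\wtddual_n$ to the $\wtsigdual_m$ accumulated before $n$ (the outstanding-lag identity of Lemma~\ref{lem:CTM-dual-props}). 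For the noise part we use Cauchy--Schwarz: $\sum_n\wteta_n\sqrt{\wtddual_n}\,G\nu\dk r/\delta_T\le D\sum_n\sqrt{\wtddual_n a_n}/\sqrt{S_n}$, and the same counting argument yields $O(D\sqrt{S_T})$. Finally, $H_\eta=\sum_n(1-\wteta_n/\wteta_{n-\wtddual_n})\le\sum_n\log(S_n/S_{n-\wtddual_n})/2$. This is bounded by $\sum_n\sum_{m=n-\wtddual_n+1}^n a_m/S_{m-1}$, and each $a_m$ is counted $\wtsigdual_m+1$ times. Controlling this so that it is $O(\sqrt{S_T})$, rather than carrying a spurious $\log$ or $\dmax$ factor, is the step we expect to be hardest; we would first try to exploit that $a_m\ge\wtsigdual_m$ already lives inside $S$. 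Multiplying the drift bound by $G$ and combining all three contributions gives the claim.
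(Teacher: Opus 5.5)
Your architecture is the paper's: Theorem~\ref{thm:BCO-decomp}, the regret bound of Theorem~\ref{thm:linear-OCO}, Lemma~\ref{lem:P-FTRL-stability} for the drift, and a martingale bound on $\Gamma_n=\tsum_{m=n-\wtddual_n}^{n-1}c_m$. The gap is the step you leave open, $H_\eta=O(\sqrt{S_T})$, which needs an idea your sketch does not contain.

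\textbf{Why your route fails.} Log-telescoping, then charging each $a_m$ to the at most $\wtsigdual_m+1$ windows containing it (always with denominator $S_{m-1}$), produces $\tsum_m(\wtsigdual_m+1)a_m/S_{m-1}$. This is a $\sigmax\ln T$-type quantity, essentially the first bound of Lemma~\ref{lem:H-eta-bound}. For example, suppose about $T/2$ predictions stay outstanding while the first $T/2$ observations arrive. Then $\wtsigdual_m\approx T/2$ and $S_m=\Theta(mT)$ for $m\le T/2$ (when $\nu\dk\le\sqrt T$), so your sum is $\Theta(T\ln T)$ while $\sqrt{S_T}=\Theta(T)$. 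Your hint that $\wtsigdual_m$ already sits inside $S$ does not help by itself: $S_n\ge\wtsigdual_m$ for $n\ge m$ only bounds the relevant window weight by $(\wtsigdual_m+1)/\sqrt{S_m}$, which reproduces the same sum.

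\textbf{What is missing.} Two ingredients are needed.
\begin{enumerate}
\item Keep the denominator at the later index. Write $1-\wteta_n/\wteta_{n-\wtddual_n}=S_n^{-1/2}\tsum_{m=n-\wtddual_n+1}^{n}(\sqrt{S_m}-\sqrt{S_{m-1}})$ and exchange sums. Since $S_n$ is increasing, each increment $\sqrt{S_m}-\sqrt{S_{m-1}}$ then receives weight at most $\tsum_{n=m}^{m+\wtsigdual_m}S_n^{-1/2}$.
\item Use the structural fact that dual backlogs drop by at most one per observation, $\wtsigdual_{n+1}\ge\wtsigdual_n-1$ (Fact~\ref{fact:backlog-increments}). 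It gives $S_{m+j}\ge(j+1)+\tsum_{i=0}^{j}(\wtsigdual_m-i)\ge(j+1)(1+\wtsigdual_m/2)$ for $j\le\wtsigdual_m$.
\end{enumerate}
Hence each window weight is at most $\tsum_{j=0}^{\wtsigdual_m}\bigl((j+1)(1+\wtsigdual_m/2)\bigr)^{-1/2}\le 2\sqrt2$, and telescoping gives $H_\eta\le 2\sqrt2\,\sqrt{S_T}$. This is the paper's argument in different form: Lemma~\ref{lem:H-eta-bound} gives $H_\eta\le\max_n\tsum_{m=n}^{n+\wtsigdual_n}\wteta_m/\wteta_T$, and the backlog-increment fact gives $\tsum_{m=n}^{n+\wtsigdual_n}\wteta_m\le 4D/G$.

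\textbf{A second, fixable error: the noise part of $\Gamma_n$.} You bound every summand using $\delta_T$ and then claim $\nu\dk r/\delta_T\le\sqrt{a_n}$. For the round-dependent schedule this is false. The quantity $a_n$ contains $(\nu\dk r/\delta'_n)^2$, and $\delta'_n\ge\delta_T$; for early observations $\delta'_n\approx\delta_1\gg\delta_T$.

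Use a per-$n$ bound instead. Each $m\in\{n-\wtddual_n,\dots,n-1\}$ was predicted before the $n$-th observation, which arrives in round $\rho(n)+d_{\rho(n)}$. So $\rho(m)\le\rho(n)+d_{\rho(n)}$ and $\wtdelta_m\ge\delta'_n$. Your martingale argument then yields $G\,\E\tnorm{\Gamma_n}\le 4\bigl(G^2\wtddual_n+(\dk M/\delta'_n)^2\bigr)$, as in Lemma~\ref{lem:BCO-meandering}. The learning-rate denominator absorbs this through the prefix inequality $\tsum_{m\le n}\wtddual_m\le\tsum_{m\le n}\wtsigdual_m$ (Fact~\ref{fact:dual-sums}, which is what your charging argument proves) and square-root telescoping.

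\textbf{A small slip.} When $\nu\dk<T^{-1/2}$, $\delta_t$ does not cap at $r$; it is small. The $\sqrt T$ term is harmless there for a different reason: the claimed bound exceeds the trivial regret $2MT$, since $GDT^{3/4}\sqrt{\nu\dk}\ge GDT\nu\dk=DT\dk M/r\ge 2MT$.
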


\begin{corollary}\label{cor:BCO-sc}
    Under $\lambda$-strong convexity (\ref{assump:strong-convexity}), $\cW_{\textnormal{BCO}}(\cB)$, where $(\delta_t)_{t\ge 1}$ is set to either a round-dependent schedule $\delta_t = r\,\min\cbr{1,(\tfrac{\nu^2 {\dk}^2 \ln t}{t})^{1/3} }$ or a fixed schedule $\delta_t = r\,\min\cbr{1,(\tfrac{\nu^2 {\dk}^2 \ln T}{T})^{1/3}}$ and $\cB$ runs P-FTRL~\eqref{eq:P-FTRL} with $\wteta_n = \tfrac{1}{n\lambda}$, guarantees
    \begin{align*}
    \widebar{R}_T(x^*)
    &= O\!\left(
    \tfrac{G^2}{\lambda}
    \left[
    \min\{\sigmax \ln T,\, \sqrt{\dtot}\}
    \right.\right.\\
    &\qquad\qquad\left.\left.\, + \, T^{2/3} (\ln T)^{1/3} (\nu {\dk})^{2/3}
    \right]
    \right).
    \end{align*}
\end{corollary}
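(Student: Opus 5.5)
We prove Corollary~\ref{cor:BCO-sc} by combining the strongly convex part of Theorem~\ref{thm:BCO-decomp} with the P-FTRL guarantees of Theorem~\ref{thm:linear-OCO} and Lemma~\ref{lem:P-FTRL-stability}, applied to loss vectors $c_n=\wthtgdelta_n$ and lags $\lambda_n=\wtddual_n$, with $\wteta_n=\tfrac{1}{n\lambda}$.

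\textbf{Regret term.} With $\tfrac{1}{\wteta_n}-\tfrac{1}{\wteta_{n-1}}=\lambda$, the first term of the regret bound in Theorem~\ref{thm:linear-OCO} is $\tfrac{\lambda}{2}\sum_n\tnorm{z_n-u}^2$. This is cancelled exactly by the negative term $-\tfrac{\lambda}{2}\sum_n\tnorm{z_n-u}^2$ in Theorem~\ref{thm:BCO-decomp}. What remains of $\eR_T$ is $\tfrac12\sum_n\wteta_n\E\tnorm{\wthtgdelta_n}^2$. Since $\tnorm{\wthtgdelta_n}\le \dk M/\wtdelta_n$, this is at most $\tfrac{1}{2\lambda}\sum_n \tfrac{\dk^2M^2}{n\wtdelta_n^2}$.

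\textbf{Drift term.} We bound $3G\,\E\eD_T$ in two ways and take the minimum.
\begin{itemize}[noitemsep]
\item[(i)] Theorem~\ref{thm:linear-OCO} gives $\eD_T\le\sum_n \wteta_n\wtsigdual_n\tnorm{c_n}$. Using $\wtsigdual_n\le\sigmax$ (Theorem~\ref{thm:CTM}(b)) and $\tnorm{c_n}\le \dk M/\wtdelta_n$, we get $O\big(\tfrac{\sigmax}{\lambda}\sum_n \tfrac{\dk M}{n\wtdelta_n}\big)$. This scales with $\dk M/\delta$, not $G$, which would spoil the rate. So we instead pass the expectation inside. Under a fixed $\delta$, $\E[\wthtgdelta_n\mid\mathcal F]$ is a true gradient of norm at most $G$, but $\tnorm{\cdot}$ is not linear. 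The fix is to apply the single-step decomposition of Theorem~\ref{thm:linear-OCO} to the expected trajectory, or to Cauchy--Schwarz the stochastic deviation. The deviation part contributes $\sum_n \wteta_n \dk M/\wtdelta_n\cdot$(martingale fluctuation), which is of the same order as the regret term after AM--GM. This leaves $O(\tfrac{G^2}{\lambda}\sigmax\ln T)$.
\item[(ii)] Lemma~\ref{lem:P-FTRL-stability} gives $\eD_T\le\sum_n\wteta_n\tnorm{\sum_{m=n-\lambda_n}^{n-1}c_m}+DH_\eta$. First, $H_\eta=\sum_n\big(1-\tfrac{n-\wtddual_n}{n}\big)=\sum_n \wtddual_n/n$. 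Splitting at $\sqrt{\dtot}$ and using $\sum_n\wtddual_n=\dtot$ (Theorem~\ref{thm:CTM}(a)) gives $H_\eta\le \sqrt{\dtot}+\tfrac{\dtot}{\sqrt{\dtot}}=O(\sqrt{\dtot})$. Second, for the sum of lagged estimates we take expectations. The conditional means sum to at most $G\wtddual_n$. The zero-mean noise has second moment $\sum_m (\dk M/\wtdelta_m)^2$ by orthogonality of martingale increments, so its norm is at most $\sqrt{\wtddual_n}\,\dk M/\delta$. Dividing by $n\lambda$ and summing, the mean part gives $\tfrac{G}{\lambda}\sum_n \wtddual_n/n=O(\tfrac{G}{\lambda}\sqrt{\dtot})$. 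The noise part gives $\tfrac{\dk M}{\lambda\delta}\sum_n\sqrt{\wtddual_n}/n$, which is at most $\tfrac{\dk M}{\lambda\delta}\sqrt{\ln T\cdot\sum_n\wtddual_n/n}$ by Cauchy--Schwarz. Applying AM--GM with $G$, this is absorbed into $\tfrac{G^2}{\lambda}\sqrt{\dtot}$ plus the learning term $\tfrac{\dk^2M^2\ln T}{\lambda\delta^2}$.
\end{itemize}

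\textbf{Balancing.} We add the smoothing bias $\tfrac{10GR\deltatot}{r}$. With the fixed schedule $\delta=r(\nu^2\dk^2\ln T/T)^{1/3}$, the learning term is
\[
\frac{\dk^2M^2\ln T}{\lambda\delta^2}+GR\,T\delta/r \;=\; O\!\left(\frac{G^2}{\lambda}T^{2/3}(\ln T)^{1/3}(\nu\dk)^{2/3}\right),
\]
treating $R/r$ and $G\lambda$-scaling as the paper does, with $\nu=M/(Gr)$. For the round-dependent schedule, the same computation uses $\sum_n 1/(n\wtdelta_n^2)$, which we bound by rearrangement: $\wtdelta_n$ is the parameter of round $\rho(n)$, and $\delta_t$ is monotone, so the sum is maximized when $\rho$ is the identity.

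\textbf{Main obstacle.} The main obstacle is step (i) and the noise in step (ii). In both, the high-variance estimator $\wthtgdelta_n$ must not multiply the delay quantities with its $1/\delta$ magnitude. Achieving this requires exploiting conditional unbiasedness, using the filtration generated by observation order, inside a norm.
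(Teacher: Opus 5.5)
Your route (ii) is essentially the paper's argument. It bounds drift via Lemma~\ref{lem:P-FTRL-stability}, splits $\Gamma_n=\sum_{m=n-\wtddual_n}^{n-1}\wthtgdelta_m$ into conditional means plus orthogonal martingale noise, and absorbs the noise by AM--GM into the $(\dk M/\delta)^2\ln T/\lambda$ learning term. It correctly yields the $\sqrt{\dtot}$ branch.

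\textbf{The gap is step (i).} That step is supposed to supply the $\sigmax\ln T$ branch of the minimum, and it does not. The drift bound of Theorem~\ref{thm:linear-OCO} passes through single increments $\tnorm{z_{m+1}-z_m}\le\wteta_m\tnorm{\wthtgdelta_m}$. Here $\tnorm{\wthtgdelta_m}=\tfrac{\dk}{\wtdelta_m}|\wtf_m(\wtx_m)|$ is of order $\dk M/\delta$ even in expectation. A single estimator has no sum inside its norm, so unbiasedness produces no cancellation. Controlling the ``expected trajectory'' also does not control $\E\tnorm{z_n-z_{n-\wtddual_n}}$, since Jensen goes the wrong way. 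What (i) actually gives is of order $\tfrac{G\dk M}{\lambda\delta}\sigmax\ln T$. For your $\delta$ this is $\tfrac{G^2}{\lambda}\sigmax(\nu\dk)^{1/3}T^{1/3}(\ln T)^{2/3}$, a \emph{product} of the delay and learning scales. AM--GM can only split it into a $\sigmax^2$ term plus a learning term, not into $\sigmax\ln T$.

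\textbf{The missing idea: route (ii) already delivers both branches.} This is because $\sum_{n=1}^T\wtddual_n/n\le\sigmax\ln(eT)$ (Fact~\ref{fact:telescopic-sums-with-duals}). The bound is not obvious: $\wtddual_n$ itself can be $\Theta(T)$ while $\sigmax=1$, e.g.\ with $d_1=\Theta(T)$ and all other delays zero. It follows from the identity $\wtddual_n=\wtsig_n+n-\rho(n)$ (Theorem~\ref{thm:CTM}(c)) together with the rearrangement inequality $\sum_n\rho(n)/n\ge T$. Together these give $\sum_n\wtddual_n/n\le\sum_n\wtsig_n/n\le\sigmax\ln(eT)$.

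Now look at the three delay-dependent pieces of (ii):
\begin{itemize}
\item the mean part is $\tfrac{G^2}{\lambda}\sum_n\wtddual_n/n$ up to constants;
\item $H_\eta=\sum_n\wtddual_n/n$;
\item your Cauchy--Schwarz/AM--GM step bounds the noise by $\tfrac{G^2}{\lambda}\sum_n\wtddual_n/n$ plus the learning term.
\end{itemize}
Hence every delay-dependent quantity is at most $\min\{\sigmax\ln(eT),\,2\sqrt{\dtot}\}$ simultaneously, and step (i) should simply be dropped. This is exactly how the paper proceeds.

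\textbf{A smaller error.} For the round-dependent schedule, your rearrangement claim is reversed. Since $1/n$ decreases and $1/\delta_t^2$ is non-decreasing in $t$, the identity permutation \emph{minimizes} $\sum_n 1/(n\wtdelta_n^2)$. The reversed order, which is realizable, maximizes it and can exceed the identity value by a logarithmic factor. Use the crude bound $\wtdelta_n\ge\delta_T$ instead, likewise for the estimators inside $\Gamma_n$. This gives $(\dk M/\delta_T)^2\ln(eT)/\lambda$ and the same final rate.
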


The proofs of both corollaries combine Theorem~\ref{thm:linear-OCO} and Lemma~\ref{lem:P-FTRL-stability}, using a concentration bound for $\tnorm{\sum_{m=n-\wtddual_n}^{n-1} \wthtgdelta_m}$ to control the drift term in Appendix~\ref{app:OCO-BCO-blackbox}.

\section{Skipping Scheme}\label{sec:skipping}

The $O(\sqrt{\dtot})$ term in Corollaries~\ref{cor:OCO-regular} and~\ref{cor:BCO-regular} is pessimistic when delays are highly unbalanced: if a few rounds have exceptionally large delays, it is preferable to skip them (i.e., accept full loss and ignore their feedback), improving the bound to $O(\min_{Q \subseteq [T]} \{|Q| + (\sum_{t \notin Q} d_t)^{\frac{1}{2}}\})$. For instance, following \citet{thune2019}, if the first $\tfloor{\sqrt{T}}$ rounds have delay $\Theta(T)$ and the rest have  $1$, we get improvement from $O(T^{3/4})$ to $O(\sqrt{T})$. However, as delays are revealed upon feedback arrival, achieving this adaptively is nontrivial.

To address this, we adopt the skipping technique of \citet{zimmert2020} as an external wrapper, $\cS_{\textnormal{skip}}$ (Algorithm~\ref{alg:skipping-scheme}), making it applicable to any delayed OCO algorithm $\cA$. The wrapper acts as an interface between $\cA$ and the environment, maintaining a tracking set $S$ of rounds awaiting feedback. When feedback for $s \in S$ arrives, it is forwarded to $\cA$; if $s$ remains pending for too long, it is ``skipped'' by forwarding zero-valued feedback and removing $s$ from $S$. Thus, in effect, $\cA$ is presented with a different delayed OCO problem, one with shorter delays and zero losses on skipped rounds.

\begin{algorithm}[h]
\caption{$\cS_{\textnormal{skip}}(\cA)$}\label{alg:skipping-scheme}
\SetKwInOut{Input}{Input}
\SetKwInOut{Access}{Access}
\Access{Algorithm $\cA$ for OCO (BCO) with delays.}
\vspace{0.1cm}
Initialize tracking set $S = \emptyset$ and set $\cD_0 = 0$.

\textbf{For} round $t = 1, 2, \ldots, T$:
\begin{enumerate}[leftmargin=0.5cm,noitemsep,before=\vspace{-0.3cm},after=\vspace{-0.3cm}]
    \item Set $\cD_{t} = \cD_{t-1} + |S|$.
    \item \textbf{For} all $s \in S$ such that $t - s > \sqrt{\cD_{t}}$:
    \begin{itemize}[leftmargin=0.5cm,before=\vspace{-0.1cm},noitemsep]
        \item Forward $(s, 0)$ to $\cA$ and remove $s$ from $S$.
    \end{itemize}
    \item Query $\cA$ for the next decision, obtain $x_t$, play $x_t$, 
          and insert $t$ into the tracking set $S$.
    \item \textbf{For} all $s$ such that $s+d_s=t$, in observation order:
    \begin{itemize}[leftmargin=0.5cm,before=\vspace{-0.1cm},noitemsep]
        \item Receive $(s, v_s)$ from the environment.
        \item \textbf{If} $s\!\in\!S$ \textbf{then} forward $(s, v_s)$ to $\cA$, remove $s$ from $S$.
    \end{itemize}
\end{enumerate}
\end{algorithm}

By construction, for every $t \in [T]$, the wrapper passes feedback with index $t$ to $\cA$ at round $t + d'_t$ for some $d'_t \in \{0, \ldots, d_t\}$: either the true feedback $(t, v_t)$ upon arrival or dummy feedback $(t, 0)$ forwarded preemptively. Let $Q^* \subseteq [T]$ denote the set of skipped rounds, i.e., those for which dummy feedback was forwarded, and define $f'_t = f_t \, \ind(t \notin Q^*)$. Then, in effect, $\cS_{\textnormal{skip}}$ presents $\cA$ with a delayed OCO problem having losses $\{f'_t\}_{t=1}^T$ and delays $\{d'_t\}_{t=1}^T$. The following theorem bounds the regret of $\cS_{\textnormal{skip}}(\cA)$.

\begin{theorem}\label{thm:skipping}
    The regret $R_T(u)$ of $\cS_{\textnormal{skip}}(\cA)$ against any comparator $u \in \cK$ satisfies $R_T(u) \leq R'_T(u) + G D |Q^*|$, where $R'_T(u) = \tsum_{t=1}^T (f'_t(x_t)-f'_t(u))$ is the regret of $\cA$ on delayed OCO problem with losses and delays $\{f'_t, d'_t\}_{t=1}^T$. Moreover, the number of skipped rounds $|Q^*|$ and the total modified delay $\dtot' = \tsum_{t=1}^T d'_t$ satisfy
    \vspace{-0.1cm}
    \begin{align*}
        |Q^*| + \sqrt{\dtot'} &= O\rbr{\textstyle\min_{Q \subseteq [T]} \cbr{ |Q| + \sqrt{\tsum_{t \notin Q} d_t} }}.
    \end{align*}
\end{theorem}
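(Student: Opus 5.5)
The plan has two parts: a direct accounting for the regret inequality, and an adaptation of the skipping analysis of \citet{zimmert2020} for the combinatorial bound.

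\textbf{Regret inequality.} The first claim is bookkeeping. For $t \notin Q^*$ we have $f'_t = f_t$, so those rounds contribute identically to $R_T(u)$ and $R'_T(u)$. For $t \in Q^*$ we have $f'_t \equiv 0$, so the difference is $f_t(x_t) - f_t(u)$. By convexity and Assumptions~\ref{assump:finite-diameter}--\ref{assump:gradient-norm-bound},
\[
f_t(x_t) - f_t(u) \le \pair{\nabla f_t(x_t), x_t - u} \le GD .
\]
One point needs checking: the predictions of $\cA$ in the modified problem are exactly the $x_t$ played. This holds because $\cA$ receives index-$t$ feedback at round $t+d'_t$, and that feedback is either the true value (gradient or loss value of $f'_t = f_t$) or zero, which is the correct feedback for $f'_t \equiv 0$. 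Also $d'_t \le d_t$, and $d'_t \le T-t$, so the modified problem is a valid delayed instance.

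\textbf{Combinatorial bound: setup.} Write $\cD_T = \sum_t |S_t|$, where $S_t$ is the tracking set at step 1 of round $t$. Each round $s$ sits in $S$ for roughly $d'_s$ rounds, so $\dtot' \le \cD_T + O(T)$. To avoid the additive $T$, compare with the backlog sum of Theorem~\ref{thm:CTM}(a) applied to the delays $d'$: $\sum_t |S_t|$ is exactly the total backlog $\sum_t \sigma'_t = \dtot'$, up to the within-round ordering convention. It therefore suffices to bound $|Q^*| + \sqrt{\cD_T}$. Fix any $Q$ and set $D_Q = \sum_{t\notin Q} d_t$.

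\textbf{Bounding the skipped rounds.} A round $s \notin Q$ is skipped only when $d_s \ge t - s > \sqrt{\cD_t}$. Hence
\[
|Q^* \setminus Q| \le |\{ s \notin Q : d_s > \sqrt{\cD_{t_s}} \}|,
\]
where $t_s$ is the skip time. Each such $s$ contributes at least $\sqrt{\cD_{t_s}}$ to $D_Q$. Sorting these skips by time and using monotonicity of $\cD$, the skips occurring after $\cD$ reaches a level $L$ number at most $D_Q/\sqrt{L}$. Together with the trivial bound $|Q^*\cap Q| \le |Q|$, this gives $|Q^*| \le |Q| + D_Q/\sqrt{\cD_T}$ roughly, after a dyadic peeling over levels of $\cD$.

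\textbf{Bounding $\cD_T$ (main obstacle).} Each round contributes at most $\sqrt{\cD_t}+1$ consecutive rounds to $S$. Split by membership in $Q$: rounds $s \in Q$ contribute at most $\min\{d_s, \sqrt{\cD_T}+1\}$ each, and rounds $s \notin Q$ contribute at most $d_s$. Thus
\[
\cD_T \le D_Q + |Q|\bigl(\sqrt{\cD_T}+1\bigr).
\]
Solving this quadratic inequality in $\sqrt{\cD_T}$ gives $\sqrt{\cD_T} \le |Q| + \sqrt{D_Q} + 1$. Plugging back, $D_Q/\sqrt{\cD_T}$ is harmful only when $\cD_T$ is small. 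In that regime, use instead that every skip requires $t-s>\sqrt{\cD_t}$, so the number of skips is at most about $\sqrt{\cD_T}$ per level, or argue directly that a skipped $s\notin Q$ with $d_s>\sqrt{\cD_t}$ and $d_s \le \sqrt{D_Q}$-type counting gives $|Q^*\setminus Q| = O(\sqrt{D_Q})$.

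Making this last case analysis tight, so that $|Q^*\setminus Q| = O(|Q| + \sqrt{D_Q})$ holds uniformly, is the step I expect to be hardest. I would follow Zimmert--Seldin's potential argument: each skip of a round outside $Q$ increments $\cD$ by at least $\sqrt{\cD}$ worth of $Q$-complement delay, which yields a $\sqrt{D_Q}$ count.
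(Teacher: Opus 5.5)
\emph{Verdict: there is a gap in the bound on $|Q^*|$.} Your regret inequality and your bound $\cD_T \le D_Q + |Q|(\sqrt{\cD_T}+1)$ match the paper. The latter is exactly its Part 2; with $\dtot' = \cD_T$ it gives $\sqrt{\dtot'} \le |Q| + \sqrt{D_Q} + O(1)$.

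\textbf{Where the argument fails.} Your dyadic-peeling step does not yield $|Q^*| \le |Q| + D_Q/\sqrt{\cD_T}$. Charging each skipped $s \notin Q$ against the budget $D_Q$ only shows that skips occurring while $\cD \in [2^j, 2^{j+1})$ number at most $D_Q 2^{-j/2}$. Summing over $j$ is dominated by the smallest levels, so you get only $O(D_Q)$. Even the claimed expression $D_Q/\sqrt{\cD_T}$ is not $O(|Q| + \sqrt{D_Q})$ when $\cD_T$ is small. You notice this, but you defer it to an unspecified case analysis. The closing sentence about a Zimmert--Seldin potential is the right intuition, but it is not yet an argument.

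\textbf{The missing idea.} Charge skips against $\cD$ itself rather than against $D_Q$; this removes $Q$ from this step entirely.
\begin{itemize}
\item Order the skipped rounds $t_1, t_2, \ldots$ by skip time.
\item When $t_s$ is skipped, its full modified delay $d'_{t_s}$ has already been accumulated into $\cD$, and so have those of all earlier-skipped rounds. The skip condition therefore gives $d'_{t_s} > \sqrt{\tsum_{k\le s} d'_{t_k}}$.
\item Solving this quadratic and inducting yields $d'_{t_s} \ge s/2$. Hence $\dtot' \ge |Q^*|^2/4$, i.e.\ $|Q^*| \le 2\sqrt{\dtot'}$.
\item Then $|Q^*| + \sqrt{\dtot'} \le 3\sqrt{\dtot'}$, and your quadratic bound finishes the proof with no case analysis.
\end{itemize}

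\textbf{Two ways to repair your own route.} Your peeling works if the budget at level $j$ is $\cD < 2^{j+1}$ instead of $D_Q$. Each skip at that level has $d' > 2^{j/2}$, and all of these $d'$ are already inside $\cD$. So there are fewer than $2^{j/2+1}$ skips per level, which sums to $O(\sqrt{\cD_T})$. Alternatively, run the induction above only over $Q^* \setminus Q$, using $d'_s \le d_s$. This gives $|Q^* \setminus Q| \le 2\sqrt{D_Q}$, which also completes your $Q$-relative route.
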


\vspace{-0.2cm}

Composing $\cS_{\textnormal{skip}}$ with wrappers $\cW_{\textnormal{OCO}}$ and $\cW_{\textnormal{BCO}}$ immediately improves upon Corollaries~\ref{cor:OCO-regular} and~\ref{cor:BCO-regular}, respectively.

\begin{corollary}\label{cor:skipping-OCO-BCO}
    Under the conditions of Corollaries~\ref{cor:OCO-regular} and~\ref{cor:BCO-regular}, applying the skipping wrapper yields:
    \begin{align*}
        &R_T(x^*) = O\bigg(\!GD \bigg[\!\min_{Q \subseteq [T]} \Big\{\!|Q|\!+\!\sqrt{\!\tsum_{t\notin Q} d_t\!}\Big\}\!+\!\sqrt{T}\bigg]\!\bigg),\\
        &\widebar{R}_T(x^*) = O\bigg(\!GD \bigg[\!\min_{Q \subseteq [T]} \Big\{\!|Q|\!+\!\sqrt{\!\tsum_{t\notin Q} d_t\!}\Big\}\!+\!T^{\frac{3}{4}}\sqrt{\!\nu \dk}\bigg]\!\bigg),
    \end{align*}
    for $\cS_{\textnormal{skip}}(\cW_{\textnormal{OCO}}(\cB))$ and $\cS_{\textnormal{skip}}(\cW_{\textnormal{BCO}}(\cB))$, respectively.
\end{corollary}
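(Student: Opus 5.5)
The plan is to compose Theorem~\ref{thm:skipping} with the earlier corollaries, treating $\cA = \cW_{\textnormal{OCO}}(\cB)$ (resp.\ $\cW_{\textnormal{BCO}}(\cB)$) as a delayed algorithm run on the modified problem $\{f'_t, d'_t\}_{t=1}^T$. Theorem~\ref{thm:skipping} gives $R_T(x^*) \le R'_T(x^*) + GD|Q^*|$. It therefore remains to bound $R'_T(x^*)$ by invoking Corollary~\ref{cor:OCO-regular} (resp.\ Corollary~\ref{cor:BCO-regular}) on the modified instance, yielding $O(GD[\sqrt{\dtot'} + \sqrt{T}])$ (resp.\ $O(GD[\sqrt{\dtot'} + T^{3/4}\sqrt{\nu\dk}])$). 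The second part of Theorem~\ref{thm:skipping} then gives $|Q^*| + \sqrt{\dtot'} = O(\min_Q\{|Q| + \sqrt{\sum_{t\notin Q} d_t}\})$, and summing the two bounds gives the claim.

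The first thing to check is that the modified instance satisfies the hypotheses of the corollaries with the same constants. The losses $f'_t$ are either $f_t$ or identically $0$. The zero function is convex and differentiable with gradient norm $0 \le G$, so Assumption~\ref{assump:gradient-norm-bound} holds. For bandit feedback, $|0| \le M$, so Assumption~\ref{assump:max-val} holds, and the Lipschitz constant is unchanged. The domain $\cK$ is also unchanged.

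The modified delays satisfy $d'_t \in \{0,\ldots,d_t\}$, so they are valid delays. The corollaries are fully adaptive: their learning rates depend only on observed dual-backlogs and $n$, and they need no knowledge of delays or $T$. Hence they apply verbatim to whatever delay sequence $\cA$ actually experiences, and the CTM quantities in their proofs are those of $\{d'_t\}$. In particular, by Theorem~\ref{thm:CTM}(a), the delay term is $\sqrt{\dtot'}$.

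The one subtle point, and the step I expect to be the main obstacle, is comparator consistency. Theorem~\ref{thm:skipping} bounds regret against an arbitrary $u$, but the corollaries are stated for $x^*$, the minimizer of the modified losses. I would handle this by noting that the proofs of Corollaries~\ref{cor:OCO-regular} and~\ref{cor:BCO-regular} go through Theorems~\ref{thm:OCO-decomp} and~\ref{thm:BCO-decomp}, which hold for every $u\in\cK$. The bounds obtained there use only $\tnorm{z_n-u}\le D$ and are therefore uniform in $u$. So I apply the corollary bound with $u = x^*$, the minimizer for the original losses, on the modified instance.

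In the bandit case there are two further points. First, skipped rounds forward dummy value $0$, which produces the estimate $\frac{\dk}{\delta_s}\cdot 0 \cdot u_s = 0$. This is a consistent one-point estimate for the zero loss, so Theorem~\ref{thm:SPGE} and the expectation arguments still apply. Second, the regret is now taken in expectation, and Theorem~\ref{thm:skipping} holds pathwise, so I take expectations on both sides. The quantities $|Q^*|$ and $\dtot'$ may be random through the algorithm's choices only if skipping depended on them. It does not: $S$ and $\cD_t$ depend only on arrival times, which are oblivious. So these quantities are deterministic and can be pulled out of the expectation.
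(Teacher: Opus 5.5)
Your proposal is correct and follows the paper's proof, which is the one-line composition of Theorem~\ref{thm:skipping} with Corollaries~\ref{cor:OCO-regular} and~\ref{cor:BCO-regular} applied to the modified instance $\{f'_t,d'_t\}_{t=1}^T$ to control $R'_T(x^*)$. Your extra checks fill in details the paper leaves implicit: that the zero losses and shortened delays are admissible, and that $Q^*$ and $d'_t$ are deterministic under the oblivious adversary, so expectations pass through.

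The comparator issue you flag is easier than you make it. The minimizer of $\sum_t f'_t$ maximizes $R'_T(\cdot)$, so any bound on regret against it already bounds $R'_T(x^*)$ for the original minimizer $x^*$, with no need to reopen the proofs.
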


The proofs are deferred to Appendix~\ref{app:skipping}. 

This skipping scheme applies in any setting where a zero loss is admissible, e.g., the general convex case. In the strongly convex case, one could instead skip rounds by substituting $\frac{\lambda}{2}x^2$ loss values. However, $\min_{Q\subseteq[T]}\{|Q|+(\sum_{t\notin Q} d_t)^{1/2}\} = \Omega(\sigmax)$ as shown by \citet{esposito2025}, so our strongly convex bound $O(\min\{\sigmax \ln T, \sqrt{\dtot}\})$ already matches this benchmark up to a $\ln T$ factor.

\section{Discussion and Future Work}

We introduced a continuous-time model for online learning with delayed feedback that places prediction and observation events on a single timeline. This observation-centric viewpoint leads to a black-box reduction to drift-penalized online linear optimization: wrapping any base algorithm with regret and stability guarantees yields a delayed learner whose regret is controlled by the base algorithm’s drift-penalized regret under the induced observation ordering.

Crucially, our continuous-time model provides a general framework for delayed feedback that extends beyond the specific scope of online convex optimization. It offers a modular template for handling delays in any online learning protocol where prediction stability and drift can be quantified, and we expect the reduction to be useful beyond the settings studied here.

Several directions remain open. First, our analysis assumes an oblivious adversary that fixes losses and delays in advance. Extending the framework to adaptive adversaries, where delays may depend on the learner’s actions, would be technically challenging and is relevant to congestion control and strategic environments. Second, delayed feedback is pervasive in distributed optimization with asynchronous communication. Our model may provide a useful abstraction for multi-agent online learning with heterogeneous, time-varying delays, including federated learning systems.

\section*{Acknowledgments and Funding}
Idan Attias is supported by the National Science Foundation under Grant ECCS-2217023, through the
Institute for Data, Econometrics, Algorithms, and Learning (IDEAL).
Daniel M. Roy is supported by the
funding through NSERC Discovery Grant and Canada CIFAR AI Chair at the Vector Institute.

\bibliography{example_paper}
\bibliographystyle{icml2026}

\newpage
\appendix
\onecolumn

\newpage

\section{Additional Related Work}\label{app:related-work}

\paragraph{Online convex optimization.}
Online convex optimization (OCO) and its feedback models have been studied extensively. 
Early work includes \citet{zinkevich2003}, who introduced online convex programming and established $O(\sqrt{T})$-type guarantees for online gradient methods.
For first-order feedback, subsequent work established the standard $\Theta(\sqrt{T})$ minimax scaling for general convex losses (see, e.g., \citealp{abernethy2008}). 
For $\lambda$-strongly convex losses, logarithmic regret is achievable; in particular, \citet{hazan2007} gives $O(\ln T)$-type guarantees (matching the $\Omega(\ln T)$ lower bound of \citealp{abernethy2008}). 
Background and further developments can be found in \citet{hazan2023introductiononlineconvexoptimization, orabona_book}.

\paragraph{Bandit convex optimization.}
Bandit convex optimization (BCO) was initiated by \citet{flaxman2004}, who introduced the classical single-point gradient estimator and obtained regret of order $O(T^{3/4}\dk^{1/2})$.
Subsequently, \citet{agarwal2010} developed optimal algorithms for multi-point bandit feedback and sharpened guarantees: in particular, for two-point feedback they obtain $\widetilde{O}(T^{1/2}\dk^2)$ regret for convex losses and $O(\dk^2\ln T)$ regret under strong convexity; for single-point feedback, they also note improved rates under additional curvature, including a $T^{2/3}$-type guarantee for strongly convex losses (more precisely, $O((T^2 \ln T)^{1/3} \dk^{2/3})$ up to problem-dependent constants).
Complementary results for stochastic convex optimization under bandit feedback appear in \citet{agarwal2011stochasticconvexoptimizationbandit}.
For two-point feedback, \citet{shamir2015} gives an optimal algorithm with improved dimension dependence with regret $O(T^{1/2}\dk^{1/2})$.
A line of work designs (nearly) minimax-optimal algorithms for general BCO, approaching the $\Omega(\dk\sqrt{T})$ lower bound of \citet{shamir2013}; see, e.g., \citet{bubeck2015multiscaleexplorationconvexfunctions, hazan2016optimalalgorithmbanditconvex, lattimore2020improvedregretzerothorderadversarial}.
These methods typically incur prohibitive computational costs (exponential or high-degree polynomial dependence on $\dk$ and/or $T$), and we refer to \citet{lattimore_BCO_book} for a detailed treatment and further references.

\paragraph{Online convex optimization with delayed feedback.}
Early work on online learning with delayed feedback includes \citet{weinberger2002}. 
A systematic treatment of delayed-feedback online learning and reduction-style approaches appears in \citet{joulani2013}, and \citet{joulani2016} develops a black-box-style reduction that processes feedback in order of arrival.
Beyond black-box techniques, there is substantial work on algorithms tailored to delayed OCO (e.g., \citealp{langford2009slowlearnersfast, mcmahan2014, quanrud2015, flaspohler2021, wan2021onlinestronglyconvexoptimization, wu2024, esposito2025}). 
Delayed BCO has also received focused attention, starting from delayed extensions of bandit gradient methods \citep{heliou2020} and followed by delay-adaptive or improved-delay-dependence results \citep{bistritz2022,wan2024improvedregretbanditconvex}.

\paragraph{Delays in related sequential decision problems.}
Delayed observations have been studied extensively in multi-armed bandits and related models (see, e.g., \citealp{cesa16, thune2019, bistritz2022, esposito2023, masoudian2024, masoudian2024best}). 
Delays also arise naturally under resource limitations such as capacity constraints \citep{capacity_constraint2025}. Delayed feedback has also been studied in adversarial contextual bandits with general function approximation \citep{levy2025regretboundsadversarialcontextual}.
Delays also appear in reinforcement learning / MDP settings with delayed or asynchronous feedback (e.g., \citealp{bouteiller2020reinforcement, lancewicki2022, jin2023, vanderhoeven2023}).

\section{General Facts}

\begin{fact}\label{fact:sc-diameter}
    Under Assumptions~\ref{assump:gradient-norm-bound} and \ref{assump:strong-convexity}, the diameter of $\cK$ is bounded by $\frac{2G}{\lambda}$, i.e., $\sup_{x,y\in\cK} \norm{x-y} \le \frac{2 G}{\lambda}$.
\end{fact}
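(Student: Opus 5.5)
The fact to prove is that under Assumptions~\ref{assump:gradient-norm-bound} and~\ref{assump:strong-convexity}, $\sup_{x,y\in\cK}\norm{x-y}\le 2G/\lambda$. The plan is to apply strong convexity of a single loss $f_1$ in both directions between two arbitrary points $x,y\in\cK$. Adding the two inequalities cancels the function values, and what remains is a quadratic term controlled by a bilinear pairing of gradients with $x-y$.

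First, write Assumption~\ref{assump:strong-convexity} at $(x,y)$:
\[
f_1(y)\ge f_1(x)+\pair{\nabla f_1(x),y-x}+\tfrac{\lambda}{2}\norm{y-x}^2 .
\]
Then write it at $(y,x)$:
\[
f_1(x)\ge f_1(y)+\pair{\nabla f_1(y),x-y}+\tfrac{\lambda}{2}\norm{x-y}^2 .
\]
Summing the two and cancelling $f_1(x)+f_1(y)$ gives
\[
\lambda\norm{x-y}^2\le \pair{\nabla f_1(x)-\nabla f_1(y),\,x-y}.
\]

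Next, bound the right-hand side using the definition of the dual norm, then the triangle inequality, then Assumption~\ref{assump:gradient-norm-bound}:
\[
\pair{\nabla f_1(x)-\nabla f_1(y),\,x-y}\le \tnorm{\nabla f_1(x)-\nabla f_1(y)}_{\star}\norm{x-y}\le 2G\norm{x-y}.
\]

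If $x=y$ there is nothing to prove. Otherwise, dividing by $\lambda\norm{x-y}>0$ yields $\norm{x-y}\le 2G/\lambda$, and taking the supremum over $x,y\in\cK$ finishes the argument.

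No step is a real obstacle. The one point to check is that the argument needs $T\ge 1$, so that at least one loss $f_1$ exists to apply the assumptions to; this holds since $T\in\N$. Differentiability on all of $\cK$ is assumed, so the gradients at $x$ and $y$ are well-defined.
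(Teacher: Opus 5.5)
Your proof is correct and follows the same route as the paper, which also bounds $\lambda\norm{x-y}^2 \le \pair{\nabla f(x)-\nabla f(y),\, x-y} \le 2G\norm{x-y}$ for a single loss $f=f_t$ and divides through. The only difference is that you derive this monotonicity inequality by summing the two strong-convexity inequalities, whereas the paper cites it as a standard fact.
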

\begin{proof}
    Consider $f = f_t$ for arbitrary $t\in[T]$. From the $\lambda$-strong convexity of $f$ (e.g., see \citet{orabona_book}), it holds that for all $x,y \in \cK$, $\lambda \norm{x - y}^2 \le \pair{\nabla f(x) - \nabla f(y),\, x - y} \le 2G \norm{x - y}$. Hence, $\sup_{x,y\in\cK} \norm{x-y} \le \frac{2 G}{\lambda} < \infty$.
\end{proof}

\begin{fact}[Rearrangement Inequality]\label{fact:rearrangement}
    For every choice of $N \in \N$, real numbers $x_1 \le x_2 \le ... \le x_N$, $y_1 \le y_2 \le ... \le y_N$, and index permutation $\rho:[N]\to[N]$, it holds that
    \begin{align*}
        x_1 y_N + ... + x_N y_1 \le x_1 y_{\rho(1)} + ... x_N y_{\rho(N)} \le x_1 y_1 + ... + x_N y_N.
    \end{align*}
\end{fact}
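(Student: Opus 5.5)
The statement is the rearrangement inequality (Fact~\ref{fact:rearrangement}). I will prove it by an exchange argument, which handles both bounds with the same swap step.

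\textbf{Upper bound.} Let $S(\rho)=\sum_{i=1}^N x_i y_{\rho(i)}$. Suppose $\rho$ is not the identity. Then there exist indices $i<j$ with $\rho(i)>\rho(j)$; an inversion always exists, e.g.\ take $i$ to be the first index with $\rho(i)\neq i$. Let $\rho'$ be $\rho$ with the values at $i$ and $j$ swapped. Then
\begin{align*}
S(\rho')-S(\rho) = (x_j-x_i)\bigl(y_{\rho(i)}-y_{\rho(j)}\bigr)\ge 0,
\end{align*}
because $x_j\ge x_i$ and $y_{\rho(i)}\ge y_{\rho(j)}$. Each such swap reduces the number of inversions strictly, since swapping an inverted pair does. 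After finitely many swaps we reach the identity, and $S$ never decreases along the way. Hence $S(\rho)\le S(\mathrm{id})=\sum_i x_i y_i$.

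\textbf{Lower bound.} Apply the upper bound to the sequences $x_1\le\dots\le x_N$ and $y'_k=-y_{N+1-k}$. The $y'_k$ are nondecreasing, since $k\mapsto y_{N+1-k}$ is nonincreasing. Use the permutation $\rho'(i)=N+1-\rho(i)$. Then
\begin{align*}
\sum_i x_i y'_{\rho'(i)} = -\sum_i x_i y_{\rho(i)} \le \sum_i x_i y'_i = -\sum_i x_i y_{N+1-i}.
\end{align*}
Negating both sides gives $\sum_i x_i y_{N+1-i}\le \sum_i x_i y_{\rho(i)}$, which is the lower bound.

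\textbf{Main point to check.} The only step needing care is that the exchange process terminates at the identity, so that the monotone chain of swaps actually reaches $S(\mathrm{id})$. This follows because the inversion count is a nonnegative integer that strictly decreases with each swap. Alternatively, one can induct by successively placing $\rho(k)=k$ for $k=1,2,\dots$, which gives termination directly.
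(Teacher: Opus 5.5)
Your proof is correct, but it takes a different route from the paper, which gives no argument at all and simply cites Theorem 368 of Hardy, Littlewood and P\'olya.

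Your exchange argument is self-contained and sound:
\begin{itemize}
\item The swap identity $S(\rho')-S(\rho)=(x_j-x_i)\bigl(y_{\rho(i)}-y_{\rho(j)}\bigr)\ge 0$ is right.
\item An inverted pair exists whenever $\rho\neq\mathrm{id}$. If $i$ is the first index with $\rho(i)\neq i$, then $\rho(i)>i$, and the position $j>i$ holding the value $i$ gives the inversion. You leave $j$ implicit, which is harmless.
\item Termination holds even for non-adjacent swaps. Swapping the values at an inverted pair $i<j$ lowers the inversion count by exactly $1+2k$, where $k$ counts positions strictly between $i$ and $j$ whose values lie strictly between $\rho(j)$ and $\rho(i)$.
\item Your alternative of fixing $\rho(k)=k$ for $k=1,2,\dots$ avoids even that check. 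Each step swaps position $k$ with the position $j>k$ holding the value $k$, and this pair is inverted.
\item The reflection $y'_k=-y_{N+1-k}$, $\rho'(i)=N+1-\rho(i)$ correctly reduces the lower bound to the upper bound, so you need only one exchange argument.
\end{itemize}

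The citation buys brevity for a classical fact. Your route costs a few lines and makes the appendix self-contained for a fact the paper uses only once, to obtain $\sum_{n}\rho(n)/n\ge T$.
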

\begin{proof}
    This fact restates Theorem 368 in \cite{Hardy1952}.
\end{proof}

\begin{fact}[Logarithmic Telescoping Inequality]\label{fact:log-telescope}
    Let $(X_n)_{n\ge 1}$ be non-negative reals such that $X_1 > 0$.\\ Set $S_n = \sum_{m=1}^n X_m$. Then, for every $N\ge 1$, we have
    \begin{align*}
    \tsum_{n=1}^N \tfrac{X_n}{S_n} \le \ln(e S_N/X_1).
    \end{align*}
\end{fact}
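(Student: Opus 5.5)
The plan is to compare the sum with the integral of $1/x$, term by term, via the elementary inequality $1 - y \le -\ln y$ for $y > 0$. The first term is handled separately: it satisfies $X_1/S_1 = 1$, and this accounts for the additive $1 = \ln e$ on the right-hand side. The remaining terms $n \ge 2$ should telescope to $\ln(S_N/S_1) = \ln(S_N/X_1)$.

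For $n \ge 2$, note first that $S_n \ge S_1 = X_1 > 0$, since all $X_m \ge 0$. Hence every ratio is well defined and $S_{n-1} > 0$. I would then write
\begin{align*}
\tfrac{X_n}{S_n} = \tfrac{S_n - S_{n-1}}{S_n} = 1 - \tfrac{S_{n-1}}{S_n}.
\end{align*}
Applying $1 - y \le -\ln y$ with $y = S_{n-1}/S_n \in (0,1]$ gives $X_n/S_n \le \ln S_n - \ln S_{n-1}$. The inequality itself follows from concavity of $\ln$, i.e.\ $\ln y \le y - 1$.

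Summing over $n = 2, \ldots, N$ telescopes to $\ln S_N - \ln S_1 = \ln(S_N/X_1)$. Adding the first term $X_1/S_1 = 1$ then yields
\begin{align*}
\tsum_{n=1}^N \tfrac{X_n}{S_n} \le 1 + \ln(S_N/X_1) = \ln(e S_N/X_1).
\end{align*}
The case $N = 1$ is immediate, since the sum equals $1 = \ln(e)$.

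There is no substantial obstacle. The only care needed is the positivity bookkeeping: $X_1 > 0$ is exactly what keeps every $S_n$ positive, so all logarithms and ratios are defined. Zero increments $X_n = 0$ are harmless, because they contribute $0$ on both sides of the per-term inequality.
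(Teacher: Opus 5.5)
Your proof is correct and is essentially the paper's argument. The paper bounds $X_n/S_n \le \ln S_n - \ln S_{n-1}$ via $\ln(1+u) \ge u/(1+u)$ with $u = X_n/S_{n-1}$. That is the same as your inequality $1 - y \le -\ln y$ under the substitution $y = S_{n-1}/S_n = 1/(1+u)$. The paper then telescopes over $n \ge 2$ and adds the first term $X_1/S_1 = 1$, exactly as you do.
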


\begin{proof}
    Since $X_1 > 0$, all $S_n > 0$. For $n\ge 2$, since $S_n=S_{n-1}+X_n$,
    \begin{align*}
        \ln S_n - \ln S_{n-1}
        &= \ln\!\Bigl(1+\tfrac{X_n}{S_{n-1}}\Bigr)
        \;\ge\; \frac{\tfrac{X_n}{S_{n-1}}}{1+\frac{X_n}{S_{n-1}}}
        \;=\; \tfrac{X_n}{S_n},
    \end{align*}
    using $\ln(1+u)\ge \frac{u}{1+u}$ for $u\ge 0$. Summing from $n=1$ to $N$ gives
    \begin{align*}
    \tsum_{n=1}^N \tfrac{X_n}{S_n}
    \le 1 + \tsum_{n=2}^N (\ln S_n - \ln S_{n-1})
    = 1 + \ln S_N - \ln S_1 
    = \ln (e S_N / X_1).
    \end{align*}
     This concludes the proof.
\end{proof}

\begin{fact}[Square-Root Telescoping Inequality]\label{fact:sqrt-telescope}
     Let $(X_n)_{n\ge 1}$ be non-negative reals and $X_1 > 0$.\\ Set $S_n = \sum_{m=1}^n X_m$. Then, for every $N\ge 1$, we have
    \begin{align*}
    \tsum_{n=1}^N \tfrac{X_n}{\sqrt{S_n}} \le 2 \sqrt{S_N} - \sqrt{S_1}.
    \end{align*}
\end{fact}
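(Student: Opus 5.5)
We argue term by term, using concavity of the square root in the form $\sqrt{S_n}-\sqrt{S_{n-1}}$, with the convention $S_0=0$. For each $n\ge 1$ we have $S_n=S_{n-1}+X_n$, and hence
\begin{align*}
\sqrt{S_n}-\sqrt{S_{n-1}} = \frac{X_n}{\sqrt{S_n}+\sqrt{S_{n-1}}} \ge \frac{X_n}{2\sqrt{S_n}},
\end{align*}
because $\sqrt{S_{n-1}}\le\sqrt{S_n}$. All denominators are positive since $S_n\ge X_1>0$. This gives the per-term bound $\tfrac{X_n}{\sqrt{S_n}}\le 2(\sqrt{S_n}-\sqrt{S_{n-1}})$ for every $n\ge 2$.

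The first term needs separate treatment to obtain the sharper constant $-\sqrt{S_1}$ rather than $0$. Since $S_1=X_1$, we have $\tfrac{X_1}{\sqrt{S_1}}=\sqrt{S_1}$ exactly.

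Summing, the terms for $n=2,\dots,N$ telescope to $2(\sqrt{S_N}-\sqrt{S_1})$. Adding the first term gives
\begin{align*}
\tsum_{n=1}^N \tfrac{X_n}{\sqrt{S_n}} \le \sqrt{S_1} + 2\sqrt{S_N} - 2\sqrt{S_1} = 2\sqrt{S_N}-\sqrt{S_1}.
\end{align*}
When $N=1$ the sum over $n\ge 2$ is empty, and the bound reads $\sqrt{S_1}\le 2\sqrt{S_1}-\sqrt{S_1}$, which holds with equality.

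The only delicate point is handling the $n=1$ term exactly rather than through the telescoping bound, which would lose the $-\sqrt{S_1}$. The remaining steps are routine, mirroring the proof of Fact~\ref{fact:log-telescope}.
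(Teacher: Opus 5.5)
Your proof is correct and follows the paper's argument: you rationalize $\sqrt{S_n}-\sqrt{S_{n-1}}$ to get $\tfrac{X_n}{\sqrt{S_n}} \le 2(\sqrt{S_n}-\sqrt{S_{n-1}})$ for $n\ge 2$, keep the first term exactly as $\sqrt{S_1}$, and telescope. This is precisely how the paper obtains $2\sqrt{S_N}-\sqrt{S_1}$.
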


\begin{proof}
    Since $X_1 > 0$, all $S_n > 0$. For $n\ge 2$, since $S_n = S_{n-1}+X_n$, it holds that
    \begin{align*}
    \sqrt{S_n}-\sqrt{S_{n-1}}
    = \frac{S_n-S_{n-1}}{\sqrt{S_n}+\sqrt{S_{n-1}}}
    = \frac{X_n}{\sqrt{S_n}+\sqrt{S_{n-1}}}
    \ge \frac{X_n}{2\sqrt{S_n}}.
    \end{align*}
    Summing over $n=1,\dots,N$, multiplying by $2$, and telescoping,
    \begin{align*}
    \tsum_{n=1}^N \tfrac{X_n}{\sqrt{S_n}}
    \le \sqrt{S_1} + 2\tsum_{n=2}^N \bigl(\sqrt{S_n}-\sqrt{S_{n-1}}\bigr)
    = 2 \sqrt{S_N} - \sqrt{S_1}.
    \end{align*}
     This concludes the proof.
\end{proof}

\section{Technical Results for the Continuous Time Model: Proofs}\label{app:CTM}

This section establishes the technical results concerning the CTM (Definition~\ref{def:CTM}). Recall the observation-ordering convention (Definition~\ref{def:observation-ordering}): $\widetilde{\mathrm{a}}_n = \mathrm{a}_{\rho(n)}$, where $\rho$ is the permutation of $[T]$ satisfying $\epb_{\rho(1)} \le \cdots \le \epb_{\rho(T)}$. Fact~\ref{fact:dual-relation} serves as a central tool throughout this section, connecting delays and backlogs with their duals. Facts~\ref{fact:dual-sums} and~\ref{fact:backlog-increments} are used in the proof of Corollary~\ref{cor:BCO-regular}, while Fact~\ref{fact:telescopic-sums-with-duals} plays a key role in the analysis of delay-adaptive learning rates.

\vspace{0.5cm}

\begin{fact}\label{fact:dual-relation}
    Define $\beta(t) = |\{s : \epb_s \le \epb_t\}|$. For all $t \in [T]$, $\sigdual_t = d_t + t - \beta(t)$ and $\ddual_t = \sigma_t + \beta(t) - t$. Furthermore, $\beta = \rho^{-1}$ as permutations of $[T]$, and for all $n \in [T]$, $\wtsigdual_n = \wtd_n + \rho(n) - n$ and $\wtddual_n = \wtsig_n + n - \rho(n)$.
\end{fact}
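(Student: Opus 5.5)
The plan is to count, for a fixed index $t$, the events in the interval $I_t = (\epa_t,\epb_t)$ and at its endpoints in two different ways. The basic tool is the number of observation times preceding $\epb_t$.

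By definition of $\beta$ and distinctness of timestamps,
\[
|\{s : \epb_s < \epb_t\}| = \beta(t)-1 .
\]
We split these indices $s$ according to whether $\epa_s < \epa_t$, $s = t$, or $\epa_s > \epa_t$. The case $s=t$ is impossible, since $\epb_t \not< \epb_t$. Since prediction times are ordered, $\epa_s<\epa_t$ iff $s<t$.

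\emph{Indices with $s<t$.} Among the $t-1$ indices $s<t$, those with $\epb_s>\epb_t$ are exactly those whose interval contains $\epb_t$; indeed $\epa_s<\epa_t<\epb_t$ automatically. Their number is therefore $\sigdual_t$. So the count of $s<t$ with $\epb_s<\epb_t$ equals $t-1-\sigdual_t$.

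\emph{Indices with $s>t$.} Those with $\epb_s<\epb_t$ satisfy $\epa_t<\epa_s<\epb_s<\epb_t$. Summing the two contributions gives
\[
\beta(t)-1 = (t-1-\sigdual_t) + |\{s>t : \epb_s<\epb_t\}| .
\]

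\emph{The dual-backlog identity.} Among the $d_t$ indices $s$ with $\epa_s\in I_t$ (necessarily $s>t$), we separate those with $\epb_s<\epb_t$ from those with $\epb_s>\epb_t$. The latter are exactly the $s>t$ whose interval contains $\epb_t$. Combining this with the count of $s<t$ gives $\sigdual_t = (t-1-\#\{s<t:\epb_s<\epb_t\}) + \#\{s>t: \epb_s>\epb_t,\ \epa_s<\epb_t\}$. The cleanest bookkeeping is then to write
\[
\sigdual_t = |\{s: \epa_s<\epb_t\}| - |\{s:\epb_s<\epb_t\}| - 1 .
\]
The first term counts prediction times before $\epb_t$, which equals $t+d_t$ by Lemma~\ref{lem:CTM-props}. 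The second term equals $\beta(t)-1$, and the $-1$ removes $s=t$ itself. This step is justified because every $s$ with $\epb_s<\epb_t$ also has $\epa_s<\epb_t$, and $s$ is outstanding at $\epb_t$ exactly when $\epa_s<\epb_t<\epb_s$. The result is $\sigdual_t = t+d_t-\beta(t)$.

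\emph{The dual-delay identity.} Symmetrically,
\[
\ddual_t = |\{s:\epb_s<\epb_t\}| - |\{s:\epb_s<\epa_t\}| .
\]
The first term is $\beta(t)-1$. For the second, we use $|\{s:\epb_s<\epa_t\}| = |\{s:\epa_s<\epa_t\}| - |\{s:\epa_s<\epa_t<\epb_s\}| = (t-1)-\sigma_t$, again using that $\epb_s<\epa_t$ forces $\epa_s<\epa_t$. Subtracting gives $\ddual_t = \sigma_t+\beta(t)-t$.

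\emph{The permutation statement.} By Definition~\ref{def:observation-ordering}, $\beta(\rho(n)) = |\{s:\epb_s\le \epb_{\rho(n)}\}| = n$, so $\beta=\rho^{-1}$. Substituting $t=\rho(n)$ into the two identities yields the observation-ordered forms $\wtsigdual_n = \wtd_n+\rho(n)-n$ and $\wtddual_n = \wtsig_n + n-\rho(n)$.

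The main obstacle is only careful bookkeeping. It consists of making sure that strict versus non-strict inequalities and the self-index $s=t$ are handled correctly, and that each "observation before time $x$" set is a subset of the corresponding "prediction before time $x$" set so that the subtractions are valid.
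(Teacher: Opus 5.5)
Your final derivations, the ``cleanest bookkeeping'' partitions $\{s:\epa_s<\epb_t\}=\{t\}\cup\{s:\epb_s<\epb_t\}\cup\{s:\epb_t\in I_s\}$ and $\{s:\epb_s<\epb_t\}=\{s:\epb_s<\epa_t\}\cup\{s:\epb_s\in I_t\}$ (all disjoint) followed by $\beta(\rho(n))=n$, are correct. They are essentially the paper's proof, which writes $t+d_t=\beta(t)+\sigdual_t$ and $t-\sigma_t=|\{s:\epb_s<\epa_t\}\cup\{t\}|=\beta(t)-\ddual_t$, using non-strict inequalities where you use an explicit $-1$ for $s=t$.

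The preliminary paragraph on indices $s<t$ is false and should be deleted. The number of $s<t$ with $\epb_s>\epb_t$ is $|\{s<t:\epb_t\in I_s\}|$, not $\sigdual_t$. The reason is that $\sigdual_t$ also counts indices $s>t$ with $\epa_t<\epa_s<\epb_t<\epb_s$.

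Take Figure~\ref{fig:illustration} with $t=2$. The index $s=4$ is of this kind, so $\sigdual_2=2$, while only $s=1$ precedes $t$. Your formula then gives $t-1-\sigdual_t=-1$ for a count that is actually $0$, and the display $\beta(t)-1=(t-1-\sigdual_t)+|\{s>t:\epb_s<\epb_t\}|$ reads $1=0$.

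Your next paragraph already corrects this implicitly by adding the $s>t$ term, and nothing downstream uses the faulty display. The proof stands once that paragraph is removed.
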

\begin{proof}
    From the definition of delay and backlog \eqref{eq:delay-backlog-defn} as induced by the CTM, we have
    \begin{align*}
        t + d_t &= |\{s: \epa_s < \epb_t\}| = |\{s: \epb_s \le \epb_t\}| + |\{s: \epa_s < \epb_t < \epb_s\}| = \beta(t) + \sigdual_t,\\
        t - \sigma_t &= |\{s:\epb_s < \epa_t\}\cup\{t\}| = |\{s: \epb_s \le \epb_t\}| - |\{s: \epa_t < \epb_s < \epb_t \}| = \beta(t) - \ddual_t.
    \end{align*}
    Note that $\beta = \rho^{-1}$ as permutations of $[T]$, since $\beta(\rho(n)) = |\{s : \epb_s \le \epb_{\rho(n)}\}| = n$ for all $n \in [T]$.
    
    The remaining identities follow by substituting $t = \rho(n)$.
\end{proof}

\vspace{0.2cm}

\restatetheorem{thm:CTM}{
    For quantities in \eqref{eq:delay-backlog-defn} and \eqref{eq:dual-delay-backlog-defn}, it holds that:
    \begin{enumerate}[label=(\alph*),ref=(\alph*),noitemsep,leftmargin=1cm,before=\vspace{-0.3cm},after=\vspace{-0.3cm}]
        \item $\sum_{t=1}^T d_t = \sum_{t=1}^T \sigma_t = \sum_{t=1}^T \ddual_t = \sum_{t=1}^T \sigdual_t$,
        \item $\sigmax = \sigdualmax, \quad \tfrac{1}{2} \dmax \le \ddualmax \le 2\dmax$,
        \item $\ddual_t = \sigma_t + \beta(t) - t, \quad \sigdual_t = d_t + t - \beta(t)$,
    \end{enumerate}
    where $\beta:[T]\to[T]$ is the permutation $\beta(t)=|\{s:\epb_s\le\epb_t\}|$.
}
\begin{proof}
    We address each claim separately.
    \begin{enumerate}[label=(\alph*),ref=(\alph*),before=\vspace{-0.2cm},after=\vspace{-0.2cm}]
        \item To show the classical result $\tsum_{t=1}^T d_t = \tsum_{t=1}^T \sigma_t$  simply swap the order of summation:
        \begin{align*}
            \tsum_{t=1}^T d_t = \tsum_{t=1}^T \tsum_{s=1}^T \ind(l_s \in I_t) =  \tsum_{s=1}^T \tsum_{t=1}^T \ind(l_s \in I_t) = \tsum_{s=1}^T \sigma_s.
        \end{align*}
        The identities $\tsum_{t=1}^T \sigdual_t = \tsum_{t=1}^T d_t$ and $\tsum_{t=1}^T \ddual_t = \tsum_{t=1}^T \sigma_t$ follow immediately from Fact~\ref{fact:dual-relation}, because $\beta$ is a permutation of $[T]$ and $\sum_{t=1}^T \beta(t) = \sum_{t=1}^T t$.

        \item 
        To show that $\sigdualmax \le \sigmax$, note that $r_t \in (l_{t+d_t}, l_{t+d_t+1})$ by Lemma~\ref{lem:CTM-props}, and so 
        \begin{align*}
            \{s: \epb_t \in I_s\} \subseteq (\{t+d_t\} \cup \{s: \epa_{t+d_t} \in I_s\}) \setminus \{t\}.
        \end{align*}
        Hence, $\sigdual_t \le \sigma_{t+d_t} + 1 - 1 = \sigma_{t+d_t}$ for every $t \in [T]$, which means $\sigdualmax \le \sigmax$.

        To show that $\sigdualmax \ge \sigmax$, note that $\wtepa_n \in (\wtepb_{n-\wtddual_n-1}, \wtepb_{n-\wtddual_n})$ by Lemma~\ref{lem:CTM-dual-props}, and so
        \begin{align*}
            \{m: \wtepa_n \in \wtI_m\} \subseteq (\{n - \wtddual_n\} \cup \{m : \wtepb_{n-\wtddual_n} \in \wtI_m\})\setminus\{n\}.
        \end{align*}
        Hence, $\wtsig_n \le \wtsigdual_{n-\wtddual_n} + 1 - 1 = \wtsigdual_{n-\wtddual_n}$ for every $n \in [T]$, which means $\sigmax \le \sigdualmax$.

        To show that $\ddualmax \le 2\dmax$, note that for each $t \in [T]$, $\{s: \epb_s \in I_t\} \subseteq \{s: \epa_s \in I_t\} \cup \{s : \epa_t \in I_s\}$. Hence, $\ddualmax \le \max_t (\sigma_t + d_t) \le \sigmax + \dmax \le 2\dmax$, where the result $\sigmax \le \dmax$ can be inferred from Lemma~\ref{lem:CTM-props}.

        To show that $\dmax \le 2\ddualmax$, note that for each $t \in [T]$, $\{s: \epa_s \in I_t\} \subseteq \{s: \epb_s \in I_t\} \cup \{s : \epb_t \in I_s\}$. Hence, $\dmax \le \max_t (\ddual_t + \sigdual_t) \le \ddualmax + \sigdualmax \le 2\ddualmax$, where the result $\sigdualmax \le \ddualmax$ can be inferred from Lemma~\ref{lem:CTM-dual-props}.

        \item These identities are established in Fact~\ref{fact:dual-relation}. \qedhere
    \end{enumerate}
\end{proof}

\vspace{0.2cm}

\begin{fact}\label{fact:dual-sums}
    For every $N \in [T]$, it holds that
    \begin{align*}
        \tsum_{n=1}^N \wtddual_n \le \tsum_{n=1}^N \wtsigdual_n.
    \end{align*}
\end{fact}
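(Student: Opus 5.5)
Both sums count, over the first $N$ observations, the pairs (interval, observation time) with $\wtepb_m$ lying inside the interval. The left side counts such pairs where the interval is one of the first $N$ intervals, while the right side counts those where the observation time is one of the first $N$. An injection from the first family into the second finishes the proof.

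\emph{Left side.} By the definition \eqref{eq:dual-delay-backlog-defn} in observation-ordered form,
\[
\tsum_{n=1}^N \wtddual_n = \big|\{(n,m) : n\le N,\ \wtepb_m \in \wtI_n\}\big|,
\]
where $\wtI_n = (\wtepa_n,\wtepb_n)$. The condition $\wtepb_m < \wtepb_n$ is equivalent to $m<n$, so every such pair satisfies $m < n \le N$. Hence
\[
\tsum_{n=1}^N \wtddual_n = \big|\{(n,m) : m<n\le N,\ \wtepa_n<\wtepb_m\}\big|.
\]

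\emph{Right side.} Similarly,
\[
\tsum_{m=1}^N \wtsigdual_m = \big|\{(m,n) : m\le N,\ \wtepb_m\in \wtI_n\}\big| = \big|\{(m,n): m\le N,\ m<n,\ \wtepa_n<\wtepb_m\}\big|.
\]
Here $n$ ranges over all of $[T]$ with $n>m$, and is not restricted to $n\le N$.

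\emph{Comparison.} The first set is exactly the subset of the second set in which additionally $n\le N$. Note that $m<n\le N$ already forces $m\le N$. So the inclusion is immediate and the inequality follows. Equivalently, the difference between the right and left sides equals $|\{(m,n): m\le N< n,\ \wtepa_n<\wtepb_m\}|\ge 0$, which counts the intervals that are still open across the $N$-th observation time.

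The only step needing care is converting the interval memberships into index inequalities via the observation ordering, together with the distinctness of timestamps. Once that is done, the argument is a pure double-counting, so I expect no real obstacle.
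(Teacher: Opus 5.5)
Your proof is correct and is essentially the paper's argument. The paper writes the difference as a double sum of indicator differences, cancels the block $m,n\le N$ by symmetry, and uses $\wtepb_m\notin\wtI_n$ for $m>N\ge n$. This leaves exactly your residual count $|\{(m,n): m\le N<n,\ \wtepa_n<\wtepb_m\}|\ge 0$, so your set-inclusion version is the same double counting.
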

\begin{proof}
    For every $N \in [T]$, we can write
    \begin{align*}
        \tsum_{n=1}^N (\wtsigdual_n - \wtddual_n) 
        &= \tsum_{n=1}^N \tsum_{m=1}^T (\ind(\wtepb_n \in \wtI_m) - \ind(\wtepb_m \in \wtI_n))\\
        &= \tsum_{n=1}^N \tsum_{m=1}^N (\ind(\wtepb_n \in \wtI_m) - \ind(\wtepb_m \in \wtI_n))\\
        &+ \tsum_{n=1}^N \tsum_{m=N+1}^T (\ind(\wtepb_n \in \wtI_m) - \ind(\wtepb_m \in \wtI_n))\\
        &= 0 + \tsum_{n=1}^N \tsum_{m=N+1}^T  \ind(\wtepb_n \in \wtI_m),
    \end{align*}
    where the final equality used symmetry and the fact that $\wtepb_m \notin \wtI_n$ for $m > N \ge n$. 
\end{proof}

\vspace{0.2cm}

\begin{fact}\label{fact:backlog-increments}
    For every $t\in [T-1]$ and $n\in[T-1]$, it holds that
    \begin{align*}
        \sigma_{t+1} \le \sigma_t + 1, \quad \wtsigdual_{n+1} \ge \wtsigdual_{n} - 1.
    \end{align*}
\end{fact}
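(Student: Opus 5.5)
Both inequalities follow from the set-theoretic definitions of the backlog and dual-backlog in \eqref{eq:delay-backlog-defn} and \eqref{eq:dual-delay-backlog-defn}, by comparing the sets counted at consecutive events on the timeline. For the first claim, recall $\sigma_t = |\{s : \epa_t \in I_s\}|$, i.e., the number of intervals $I_s=(\epa_s,\epb_s)$ with $\epa_s < \epa_t < \epb_s$. I will show
\begin{align*}
    \{s : \epa_{t+1} \in I_s\} \subseteq \{s : \epa_t \in I_s\} \cup \{t\}.
\end{align*}
Take $s \neq t$ with $\epa_s < \epa_{t+1} < \epb_s$. Since prediction times are strictly increasing and $\epa_s<\epa_{t+1}$, we get $s \le t$, and $s\neq t$ gives $s \le t-1$, hence $\epa_s < \epa_t$. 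Also $\epa_t < \epa_{t+1} < \epb_s$, so $\epa_t \in I_s$. Taking cardinalities yields $\sigma_{t+1} \le \sigma_t + 1$.

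For the second claim, I work in observation order with $\wtsigdual_n = |\{m : \wtepb_n \in \wtI_m\}|$, which counts $m$ with $\wtepa_m < \wtepb_n < \wtepb_m$. The mirror-image inclusion is
\begin{align*}
    \{m : \wtepb_n \in \wtI_m\} \subseteq \{m : \wtepb_{n+1} \in \wtI_m\} \cup \{n+1\}.
\end{align*}
Take $m \neq n+1$ with $\wtepa_m < \wtepb_n < \wtepb_m$. From $\wtepb_m > \wtepb_n$ we get $m \ge n+1$, and $m\ne n+1$ gives $m \ge n+2$, so $\wtepb_m > \wtepb_{n+1}$. Also $\wtepa_m < \wtepb_n < \wtepb_{n+1}$, so $\wtepb_{n+1} \in \wtI_m$. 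Hence $\wtsigdual_n \le \wtsigdual_{n+1} + 1$.

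The argument uses only that all timestamps are distinct and that both index orderings are strictly increasing (Definitions~\ref{def:CTM} and~\ref{def:observation-ordering}). The only point needing care is handling the boundary element ($t$, resp.\ $n+1$), which is the single interval that can enter or exit the count between the two consecutive events; it accounts for the $\pm 1$ in each inequality.
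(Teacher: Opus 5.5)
Your proof is correct and follows essentially the same argument as the paper. The paper rewrites $\sigma_{t+1} = |\{s \le t : \epb_s > \epa_{t+1}\}|$, relaxes the threshold to $\epa_t$, and notes that $s=t$ accounts for the extra $+1$; it handles $\wtsigdual_n$ symmetrically, and your set inclusion with the boundary element isolated is exactly this comparison.
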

\begin{proof}
    Using the definition of backlog and dual-backlog, we write
    \begin{align*}
        \sigma_{t+1} &= |\{s: \epa_{t+1} \in I_s\}| = |\{s \le t: \epb_s > \epa_{t+1}\}| \le |\{s \le t: \epb_s > \epa_{t}\}| = \sigma_t + 1,\\
        \wtsigdual_{n} &= |\{m: \wtepb_{n} \in \wtI_m\}| = |\{m > n: \wtepa_m < \wtepb_{n}\}| \le |\{m > n: \wtepa_m < \wtepb_{n+1}\}| = \wtsigdual_{n+1} + 1.\qedhere
    \end{align*}
\end{proof}

\vspace{0.2cm}

\begin{fact}\label{fact:telescopic-sums-with-duals}
    The following two inequalities hold
    \begin{align*}
    \tsum_{n=1}^T \frac{\wtsigdual_n}{n}   \le \sigmax \ln(eT), \quad \tsum_{n=1}^T \frac{\wtddual_n}{n} \le \min\cbr{\sigmax \ln(eT),\, 2\sqrt{\dtot}}.
    \end{align*}
\end{fact}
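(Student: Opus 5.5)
The first inequality is immediate. By Theorem~\ref{thm:CTM}(b) we have $\wtsigdual_n \le \sigdualmax = \sigmax$ for every $n$, and $\tsum_{n=1}^T 1/n \le 1+\ln T = \ln(eT)$. Together these give $\tsum_{n=1}^T \wtsigdual_n/n \le \sigmax\ln(eT)$.

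For the $\sigmax\ln(eT)$ bound on the dual-delays, I plan to transfer the first inequality through Abel summation, using Fact~\ref{fact:dual-sums}. Write $A_N=\tsum_{n\le N}\wtddual_n$ and $B_N=\tsum_{n\le N}\wtsigdual_n$, so that $A_N\le B_N$ for all $N\in[T]$. With the non-increasing weights $w_n=1/n$ (and $w_{T+1}=0$), summation by parts gives
\begin{align*}
\tsum_{n=1}^T w_n\wtddual_n=\tsum_{N=1}^{T} A_N (w_N-w_{N+1})\le \tsum_{N=1}^{T} B_N (w_N-w_{N+1})=\tsum_{n=1}^T w_n\wtsigdual_n,
\end{align*}
because each $w_N-w_{N+1}\ge 0$. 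Hence $\tsum_n \wtddual_n/n\le \sigmax\ln(eT)$.

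For the $2\sqrt{\dtot}$ bound I again use summation by parts, now with two bounds on $A_N$. First, $A_N\le A_T=\dtot$ by Theorem~\ref{thm:CTM}(a). Second, Lemma~\ref{lem:CTM-dual-props} places $\wtepa_n$ after $\wtepb_{n-\wtddual_n-1}$, which forces $n-\wtddual_n\ge 1$, i.e.\ $\wtddual_n\le n-1$; summing gives $A_N\le N(N-1)/2\le N^2/2$. Substituting $A_N\le\min\{\dtot,N^2/2\}$ with $w_N-w_{N+1}=\tfrac{1}{N(N+1)}$ for $N<T$ and $w_T-w_{T+1}=1/T$ yields
\begin{align*}
\tsum_{n=1}^T \frac{\wtddual_n}{n}\le \tsum_{N=1}^{T-1}\frac{\min\{\dtot,N^2/2\}}{N(N+1)}+\frac{A_T}{T}.
\end{align*}
The boundary term is harmless: $A_T\le\min\{\dtot,T^2/2\}$ gives $A_T/T\le \min\{\dtot/T,\,T/2\}\le \sqrt{\dtot/2}$.

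For the sum I split at $N_0=\lceil\sqrt{2\dtot}\rceil$. For $N<N_0$ I use $N^2/2$, so each term is at most $1/2$ and these terms contribute about $\sqrt{\dtot/2}$. For $N\ge N_0$ I use $\dtot$ and telescope $\tsum_{N\ge N_0}\tfrac{1}{N(N+1)}=1/N_0$, giving $\dtot/N_0\le\sqrt{\dtot/2}$. Adding the three pieces gives roughly $\tfrac{3}{\sqrt2}\sqrt{\dtot}\approx 2.12\sqrt{\dtot}$, slightly above the target.

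The main obstacle is this constant bookkeeping, including the edge cases $\dtot=0$ and the ceiling in $N_0$. To reach exactly $2\sqrt{\dtot}$ I expect to need tighter counting, either by keeping $N(N-1)/2$ instead of $N^2/2$ or by folding the boundary term into the tail sum and choosing the split point more carefully. If that fails, the fallback bound is $O(\sqrt{\dtot})$, which suffices for all downstream uses.
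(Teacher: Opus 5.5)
Your first inequality is fine. Your Abel-summation transfer of the $\sigmax\ln(eT)$ bound from $\wtsigdual_n$ to $\wtddual_n$, via the prefix domination of Fact~\ref{fact:dual-sums}, is also correct. It differs from the paper, which writes $\wtddual_n=\wtsig_n+n-\rho(n)$ (Fact~\ref{fact:dual-relation}) and uses the rearrangement inequality $\tsum_{n}\rho(n)/n\ge T$. Your route avoids the permutation $\rho$ and works unchanged for any non-increasing weights.

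The gap is in the $2\sqrt{\dtot}$ part. As written you only obtain about $\tfrac{3}{\sqrt 2}\sqrt{\dtot}$, so the stated constant is not proved, and an $O(\sqrt{\dtot})$ fallback does not establish the Fact as stated. The loss comes from a double count, not from the method. You bound the finite tail $\tsum_{N=N_0}^{T-1}\frac{\dtot}{N(N+1)}$ by the infinite one, $\dtot/N_0$, and then add the boundary term $A_T/T=\dtot/T$ on top. But the finite tail telescopes exactly to $\dtot\,(1/N_0-1/T)$, so tail plus boundary equals $\dtot/N_0$. With $N_0=\lceil\sqrt{2\dtot}\rceil$ this gives
\[
\tsum_{n=1}^T\frac{\wtddual_n}{n}\le \frac{N_0-1}{2}+\frac{\dtot}{N_0}\le \sqrt{\dtot/2}+\sqrt{\dtot/2}=\sqrt{2\dtot}\le 2\sqrt{\dtot}.
\]
This needs $N_0\le T$, which holds because $\dtot=A_T\le T(T-1)/2$ gives $\sqrt{2\dtot}<T$. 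The case $\dtot=0$ is trivial, since then all $\wtddual_n=0$.

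For comparison, the paper needs no split. From $\wtddual_m\le m-1$ it gets $n^2\ge 1+\tsum_{m\le n}\wtddual_m$, hence $\wtddual_n/n\le \wtddual_n/\sqrt{1+\tsum_{m\le n}\wtddual_m}$. Fact~\ref{fact:sqrt-telescope}, applied with a leading term $1$, then gives $2\sqrt{1+\dtot}-2\le 2\sqrt{\dtot}$ in one line. Your corrected computation is longer but yields the sharper constant $\sqrt 2$.
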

\begin{proof}
    To begin, observe the following: $\sigdualmax = \sigmax$ by Theorem~\ref{thm:CTM}, $\tsum_{n=1}^T \frac{1}{n} \le \ln(eT)$ by Fact~\ref{fact:log-telescope}, and $\tsum_{n=1}^T \frac{\rho(n)}{n} \ge T$ by Fact~\ref{fact:rearrangement}. We use these observations freely throughout the following proof.

    To prove the first inequality, simply write
    \begin{align*}
        \tsum_{n=1}^T \frac{\wtsigdual_n}{n} \le \max_{n\in[T]} \wtsigdual_n \cdot \tsum_{n=1}^T \frac{1}{n} \le \sigmax \ln(eT).
    \end{align*}

    We split the proof of the second inequality into two parts. 
    \begin{itemize}[before=\vspace{-0.3cm}, after=\vspace{-0.8cm}]
        \item To show that $\tsum_{n=1}^T \frac{\wtddual_n}{n} \le \sigmax \ln(eT)$, use Fact~\ref{fact:dual-relation} to write
        \begin{align*}
            \tsum_{n=1}^T \frac{\wtddual_n}{n} = \tsum_{n=1}^T \frac{\wtsig_{n} + n - \rho(n)}{n} \le \sigmax \ln(eT) + T - \tsum_{n=1}^T \frac{\rho(n)}{n} \le \sigmax \ln(eT).
        \end{align*}
        \item To show that $\tsum_{n=1}^T \frac{\wtddual_n}{n} \le 2\sqrt{\dtot}$, observe that for all $n\in [T]$ it holds that $n \ge \wtddual_n + 1$ as the total number of observations up to and including time $\wtepb_n$ is equal to $n$. Hence, for all $n\in [T]$, $$n^2 \ge \tsum_{m=1}^n m \ge \tsum_{m=1}^n (\wtddual_m + 1) \ge 1 +  \tsum_{m=1}^n \wtddual_m.$$ By Fact~\ref{fact:sqrt-telescope}, we have
        \begin{align*}
            \tsum_{n=1}^T \frac{\wtddual_n}{n} \le \rbr{1 +\tsum_{n=1}^T \frac{\wtddual_n}{\sqrt{1+\sum_{m=1}^n \wtddual_m}}} - 1 \le 2 \sqrt{1+\tsum_{m=1}^T \wtddual_m} - 2 \le  2\sqrt{\dtot}.
        \end{align*}
    \end{itemize}
    This concludes the proof of both inequalities.
\end{proof}

\section{Online Linear Optimization with Drift Penalization: Proofs}\label{app:drift-penalization}

This section proves the results for drift-penalized OLO from Section~\ref{sec:drift-penalization}. Theorem~\ref{thm:linear-OCO} and Lemma~\ref{lem:P-FTRL-stability} establish how P-FTRL and OMD control the standard regret $\eR_N(u) = \tsum_{n=1}^N \pair{c_n, z_n - u}$ and the prediction drift $\eD_N = \tsum_{n=1}^N \tnorm{\sz_{n-\lambda_n}\!-\!\sz_n}$. Given a non-increasing learning rate sequence $\{\wteta_n\}_{n=1}^N \subset (0,\infty)$ and initial point $z_1 \in \cK$, we consider the update rules
\begin{align*}
    \textnormal{P-FTRL}:\,\, \sz_{n+1} &= \textstyle\argmin_{\sz\in\cK} \, \pair{\tsum_{m=1}^{n} c_m,\, \sz}  + \tsum_{m=1}^{n} \tfrac{1/\wteta_{m} - 1/\wteta_{m-1}}{2} \norm{\sz_m - \sz}^2,\\
    \textnormal{OMD}:\,\, \sz_{n+1} &= \textstyle\argmin_{\sz\in\cK} \, \pair{c_n,\, \sz} + \tfrac{1/\wteta_{n}}{2} \norm{\sz_n - \sz}^2,
\end{align*}
with the convention $1/\eta_0 = 0$.

\subsection{General results for FTRL and OMD}

The following results are standard in the analysis of FTRL and OMD.
\begin{theorem}[\citet{orabona_book}, adapted from Lemmas 7.1 and 7.8 for linear losses]\label{thm:FTRL-orabona}
    Let $\cK \subset \R^{\dk}$ be a convex compact set, let $\{\eta_n\}_{n=1}^N \subset (0,\infty)$ be a non-increasing sequence, and let $\psi_1, \ldots, \psi_{N} : \R^{\dk} \to \R$ be differentiable functions such that each $\psi_n$ is $(1/\eta_{n})$-strongly convex with respect to $\tnorm{\cdot}$ on $\cK$. Given loss vectors $\{c_n\}_{n=1}^N \subset \R^{\dk}$ and an initial point $z_1 \in \cK$, we can iteratively select unique minimizers $z_{n+1} = \argmin_{z \in \cK} \cbr{\psi_{n}(z) + \tsum_{m=1}^{n} \pair{c_m, z}}$. For proximal regularizers $\psi_n$ such that $z_n \in \argmin_{z\in\cK}\{\psi_{n}(z)-\psi_{n-1}(z)\}$, for all $u \in \cK$, it holds that
    \begin{align*}
       \tsum_{n=1}^N \pair{c_n, z_n - u} \le \psi_{N}(u) + \tsum_{n=1}^N \rbr{\tfrac{1}{2}\eta_{n}\tnorm{c_n}_{\star}^2 + \psi_{n-1}(z_n) - \psi_{n}(z_n)}.
    \end{align*}
\end{theorem}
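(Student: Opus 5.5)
The statement is the standard FTRL regret lemma of \citet{orabona_book}, specialized to linear losses and proximal regularizers. I would prove it with the usual ``be-the-leader'' decomposition combined with strong-convexity stability.

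Write $F_n(z) = \psi_{n-1}(z) + \sum_{m=1}^{n-1}\pair{c_m,z}$, so that $z_n = \argmin_{z\in\cK} F_n(z)$ for $n\ge 2$. For $z_1$, I would use the convention $\psi_0\equiv 0$ and absorb the initial point via the proximal condition. The first step is the standard FTRL equality, which follows by telescoping:
\begin{align*}
\tsum_{n=1}^N \pair{c_n, z_n - u} = \psi_N(u) - \min_{z} F_1(z) + \tsum_{n=1}^N \big[F_n(z_n) - F_{n+1}(z_{n+1}) + \pair{c_n,z_n}\big] + F_{N+1}(z_{N+1}) - F_{N+1}(u),
\end{align*}
where the last difference is $\le 0$ because $z_{N+1}$ minimizes $F_{N+1}$. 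I would drop the $F_1$ term: with $\psi_0\equiv 0$ it vanishes, and any constant shift in the $\psi_n$ cancels out of the telescoping.

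The key step is to bound each summand $F_n(z_n)-F_{n+1}(z_{n+1})+\pair{c_n,z_n}$. I would rewrite it as
\begin{align*}
\big[F_n(z_n) + \pair{c_n,z_n} + \psi_n(z_n)-\psi_{n-1}(z_n)\big] - F_{n+1}(z_{n+1}) + \big[\psi_{n-1}(z_n)-\psi_n(z_n)\big],
\end{align*}
where the first bracket equals $F_{n+1}(z_n)$. It therefore remains to show $F_{n+1}(z_n) - F_{n+1}(z_{n+1}) \le \tfrac{\eta_n}{2}\tnorm{c_n}_\star^2$. This is where the proximal assumption is used. Set $G_n = F_n + (\psi_n - \psi_{n-1})$, so that $F_{n+1} = G_n + \pair{c_n,\cdot}$. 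Since $z_n$ minimizes both $F_n$ and $\psi_n-\psi_{n-1}$ over $\cK$, it minimizes $G_n$ over $\cK$. Moreover, $G_n$ is $(1/\eta_n)$-strongly convex, because it is $\psi_n$ plus linear terms.

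From here the standard argument applies. Strong convexity of $F_{n+1}$ together with optimality of $z_{n+1}$ gives
\begin{align*}
F_{n+1}(z_n)-F_{n+1}(z_{n+1}) \ge \tfrac{1}{2\eta_n}\tnorm{z_n-z_{n+1}}^2.
\end{align*}
Optimality of $z_n$ for $G_n$ gives $G_n(z_{n+1}) \ge G_n(z_n) + \tfrac{1}{2\eta_n}\tnorm{z_{n+1}-z_n}^2$. Subtracting the two yields
\begin{align*}
F_{n+1}(z_n)-F_{n+1}(z_{n+1}) \le \pair{c_n, z_n - z_{n+1}} - \tfrac{1}{2\eta_n}\tnorm{z_n-z_{n+1}}^2 \le \tfrac{\eta_n}{2}\tnorm{c_n}_\star^2,
\end{align*}
where the last inequality is Fenchel–Young (maximize $a\|\delta\| - \|\delta\|^2/(2\eta)$ over $\|\delta\|$). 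Summing over $n$ gives the claimed bound.

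The only delicate point I anticipate is the handling of the first-order optimality conditions on the constrained set $\cK$. Both minimizer inequalities must be stated via variational inequalities rather than zero gradients. Differentiability of $\psi_n$ and convexity of $\cK$ suffice for this. Compactness of $\cK$ together with strong convexity guarantees that the minimizers exist and are unique, so the iterates are well defined.
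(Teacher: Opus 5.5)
Your proof is correct and is essentially the standard argument the paper relies on by citing \citet{orabona_book}. It consists of the FTRL regret equality (their Lemma 7.1) plus a per-round bound in which the proximal condition makes $z_n$ the minimizer of the $(1/\eta_n)$-strongly convex $G_n=\psi_n+\sum_{m<n}\pair{c_m,\cdot}$, which gives $F_{n+1}(z_n)-F_{n+1}(z_{n+1})\le\pair{c_n,z_n-z_{n+1}}-\tfrac{1}{2\eta_n}\tnorm{z_n-z_{n+1}}^2\le\tfrac{\eta_n}{2}\tnorm{c_n}_{\star}^2$. One small remark: the middle inequality follows from the $G_n$ inequality alone, so the strong-convexity lower bound on $F_{n+1}(z_n)-F_{n+1}(z_{n+1})$ is unnecessary and ``subtracting the two'' misdescribes the step; also, your normalization $\psi_0\equiv 0$ is exactly what the paper implicitly uses via $1/\eta_0=0$.
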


\begin{lemma}[Stability lemma, modified version of Lemma A.2 in \citet{esposito2025}]\label{lem:stability-lemma}
    Let $\cK\subset \R^{\dk}$ be a convex compact set. For $i \in \{0, 1\}$, let function $\phi_i:\R^{\dk}\to\R$ be differentiable and $\lambda_i$-strongly convex with respect to $\tnorm{.}$ on $\cK$. Consider $z_i \in \argmin_{x\in\cK} \pair{w_i,\, x}+ \phi_i(x)$. It holds that
    \begin{align*}
        \tfrac{\lambda_0 + \lambda_1}{2} \norm{z_0 - z_1}^2 \le \pair{w_0 - w_1,\, z_1 - z_0} + (\phi_1(z_0) - \phi_0(z_0)) - (\phi_1(z_1) - \phi_0(z_1)).
    \end{align*}
\end{lemma}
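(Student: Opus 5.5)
Apply the first-order optimality conditions of the two minimization problems and combine them with the strong convexity of each $\phi_i$.

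\emph{Step 1: optimality.} Since $z_i$ minimizes the differentiable convex function $\pair{w_i,x}+\phi_i(x)$ over the convex set $\cK$, we have for all $x\in\cK$
\begin{align*}
\pair{w_i + \nabla\phi_i(z_i),\, x - z_i} \ge 0 .
\end{align*}
We instantiate this with $x=z_{1-i}$, obtaining two inequalities.

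\emph{Step 2: strong convexity at the other point.} Strong convexity of $\phi_0$ at $z_0$, evaluated at $z_1$, gives
\begin{align*}
\phi_0(z_1)\ge\phi_0(z_0)+\pair{\nabla\phi_0(z_0),z_1-z_0}+\tfrac{\lambda_0}{2}\norm{z_1-z_0}^2 .
\end{align*}
Symmetrically,
\begin{align*}
\phi_1(z_0)\ge\phi_1(z_1)+\pair{\nabla\phi_1(z_1),z_0-z_1}+\tfrac{\lambda_1}{2}\norm{z_0-z_1}^2 .
\end{align*}
Adding these two inequalities yields
\begin{align*}
\tfrac{\lambda_0+\lambda_1}{2}\norm{z_0-z_1}^2 \le \phi_0(z_1)-\phi_0(z_0)+\phi_1(z_0)-\phi_1(z_1) -\pair{\nabla\phi_0(z_0),z_1-z_0}-\pair{\nabla\phi_1(z_1),z_0-z_1}.
\end{align*}

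\emph{Step 3: eliminate gradients.} By Step 1,
\begin{align*}
-\pair{\nabla\phi_0(z_0),z_1-z_0}\le \pair{w_0,z_1-z_0},
\qquad
-\pair{\nabla\phi_1(z_1),z_0-z_1}\le\pair{w_1,z_0-z_1}.
\end{align*}
Their sum is $\pair{w_0-w_1,z_1-z_0}$. Substituting this into Step 2 and regrouping the function values as $(\phi_1(z_0)-\phi_0(z_0))-(\phi_1(z_1)-\phi_0(z_1))$ gives exactly the claimed inequality.

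The derivation is routine. The only step requiring care is choosing the pairing in Step 2: each $\phi_i$ must be expanded at its own minimizer $z_i$, so that the resulting gradient terms are precisely the ones controlled by the optimality conditions of Step 1. The signs of the linear terms should also be checked against the statement.
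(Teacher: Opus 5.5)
Your proof is correct and is essentially the paper's argument. The paper sets $h_i(x)=\pair{w_i,x}+\phi_i(x)$ and sums the two quadratic-growth inequalities $h_i(z_{1-i})-h_i(z_i)\ge\tfrac{\lambda_i}{2}\norm{z_{1-i}-z_i}^2$ at the constrained minimizers, and your Steps 1--3 derive exactly these by combining strong convexity of $\phi_i$ at $z_i$ with first-order optimality, making that step explicit rather than citing it.
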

\begin{proof}
    For $i \in \{0,1\}$, let $h_i(x) = \pair{w_i,\, x} + \phi_i(x)$ denote the $\lambda_i$-strongly convex function minimized at point $z_i$.
    Therefore, by the strong convexity of $h_i$, it holds that
    \begin{align*}
        h_i(z_{1-i}) - h_i(z_i) \ge \tfrac{\lambda_i}{2}\norm{z_{1-i} - z_{i}}^2.
    \end{align*}
    By summing the above inequality for both $i\in\{0,1\}$, we conclude
    \begin{align*}
        \pair{w_0 - w_1,\, z_1 - z_0} + (\phi_1(z_0) - \phi_0(z_0)) - (\phi_1(z_1) - \phi_0(z_1)) &= h_0(z_{1}) - h_0(z_0) + h_1(z_{0}) - h_1(z_1)\\
        &\ge \tfrac{\lambda_0 + \lambda_1}{2} \norm{z_0 - z_1}^2. \qedhere
    \end{align*}
\end{proof}

\begin{lemma}[\citet{orabona_book}, adapted from Lemma 6.10 for linear losses]\label{lem:OMD-orabona}
    Let $\cK\subset \R^{\dk}$ be a convex compact set. Consider function $\psi:\R^{\dk}\to\R$ that is proper, closed, differentiable, $1$-strongly convex function with respect to $\tnorm{.}$ in $\cK$, with $B_{\psi}:\R^{\dk}\times\R^{\dk}\to \R$ denoting its Bregman divergence, i.e., $B_{\psi}(x;y) = \psi(x)-\psi(y)-\pair{\nabla \psi(y), x-y}$. Given $\{c_n\}_{n=1}^N\subset \R^{\dk}$, non-increasing sequence $\{\eta_n\}_{n=1}^N \subset (0,\infty)$, and $z_1 \in \cK$, we can iteratively select unique minimizers $z_{n+1} = \argmin_{z \in \cK} \pair{z, c_n} + \frac{1}{\eta_n} B_{\psi}(z; z_n)$. Moreover, for all $u \in \cK$ and $n \in [N]$:
    \begin{align*}
        \eta_n \pair{c_n, x_n - u} \le B_{\psi}(u;z_n) - B_{\psi}(u;z_{n+1}) + \tfrac{\eta_n^2}{2}\tnorm{c_n}_{\star}^2. 
    \end{align*}
\end{lemma}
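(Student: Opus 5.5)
The plan is the standard one-step mirror descent argument. It uses the first-order optimality condition for $z_{n+1}$, the three-point identity for Bregman divergences, and strong convexity of $\psi$ to absorb the linear term. The inequality is read with $z_n$ in place of $x_n$ (the iterate at which $c_n$ is evaluated).

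\textbf{Well-posedness.} Fix $n$. The objective $z\mapsto \pair{z,c_n}+\frac{1}{\eta_n}B_\psi(z;z_n)$ equals $\frac{1}{\eta_n}\psi(z)$ plus an affine function of $z$. Since $\psi$ is $1$-strongly convex, the objective is $(1/\eta_n)$-strongly convex on $\cK$. It is also continuous, because $\psi$ is differentiable. As $\cK$ is compact and convex, a minimizer exists and is unique, so $z_{n+1}$ is well defined.

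\textbf{Optimality condition.} Because $\psi$ is differentiable and $\cK$ is convex, the variational inequality at $z_{n+1}$ reads
\begin{align*}
\pair{\eta_n c_n + \nabla\psi(z_{n+1})-\nabla\psi(z_n),\, u-z_{n+1}}\ge 0 \quad\text{for all } u\in\cK.
\end{align*}

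\textbf{Three-point identity.} Expanding the definitions of $B_\psi$ gives
\begin{align*}
\pair{\nabla\psi(z_n)-\nabla\psi(z_{n+1}),\, u-z_{n+1}} = B_\psi(u;z_{n+1})+B_\psi(z_{n+1};z_n)-B_\psi(u;z_n).
\end{align*}
Substituting this into the optimality condition yields
\begin{align*}
\eta_n\pair{c_n, z_{n+1}-u}\le B_\psi(u;z_n)-B_\psi(u;z_{n+1})-B_\psi(z_{n+1};z_n).
\end{align*}

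\textbf{Absorbing the remaining term.} Add $\eta_n\pair{c_n,z_n-z_{n+1}}$ to both sides. It remains to show
\begin{align*}
\eta_n\pair{c_n,z_n-z_{n+1}}-B_\psi(z_{n+1};z_n)\le \tfrac{\eta_n^2}{2}\tnorm{c_n}_\star^2.
\end{align*}
Strong convexity gives $B_\psi(z_{n+1};z_n)\ge\frac12\tnorm{z_{n+1}-z_n}^2$. H\"older's inequality gives $\eta_n\pair{c_n,z_n-z_{n+1}}\le \eta_n\tnorm{c_n}_\star\tnorm{z_n-z_{n+1}}$. With $a=\tnorm{z_n-z_{n+1}}$, the left side is at most $\eta_n\tnorm{c_n}_\star a-\frac12 a^2$. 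This is a concave quadratic in $a$, maximized at $a=\eta_n\tnorm{c_n}_\star$ with value $\frac{\eta_n^2}{2}\tnorm{c_n}_\star^2$, which gives the claim.

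\textbf{Main care point.} The only delicate step is the first-order optimality condition. It relies on differentiability of $\psi$ on a neighborhood of $\cK$, which is assumed, so that the constrained minimizer satisfies the variational inequality even on the boundary of $\cK$. Everything else is algebra.
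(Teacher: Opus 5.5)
Your proof is correct and follows the standard argument behind the cited Lemma 6.10 of Orabona's book; the paper gives no proof beyond that citation. The argument is the variational inequality at $z_{n+1}$, then the three-point identity for $B_\psi$, then $B_\psi(z_{n+1};z_n)\ge\frac12\|z_{n+1}-z_n\|^2$ together with H\"older to absorb $\eta_n\langle c_n, z_n-z_{n+1}\rangle$, and reading $x_n$ as $z_n$ matches how the lemma is used in the proof of Theorem~\ref{thm:OMD}.
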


\begin{theorem}[OMD: Regret and Drift]\label{thm:OMD}
    Under the assumptions of Lemma~\ref{lem:OMD-orabona}, for all $u\in\cK$, it holds that
    \begin{align*}
        \tsum_{n=1}^N \pair{c_n, z_n - u} \le \tsum_{n=1}^N (\tfrac{1}{\eta_n} - \tfrac{1}{\eta_{n-1}}) B_{\psi}(u;z_n) + \tfrac{1}{2}\tsum_{n=1}^N \eta_n \tnorm{c_n}_{\star}^2.
    \end{align*}
    Moreover, for all $n \in [N-1]$, it holds that $\tnorm{z_{n+1} - z_n} \le \eta_n \tnorm{c_n}_{\star}$.
\end{theorem}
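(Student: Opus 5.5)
The regret bound follows from Lemma~\ref{lem:OMD-orabona} by a telescoping argument, and the drift bound follows from first-order optimality of the OMD update.

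\textbf{Regret bound.} For each $n\in[N]$ I divide the inequality of Lemma~\ref{lem:OMD-orabona} by $\eta_n>0$, which gives
\begin{align*}
\pair{c_n, z_n-u} \le \tfrac{1}{\eta_n}\bigl(B_\psi(u;z_n)-B_\psi(u;z_{n+1})\bigr)+\tfrac{\eta_n}{2}\tnorm{c_n}_\star^2 .
\end{align*}
Here I read the $x_n$ in the lemma as $z_n$. Summing over $n$, I regroup the Bregman terms by Abel summation:
\begin{align*}
\tsum_{n=1}^N \tfrac{1}{\eta_n}\bigl(B_\psi(u;z_n)-B_\psi(u;z_{n+1})\bigr) = \tsum_{n=1}^N \bigl(\tfrac{1}{\eta_n}-\tfrac{1}{\eta_{n-1}}\bigr)B_\psi(u;z_n) - \tfrac{1}{\eta_N}B_\psi(u;z_{N+1}),
\end{align*}
with the convention $1/\eta_0=0$. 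The last term is nonpositive because $B_\psi\ge 0$ by convexity of $\psi$, so I drop it. Each coefficient $\frac{1}{\eta_n}-\frac{1}{\eta_{n-1}}$ is nonnegative since the step sizes are non-increasing. This gives the claimed regret bound.

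\textbf{Drift bound.} The iterate $z_{n+1}$ minimizes the $\frac{1}{\eta_n}$-strongly convex function $h(z)=\pair{c_n,z}+\frac{1}{\eta_n}B_\psi(z;z_n)$ over $\cK$. I would use two inequalities.
\begin{itemize}
\item Since $z_n\in\cK$, strong convexity at the constrained minimizer gives $h(z_n)-h(z_{n+1})\ge \frac{1}{2\eta_n}\tnorm{z_n-z_{n+1}}^2$.
\item Expanding $h$ directly, and using $B_\psi(z_n;z_n)=0$ and $B_\psi(z_{n+1};z_n)\ge \frac12\tnorm{z_{n+1}-z_n}^2$, gives $h(z_n)-h(z_{n+1})\le \pair{c_n,z_n-z_{n+1}}-\frac{1}{2\eta_n}\tnorm{z_{n+1}-z_n}^2$.
\end{itemize}
Combining the two, I get $\frac{1}{\eta_n}\tnorm{z_{n+1}-z_n}^2\le \pair{c_n,z_n-z_{n+1}}\le \tnorm{c_n}_\star\tnorm{z_n-z_{n+1}}$ by Hölder's inequality for dual norms. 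Dividing by $\tnorm{z_{n+1}-z_n}$ (the claim is trivial when it is zero) yields $\tnorm{z_{n+1}-z_n}\le\eta_n\tnorm{c_n}_\star$.

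An equivalent route is Lemma~\ref{lem:stability-lemma} with $w_0=0$, $\phi_0=\frac{1}{\eta_n}B_\psi(\cdot;z_n)$ (minimized at $z_n$), and $w_1=c_n$, $\phi_1=\phi_0$. The $\phi$-differences then cancel, giving $\frac{1}{\eta_n}\tnorm{z_n-z_{n+1}}^2\le\pair{c_n,z_n-z_{n+1}}$ directly.

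\textbf{Main obstacle.} The only delicate point is the first bullet. It is strong convexity of $h$ at its minimizer over $\cK$, which requires the first-order optimality condition $\pair{\nabla h(z_{n+1}),z-z_{n+1}}\ge0$ for all $z\in\cK$, and this uses differentiability of $\psi$. The rest is bookkeeping, in particular the convention $1/\eta_0=0$ in the telescoping step.
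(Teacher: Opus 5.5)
Your proposal is correct and follows the paper's proof. The regret bound is the same telescoping of Lemma~\ref{lem:OMD-orabona}, using $1/\eta_0=0$ and dropping $B_\psi(u;z_{N+1})\ge 0$. For the drift you reach the same key inequality $\frac{1}{\eta_n}\tnorm{z_n-z_{n+1}}^2\le\pair{c_n,z_n-z_{n+1}}$ and finish with the dual-norm bound. You derive that inequality from function-value strong convexity at the constrained minimizer (equivalently, Lemma~\ref{lem:stability-lemma} with $\phi_1=\phi_0$), whereas the paper combines first-order optimality at $z_n$ and $z_{n+1}$ with strong monotonicity of $\nabla\phi_0$; this difference is inconsequential.
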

\begin{proof}
     For all $u \in \cK$, using Lemma~\ref{lem:OMD-orabona}, we write
     \begin{align*}
         \tsum_{n=1}^N \pair{c_n, z_n - u} 
         &\le \tsum_{n=1}^N \rbr{\tfrac{1}{\eta_n} B_{\psi}(u;z_n) - \tfrac{1}{\eta_n} B_{\psi}(u;z_{n+1})} + \tfrac{1}{2} \tsum_{n=1}^N \eta_n \tnorm{c_n}_{\star}^2\\
         &= \tsum_{n=1}^N (\tfrac{1}{\eta_n} - \tfrac{1}{\eta_{n-1}}) B_{\psi}(u;z_n) - \tfrac{1}{\eta_N} B_{\psi}(u;z_{N+1}) + \tfrac{1}{2} \tsum_{n=1}^N \eta_n \tnorm{c_n}_{\star}^2,
     \end{align*}
     where $B_{\psi}(u;z_{N+1}) \ge 0$ for Bregman divergence. This concludes the proof of the regret bound.

     Fix arbitrary $n \in [N]$ and let $\phi_0(z) = \frac{1}{\eta_n} B_{\psi}(z; z_n)$. Let $g_0 = \nabla \phi_0(z_n)$ and $g_1 = \nabla \phi_0(z_{n+1})$. By the first-order optimality of $z_n = \argmin_{z\in\cK} \{\phi_0(z)\}$ and $z_{n+1} = \argmin_{z\in\cK} \{\pair{z, c_n} + \phi_0(z)\}$, we have
    \begin{align*}
        \pair{g_0,\, z_n - z_{n+1}} \le 0 \quad \text{and} \quad \pair{c_n + g_1,\, z_n - z_{n+1}} \ge 0. 
    \end{align*}
    The monotonicity inequality for $\tfrac{1}{\eta_n}$-strong convex  $\phi_0$ yields  $\pair{g_0 - g_1,\, z_n - z_{n+1}} \ge \tfrac{1}{\eta_n} \norm{z_n - z_{n+1}}^2$.\\
    Combining these inequalities, we have
    \begin{align*}
        \tfrac{1}{\eta_n} \norm{z_n - z_{n+1}}^2 &\le \pair{g_0 - g_1,\, z_n - z_{n+1}}\\
        &= \pair{c_n,\, z_n - z_{n+1}} + \pair{g_0,\, z_n - z_{n+1}} - \pair{c_n + g_1,\, z_n - z_{n+1}}\\
        &\le \norm{c_n}_{\star} \norm{z_n - z_{n+1}} + 0 + 0.
    \end{align*}
    Consequently, $\tnorm{z_n - z_{n+1}} \le \eta_n \tnorm{c_n}_{\star}$.
\end{proof}

\subsection{Auxiliary Results for Drift Control of P-FTRL}

\begin{lemma}\label{lem:P-FTRL-stability-core}
    Let $\{\sz_n\}_{n=1}^T$ denote predictions of P-FTRL~\eqref{eq:P-FTRL}. For all $n \in [N]$ and $\lambda \in [N-n]$, it holds that
    \begin{align*}
        \norm{z_{n+\lambda}-z_n} \le  \eta_{n+\lambda} \, \tnorm{\tsum_{m=n}^{n+\lambda-1} c_m}_{\star} + (1 - \eta_{n+\lambda}/\eta_{n})D.
    \end{align*}
    Moreover, it holds that $\norm{z_{n+1}-z_n} \le  \eta_{n} \, \tnorm{c_n}_{\star}$ for $n\in[N-1]$. 
\end{lemma}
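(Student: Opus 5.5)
Write P-FTRL as $z_{n+1}=\argmin_{z\in\cK}\{\pair{w_n,z}+\phi_n(z)\}$, where $w_n=\sum_{m=1}^n c_m$ and $\phi_n(z)=\sum_{m=1}^n \tfrac{\alpha_m}{2}\norm{z_m-z}^2$. Since $\sum_{m\le n}\alpha_m=1/\wteta_n$, $\phi_n$ is $(1/\wteta_n)$-strongly convex. The initial point $z_1$ fits the same template with $w_0=0$ and $\phi_0\equiv 0$; we treat it separately when needed.

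Both bounds will come from the stability Lemma~\ref{lem:stability-lemma}. Apply it to the pair $(w_0,\phi_0)\leftarrow(w_{n-1},\phi_{n-1})$ and $(w_1,\phi_1)\leftarrow(w_{n+\lambda-1},\phi_{n+\lambda-1})$, whose minimizers are $z_n$ and $z_{n+\lambda}$. Writing $\Delta=z_{n+\lambda}-z_n$, this gives
\begin{align*}
\tfrac{1/\wteta_{n-1}+1/\wteta_{n+\lambda-1}}{2}\norm{\Delta}^2 \le \pair{\tsum_{m=n}^{n+\lambda-1} c_m,\, z_n-z_{n+\lambda}} + \rbr{\psi(z_n)-\psi(z_{n+\lambda})},
\end{align*}
with $\psi=\phi_{n+\lambda-1}-\phi_{n-1}=\sum_{m=n}^{n+\lambda-1}\tfrac{\alpha_m}{2}\norm{z_m-\cdot}^2$. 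The indexing is off by one relative to the target: we want $\wteta_{n+\lambda}$ and the ratio $\wteta_{n+\lambda}/\wteta_n$. I will fix this by shifting indices and choosing which regularizer pair to compare so that the added weights are $\alpha_{n+1},\dots,\alpha_{n+\lambda}$, summing to $1/\wteta_{n+\lambda}-1/\wteta_n$. The case $n=1$ needs a separate check using $\phi_0$.

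The main obstacle is controlling the difference of the added quadratics, $\psi(z_n)-\psi(z_{n+\lambda})$. For a single term,
\begin{align*}
\tfrac{\alpha_m}{2}\rbr{\norm{z_m-z_n}^2-\norm{z_m-z_{n+\lambda}}^2}\le \alpha_m D\norm{\Delta},
\end{align*}
using $a^2-b^2=(a-b)(a+b)$, $|a-b|\le\norm{\Delta}$ and $a+b\le 2D$. Summing over $m$ gives at most $D\norm{\Delta}\,(1/\wteta_{n+\lambda}-1/\wteta_n)$. Bounding the left side of the stability inequality from below by $\tfrac{1}{\wteta_{n+\lambda}}\norm{\Delta}^2$ and dividing by $\norm{\Delta}/\wteta_{n+\lambda}$ then yields
\begin{align*}
\norm{\Delta}\le \wteta_{n+\lambda}\tnorm{\tsum_{m=n}^{n+\lambda-1}c_m}_\star + D\rbr{1-\wteta_{n+\lambda}/\wteta_n}.
\end{align*}
If the crude strong-convexity constant is short by a factor, I would keep only the $1/\wteta$ of the later regularizer and absorb the remaining slack. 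I expect this bookkeeping, not any new idea, to be the delicate part.

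For the one-step bound $\norm{z_{n+1}-z_n}\le\wteta_n\tnorm{c_n}_\star$, I would argue directly, as in the proof of Theorem~\ref{thm:OMD}. The new quadratic $\tfrac{\alpha_{n}}{2}\norm{z_n-\cdot}^2$ is centered at $z_n$, so it has zero gradient there. Hence $z_n$ also minimizes the updated regularized objective without the newest linear term. Combining the first-order optimality conditions at $z_n$ and $z_{n+1}$ with strong monotonicity at the appropriate modulus ($1/\wteta_n$) then gives $\tfrac1{\wteta_n}\norm{z_{n+1}-z_n}^2\le\pair{c_n,z_n-z_{n+1}}$, and the claim follows.
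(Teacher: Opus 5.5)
\textbf{The multi-step bound has a gap; the one-step bound is fine.} Your one-step argument is correct. By the proximal property, $z_n$ and $z_{n+1}$ minimize $\langle w_{n-1},\cdot\rangle+\phi_n$ and $\langle w_n,\cdot\rangle+\phi_n$, which share the $(1/\eta_n)$-strongly convex $\phi_n$. The same property, not an index shift, is what justifies the pair $(w_{n-1},\phi_n)$, $(w_{n+\lambda-1},\phi_{n+\lambda})$ in the multi-step bound. It also removes the need for a special case at $n=1$.

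The gap is the step where you bound the left side of the stability inequality from below by $\frac{1}{\eta_{n+\lambda}}\|\Delta\|^2$. For this pair the left side is $\frac12(1/\eta_n+1/\eta_{n+\lambda})\|\Delta\|^2$. Since $\eta$ is non-increasing, $1/\eta_n\le 1/\eta_{n+\lambda}$, so that coefficient is \emph{at most} $1/\eta_{n+\lambda}$, not at least. Combined with your triangle-inequality bound $\frac{\alpha_m}{2}(\|z_m-z_n\|^2-\|z_m-z_{n+\lambda}\|^2)\le\alpha_m D\|\Delta\|$, what actually follows is
\[
\|\Delta\|\le \frac{2}{1/\eta_n+1/\eta_{n+\lambda}}\left(\|\sum_{m=n}^{n+\lambda-1}c_m\|_\star+(1/\eta_{n+\lambda}-1/\eta_n)D\right).
\]
This exceeds the claimed bound by the factor $2/(1+\eta_{n+\lambda}/\eta_n)\in[1,2)$. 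Your fallback of keeping only the later $1/\eta$ loses a full factor $2$. That would be harmless for the $O(\cdot)$ corollaries, but the lemma is stated with constant $1$, so there is no slack to absorb.

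\textbf{The missing idea.} Your crude bound discards a negative quadratic that exactly pays for this deficit. Using the inner-product structure, expand
\[
\frac{\alpha_m}{2}\left(\|z_n-z_m\|^2-\|z_{n+\lambda}-z_m\|^2\right)=\alpha_m\langle z_n-z_m,\,z_n-z_{n+\lambda}\rangle-\frac{\alpha_m}{2}\|z_n-z_{n+\lambda}\|^2 .
\]
Summed over $m=n+1,\dots,n+\lambda$, the last terms give $-\frac12(1/\eta_{n+\lambda}-1/\eta_n)\|\Delta\|^2$. Moving them to the left raises the coefficient from $\frac12(1/\eta_n+1/\eta_{n+\lambda})$ to exactly $1/\eta_{n+\lambda}$. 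The inner products are at most $\alpha_m D\|\Delta\|$. This is the paper's argument.

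Alternatively, you can extend your own one-step idea. Since $\nabla\phi_{n+\lambda}(z_n)=\nabla\phi_n(z_n)+\sum_{m=n+1}^{n+\lambda}\alpha_m(z_n-z_m)$, first-order optimality shows that $z_n$ minimizes $\langle w_{n-1}-\sum_{m=n+1}^{n+\lambda}\alpha_m(z_n-z_m),\cdot\rangle+\phi_{n+\lambda}$. So $z_n$ and $z_{n+\lambda}$ share the $(1/\eta_{n+\lambda})$-strongly convex regularizer $\phi_{n+\lambda}$, and strong monotonicity yields
\[
\frac{1}{\eta_{n+\lambda}}\|\Delta\|^2\le\Big\langle \sum_{m=n}^{n+\lambda-1}c_m+\sum_{m=n+1}^{n+\lambda}\alpha_m(z_n-z_m),\,z_n-z_{n+\lambda}\Big\rangle .
\]
This gives the stated bound directly.
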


\begin{proof}
    Let $\alpha_1 = 1/\eta_1$ and $\alpha_{n+1} = 1/\eta_{n+1} - 1/\eta_n$ for $n \in [N-1]$. 
    
    To apply Lemma~\ref{lem:stability-lemma}, consider the following vectors and functions
    \begin{align*}
        w_0 = \tsum_{m=1}^{n-1} c_m,&\quad 
        \phi_0(z) = \tsum_{m=1}^{n} \tfrac{\alpha_m}{2}\norm{z-z_m}^2,\\ 
        w_1 = \tsum_{m=1}^{n+\lambda-1} c_m,&\quad 
        \phi_1(z) = \tsum_{m=1}^{n+\lambda} \tfrac{\alpha_m}{2} \norm{z-z_m}^2,
    \end{align*}
    so that $z_n = \argmin_{z\in\cK} \pair{z, w_0}+ \phi_0(z)$ and $z_{n+\lambda} = \argmin_{z\in\cK} \pair{z, w_1}+ \phi_1(z)$, while $\phi_0$ and $\phi_1$ are $1/\eta_{n}$ and $1/\eta_{n+\lambda}$-strongly convex, respectively. 

    Applying Lemma~\ref{lem:stability-lemma}, we have
    \begin{align*}
        \tfrac{\frac{1}{\eta_{n+\lambda}} + \frac{1}{\eta_{n}}}{2} \norm{z_n\!-\!z_{n+\lambda}}^2
        &\le \pair{\tsum_{m=n}^{n+\lambda-1} c_m,\, z_n\!-\!z_{n+\lambda}} + \tsum_{m=n+1}^{n+\lambda} \tfrac{\alpha_m}{2} \rbr{\norm{z_n-z_m}^2 - \norm{z_{n+\lambda}-z_m}^2}\\
        &= \pair{\tsum_{m=n}^{n+\lambda-1} c_m,\, z_n\!-\!z_{n+\lambda}} + \pair{\tsum_{m=n+1}^{n+\lambda} \tfrac{\alpha_m}{2} (z_n\!+\!z_{n+\lambda}\!-\!2 z_m),\, z_n\!-\!z_{n+\lambda}}\\
        &= \pair{\tsum_{m=n}^{n+\lambda-1} c_m,\, z_n\!-\!z_{n+\lambda}} + \pair{\tsum_{m=n+1}^{n+\lambda} \alpha_m (z_{n} - z_m),\, z_n\!-\!z_{n+\lambda}}\\ 
        & -\tfrac{\frac{1}{\eta_{n+\lambda}} - \frac{1}{\eta_{n}}}{2} \norm{z_n\!-\!z_{n+\lambda}}^2.
    \end{align*}
    
    Grouping terms containing $\norm{z_n - z_{n+\lambda}}^2$ on the left side and using Assumptions~\ref{assump:finite-diameter}, we write
    \begin{align*}
        \tfrac{1}{\eta_{n+\lambda}} \norm{z_n - z_{n+\lambda}}^2
        &\le \pair{\tsum_{m=n}^{n+\lambda-1} c_m,\, z_n - z_{n+\lambda}} + \pair{\tsum_{m=n+1}^{n+\lambda} \alpha_m (z_{n} - z_m),\, z_n - z_{n+\lambda}}\\
        &\le \rbr{\tnorm{\tsum_{m=n}^{n+\lambda-1} c_m}_{\star} + (1/\eta_{n+\lambda} - 1/\eta_{n})D} \, \norm{z_n - z_{n+\lambda}}.
    \end{align*}
    The first inequality immediately follows. Alternatively, for $\lambda = 1$, we have
    \begin{align*}
        \tfrac{1}{\wteta_n}  \norm{z_n - z_{n+1}}^2
        &= (\tfrac{1}{\eta_{n+1}} - \alpha_{n+1}) \norm{z_n - z_{n+1}}^2\\ 
        &\le \pair{c_n,\, z_n - z_{n+1}} + \pair{\alpha_{n+1} (z_{n} - z_{n+1}),\, z_n - z_{n+1}} - \alpha_{n+1} \norm{z_n - z_{n+1}}^2\\
        &= \pair{c_n,\, z_n - z_{n+1}}\\
        &\le \tnorm{c_n}_{\star}\, \tnorm{z_n - z_{n+1}}.
    \end{align*}
    The second result follows from this, concluding the proof. 
\end{proof}

\begin{lemma}\label{lem:H-eta-bound}
    For learning rates $\{\wteta_n\}_{n=1}^N$ of P-FTRL~\eqref{eq:P-FTRL}, let $H_{\eta} = \tsum_{n=1}^N (1 - \tfrac{\wteta_n}{\wteta_{n-\lambda_n}})$.
    Then,
    \begin{align*}
         H_{\eta}
        \le (\textstyle\max_{n\in[N]} \nu_n+1)\ln\rbr{\tfrac{e\wteta_1}{\wteta_N}}
        \qquad\text{and}\qquad
        H_{\eta}
        \le \frac{\max_{n\in[N]} \sum_{m=n}^{n+\nu_n}\wteta_m}{\wteta_N}.
    \end{align*}
\end{lemma}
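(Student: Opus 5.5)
The plan is to reduce both bounds to a single weighted sum $\tsum_m \nu_m(\cdot)_m$ by rewriting each ratio $\wteta_n/\wteta_{n-\lambda_n}$ as a telescoping product or sum over the consecutive steps $m \in \{n-\lambda_n,\dots,n-1\}$. I then swap the order of summation. The key combinatorial identity is that, for fixed $m$, the number of $n$ with $n-\lambda_n \le m < n$ is exactly the number of outstanding lags $\nu_m$ from the drift-penalization protocol (Figure~\ref{fig:setting-DP}).

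\emph{First bound.} Since $\wteta$ is non-increasing, each $a_m = \wteta_{m+1}/\wteta_m$ lies in $(0,1]$. The ratio factors as $\wteta_n/\wteta_{n-\lambda_n} = \prod_{m=n-\lambda_n}^{n-1} a_m$. I will use the elementary inequality $1-\prod_i a_i \le \sum_i (1-a_i)$ for $a_i\in[0,1]$, which follows by induction. This gives $H_\eta \le \tsum_n \tsum_{m=n-\lambda_n}^{n-1}(1-a_m)$. Swapping sums then yields $H_\eta \le \tsum_{m=1}^{N-1} \nu_m (1-a_m)$. Next I apply $1-a \le \ln(1/a)$ and bound $\nu_m \le \max_n \nu_n$. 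This gives $H_\eta \le \max_n\nu_n \cdot \tsum_m \ln(\wteta_m/\wteta_{m+1}) = \max_n \nu_n \ln(\wteta_1/\wteta_N)$, which is at most the stated $(\max_n\nu_n+1)\ln(e\wteta_1/\wteta_N)$.

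\emph{Second bound.} Here I write $1-\wteta_n/\wteta_{n-\lambda_n} = (\wteta_{n-\lambda_n}-\wteta_n)/\wteta_{n-\lambda_n}$ and lower-bound the denominator by $\wteta_N$. The numerator telescopes as $\tsum_{m=n-\lambda_n}^{n-1}(\wteta_m-\wteta_{m+1})$. After the same swap, this gives $\wteta_N H_\eta \le \tsum_m \nu_m(\wteta_m-\wteta_{m+1})$. It then remains to show
\begin{align*}
\tsum_m \nu_m(\wteta_m-\wteta_{m+1}) \le \max_{n} \tsum_{m=n}^{n+\nu_n} \wteta_m .
\end{align*}
For this I will use the structural fact that $\nu_{m+1} \ge \nu_m - 1$. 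This holds because every lag outstanding at $m$ is still outstanding at $m+1$ unless it closes at $m+1$, and at most one closes there; it is the analogue of Fact~\ref{fact:backlog-increments}.

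\emph{Main obstacle.} The hard step is the last inequality. My first attempt is a layer decomposition: write $\nu_m(\wteta_m-\wteta_{m+1}) = \tsum_{k=1}^{\nu_m}(\wteta_m-\wteta_{m+1})$ and, for each level $k$, sum over the maximal runs $[a,b]$ of indices with $\nu_m\ge k$. Each run contributes $\wteta_a-\wteta_{b+1}$. Because $\nu$ can fall by at most one per step, a run at level $k$ starting where $\nu$ first reaches $k$ cannot end before roughly $k$ steps after its start. This should let me charge the level-$k$ contributions near some index $n$ to the terms $\wteta_{n+k-1}$ of a window $\tsum_{m=n}^{n+\nu_n}\wteta_m$.

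If that charging is too lossy, I will fall back to Abel summation, $\tsum_m \nu_m(\wteta_m-\wteta_{m+1}) = \tsum_m \wteta_m(\nu_m-\nu_{m-1})$ (with $\nu_0=0$ and boundary terms dropped). I would then bound the positive increments using $\nu_{m}-\nu_{m-1}\ge -1$ together with monotonicity of $\wteta$, so that each unit increase of $\nu$ at an index $n$ is matched against one term of the window starting at $n$.
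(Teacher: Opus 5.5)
Your first bound is correct, and it even gives the sharper $\max_n\nu_n\ln(\eta_1/\eta_N)$. The product telescoping, the exact count $\nu_m$, and $1-a\le\ln(1/a)$ all check out. Since $1-a_m=\eta_{m+1}\alpha_{m+1}$ with $\alpha_m=1/\eta_m-1/\eta_{m-1}$, this is essentially the paper's swap-and-count argument.

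The second bound has a genuine gap. The inequality you reduce it to,
\[
\tsum_{m=1}^{N-1}\nu_m(\eta_m-\eta_{m+1})\le\max_{n}\tsum_{k=n}^{n+\nu_n}\eta_k ,
\]
is false, so neither the layer charging nor Abel summation can establish it. Take $N=K+1$, $\lambda_1=0$, $\lambda_n=n-1$ for $2\le n\le N$ (so $\nu_m=K+1-m$), and $\eta_1=1$, $\eta_n=\epsilon$ for $n\ge 2$. The left side is $\nu_1(\eta_1-\eta_2)=K(1-\epsilon)$. The right side is $1+K\epsilon$, attained at $n=1$. For $K=10$ and $\epsilon=10^{-2}$ this would require $9.9\le 1.1$.

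In this instance $H_\eta=9.9$ and the lemma's bound is $(1+K\epsilon)/\epsilon=110$. Your intermediate bound $\tfrac{1}{\eta_N}\sum_n(\eta_{n-\lambda_n}-\eta_n)=990$ already exceeds it. The loss comes from replacing the denominator $\eta_{n-\lambda_n}$ by $\eta_N$. After that step, the $K$ unit increases of $\nu$ at $n=1$ each carry weight $\eta_1$. But the window starting at $n=1$ contains only one copy of $\eta_1$, and its other terms can be arbitrarily small.

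The fix is not to bound the denominator at all. Write $1-\eta_n/\eta_{n-\lambda_n}=\eta_n\sum_{m=n-\lambda_n+1}^{n}\alpha_m$, with $1/\eta_0=0$, so that $\alpha_m\ge 0$ and $\sum_{m=1}^N\alpha_m=1/\eta_N$. Then swap the sums:
\[
H_\eta=\tsum_{m=1}^N\alpha_m\tsum_{n\in S_m}\eta_n,\qquad S_m=\{n:\ n-\lambda_n<m\le n\}.
\]
Every $n\in S_m$ satisfies $n\ge m$. Also $|S_m|=\nu_{m-1}\le\nu_m+1$ (with $\nu_0=0$), which is exactly your one-step fact $\nu_{m}\ge\nu_{m-1}-1$.

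Since $\eta$ is non-increasing, $\sum_{n\in S_m}\eta_n\le\sum_{k=m}^{m+\nu_m}\eta_k\le\max_n\sum_{k=n}^{n+\nu_n}\eta_k$. Summing the weights $\alpha_m$ then produces the factor $1/\eta_N$. Each $\eta_n$ stays attached to its own, possibly small, term instead of being traded for $\eta_m-\eta_{m+1}$.

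This is the paper's argument. The same decomposition, combined with $\sum_{n\in S_m}\eta_n\le(\nu_m+1)\eta_m$ and Fact~\ref{fact:log-telescope}, gives its first bound.
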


\begin{proof}
    Let $\alpha_1 = 1/\wteta_1 > 0$ and $\alpha_{n+1} = 1/\wteta_{n+1} - 1/\wteta_n \ge 0$ for $n\in [N-1]$. Then, we can write
    \begin{align*}
        H_{\eta}
        = \tsum_{n=1}^N  \wteta_n \Bigl(\tfrac{1}{\wteta_n} - \tfrac{1}{\wteta_{n-\lambda_n}}\Bigr)
        &= \tsum_{n=1}^N \wteta_n \tsum_{m=1}^{N}\alpha_m\, \ind(n-\lambda_n < m \le n)\\
        &= \tsum_{m=1}^N \alpha_m \tsum_{n=1}^N \wteta_n \ind(n-\lambda_n < m \le n).
    \end{align*}
     
    Since the sequence $\{\wteta_n\}_{n=1}^N$ is non-increasing and by Lemma~\ref{lem:CTM-dual-props},
    \begin{align*}
        \tsum_{n=1}^N \ind(n-\lambda_n < m \le n) \le \tsum_{n=1}^N \ind(n-\lambda_n \le m < n) + 1 = \nu_m + 1,
    \end{align*}
    we can further write, with $\nu_{\textnormal{max}} = \max_{n \in [N]} \nu_n$:
    \begin{align*}
        H_{\eta}
        &\le \tsum_{m=1}^N \alpha_m \wteta_m \, (\nu_m+1)
        \le (\nu_{\textnormal{max}}+1) \tsum_{m=1}^N \alpha_m \wteta_m\\
        &= (\nu_{\textnormal{max}}+1) \tsum_{m=1}^N \frac{\alpha_m}{\sum_{k=1}^m \alpha_k}
        \le (\nu_{\textnormal{max}}+1) \ln\rbr{\tfrac{e\wteta_1}{\wteta_N}},
    \end{align*}
    where the final inequality follows from Fact~\ref{fact:log-telescope} with $X_n = \alpha_n$ and the identity
    $\wteta_m = 1/(\sum_{k=1}^m \alpha_k)$.

    Using a different approach, we can also write
    \begin{align*}
        H_{\eta}
        &\le \tsum_{m=1}^N \alpha_m \tsum_{n=m}^{m+\nu_m} \wteta_n
        \le \rbr{\tsum_{m=1}^N \alpha_m} \, \max_{n\in [N]} {\tsum_{m=n}^{n+\nu_n} \wteta_m }
        = \tfrac{\max_{n\in [N]} {\sum_{m=n}^{n+\nu_n} \wteta_m } }{\wteta_N}.
    \end{align*}
    This concludes the proof.
\end{proof}

\subsection{Proofs of Theorem~\ref{thm:linear-OCO} and Lemma~\ref{lem:P-FTRL-stability}}

\restatetheorem{thm:linear-OCO}{
    Predictions $\{\sz_n\}_{n=1}^N$ generated by either P-FTRL~\eqref{eq:P-FTRL} or OMD~\eqref{eq:OMD} satisfy
    \begin{align*}
        \eR_N(u) \le \tsum_{n=1}^{N} \tfrac{1/\wteta_{n} - 1/\wteta_{n-1}}{2} \norm{\sz_n - u}^2 + \frac12 \sum_{n=1}^{N} \wteta_n \tnorm{c_n}_{\star}^2 \quad \text{and} \quad \eD_N \le \tsum_{n=1}^N \wteta_n \nu_n \tnorm{c_n}_{\star}.
    \end{align*}
}
\begin{proof}
    The regret bound for P-FTRL follows from Theorem~\ref{thm:FTRL-orabona} with regularizer functions $\psi_n(z) = \tsum_{m=1}^{n} \tfrac{1/\wteta_{m} - 1/\wteta_{m-1}}{2} \norm{\sz_m - \sz}^2$. The regret bound for OMD follows from Theorem~\ref{thm:OMD} with $\psi(z) = \tfrac{1}{2}\tnorm{z}^2$ so that $B_{\psi}(x;z) = \tfrac{1}{2}\tnorm{x-z}^2$.

    For both algorithms, it holds that $\tnorm{z_{n+1}-z_n} \le \eta_n \tnorm{c_n}_{\star}$ for all $n \in [N-1]$, as shown in Theorem~\ref{thm:FTRL-orabona} and Lemma~\ref{lem:P-FTRL-stability-core}. Using triangle inequality and the fact that $\nu_m = |\{n: n-\lambda_n \le m < n\}|$, we can write
    \begin{align*}
        \eD_N 
        &= \tsum_{n=1}^N \tnorm{z_n - z_{n-\lambda_n}}\\
        &\le \tsum_{n=1}^N \tsum_{m=1}^N \ind(n - \lambda_n \le  m < n) \tnorm{z_{m+1}-z_m}\\
        &= \tsum_{m=1}^N  (\tsum_{n=1}^N \ind(n - \lambda_n \le  m < n)) \tnorm{z_{m+1}-z_m} \\
        &= \tsum_{m=1}^N  \nu_m \tnorm{z_{m+1}-z_m}\\
        &\le \tsum_{m=1}^N  \eta_m \nu_m \tnorm{c_m}_{\star},
    \end{align*}
    where the final inequality plugs-in $\tnorm{z_{m+1}-z_m} \le \eta_m \tnorm{c_m}_{\star}$ for single-step drifts.
\end{proof}

\restatelemma{lem:P-FTRL-stability}{
    Predictions $\{\sz_n\}_{n=1}^N$ generated by P-FTRL~\eqref{eq:P-FTRL} satisfy
    \begin{align*}
        \eD_N \le \tsum_{n=1}^N \wteta_{n} \tnorm{\tsum_{m=n-\lambda_n}^{n-1} c_m}_{\star} + D\, H_{\eta} \quad \text{where} \quad H_{\wteta} = \tsum_{n=1}^N (1 - \wteta_{n}/\wteta_{n - \lambda_n}).
    \end{align*}
}

\begin{proof}
    From Lemma~\ref{lem:P-FTRL-stability-core}, we write
    \begin{align*}
        \eD_N 
        &= \tsum_{n=1}^N \tnorm{z_n - z_{n-\lambda_n}}\\
        &\le \tsum_{n=1}^N \rbr{\eta_{n} \, \tnorm{\tsum_{m=n-\lambda_n}^{n-1} c_m}_{\star} + (1 - \eta_{n}/\eta_{n-\lambda_n})D}\\
        &= \tsum_{n=1}^N \eta_{n} \, \tnorm{\tsum_{m=n-\lambda_n}^{n-1} c_m}_{\star} + D H_{\eta}.\qedhere
    \end{align*}
\end{proof}

\section{Guarantees for OCO and BCO with Delays: Proofs}\label{app:OCO-BCO-blackbox}

\subsection{Proof of Theorem~\ref{thm:OCO-decomp}}

\restatetheorem{thm:OCO-decomp}{
    For any base algorithm $\cB$ and comparator $u \in \cK$, $\cW_{\textnormal{OCO}}(\cB)$ guarantees
    \begin{align*}
        R_T(u) \le \eR^{\textnormal{drift}}_{T}(u; G),
    \end{align*}
    where $\eR^{\textnormal{drift}}_{T}(u; W)$ is the drift-penalized regret of $\cB$ with loss vectors $c_n = \wtg_n$ and lags $\lambda_n = \wtddual_n$. 
    
    Under $\lambda$-strong convexity (\ref{assump:strong-convexity}), it further holds that
    \begin{align*}
        R_T(u) \le \eR^{\textnormal{drift}}_{T}(u; 3G) - \tfrac{\lambda}{2} \tsum_{n=1}^{T} \tnorm{\sz_{n}\!-\!u}^2.
    \end{align*}
}
\begin{proof}
    The drift-penalized regret of $\cB$ for these loss vectors and lags is defined as
    \begin{align*}
        \eR^{\textnormal{drift}}_{T}(u; W) = \tsum_{n=1}^T \pair{\wtg_n,\, \sz_{n} - u} + W\,\tsum_{n=1}^T \tnorm{\sz_{n-\wtddual_n} - \sz_n}.
    \end{align*}

    We treat the general convex case as $0$-strongly convex. For $\lambda \ge 0$, the $\lambda$-strong convexity of the loss functions $f_t$ gives
    \begin{align*}
        \tsum_{t=1}^T (f_t(x_t) - f_t(u)) 
        &\le \tsum_{t=1}^T \pair{g_t,\, x_t - u} - \tfrac{\lambda}{2} \tsum_{t=1}^T \norm{x_t - u}^2\\
        &\myeq{a} \tsum_{n=1}^T \pair{\wtg_n,\, \wtx_n - u} - \tfrac{\lambda}{2} \tsum_{n=1}^T \norm{\wtx_n - u}^2\\
        &\myeq{b} \tsum_{n=1}^T \pair{\wtg_n,\, \sz_{n-\wtddual_n} - u} - \tfrac{\lambda}{2} \tsum_{n=1}^T\tnorm{\sz_{n-\wtddual_n} - u}^2\\
        &\le \tsum_{n=1}^T \pair{\wtg_n,\, \sz_{n} - u} + \tsum_{n=1}^T \tnorm{\wtg_n}_{\star} \tnorm{\sz_{n-\wtddual_n} - \sz_n}  - \tfrac{\lambda}{2} \tsum_{n=1}^T\tnorm{\sz_{n-\wtddual_n} - u}^2\\
        &\myle{c} \tsum_{n=1}^T \pair{\wtg_n,\, \sz_{n} - u} + G\,\tsum_{n=1}^T \tnorm{\sz_{n-\wtddual_n} - \sz_n}  - \tfrac{\lambda}{2} \tsum_{n=1}^T\tnorm{\sz_{n-\wtddual_n} - u}^2,
    \end{align*}
    where (a) replaces sequences $\{x_t\}_{t=1}^T$, $\{g_t\}_{t=1}^T$ with their observation-orderings (Definition~\ref{def:observation-ordering}), (b) substitutes $\wtx_n = z_{n-\wtddual_n}$ which follows from Lemma~\ref{lem:CTM-dual-props}, and (c) applies the gradient norm bound (Assumption~\ref{assump:gradient-norm-bound}).
    
    For the general convex case ($\lambda = 0$), this immediately gives $R_T(u) \le \eR^{\textnormal{drift}}_{T}(u; G)$.
    
    For the strongly convex case ($\lambda > 0$), we convert the distance terms from $\sz_{n-\wtddual_n}$ to $\sz_n$. Since $\tnorm{\sz_{n-\wtddual_n} - u}, \tnorm{\sz_n - u} \le \frac{2G}{\lambda}$ by Fact~\ref{fact:sc-diameter}, we have
    \begin{align*}
        \tfrac{\lambda}{2}(\tnorm{\sz_{n} - u}^2 - \tnorm{\sz_{n-\wtddual_n} - u}^2) = \tfrac{\lambda}{2}\,\pair{\sz_{n-\wtddual_n} + \sz_n - 2u,\, \sz_n - \sz_{n-\wtddual_n}} \le 2G\, \tnorm{\sz_{n-\wtddual_n} - \sz_n}. 
    \end{align*}
    Summing over $n$ and rearranging yields $R_T(u) \le \eR^{\textnormal{drift}}_{T}(u; 3G) - \tfrac{\lambda}{2} \tsum_{n=1}^{T} \tnorm{\sz_{n} - u}^2$.
\end{proof}

\subsection{Proofs of Corollaries~\ref{cor:OCO-regular} and \ref{cor:OCO-sc}}

\begin{theorem}\label{thm:OCO-decomp-FTRL}
    $\cW_{\textnormal{OCO}}(\cB)$, where $\cB$ executes P-FTRL \eqref{eq:P-FTRL} updates, guarantees that in the 
    \begin{align*}
    \text{general convex case} \colon& \, R_T(u) \le \tfrac{D^2}{2\wteta_T} + G^2\,\tsum_{n=1}^{T} \wteta_n\, (\wtsigdual_n+1),\\
    \text{strongly convex case (\ref{assump:strong-convexity})} \colon& \, R_T(u) \le \tsum_{n=1}^T \tfrac{1/\wteta_n - 1/\wteta_{n-1} - \lambda}{2} \norm{\sz_n\!-\!u}^2 + 3G^2\tsum_{n=1}^{T} \wteta_n (\wtddual_n\!+\!1) + 3GD H_{\eta}, 
    \end{align*}
    where $H_{\eta} = \tsum_{n=1}^T (1 -  \wteta_n/{\wteta_{n-\wtddual_n}})$ denotes the total learning rate misalignment.
\end{theorem}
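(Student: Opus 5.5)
The plan is to combine Theorem~\ref{thm:OCO-decomp} with the regret and drift bounds for P-FTRL from Theorem~\ref{thm:linear-OCO} and Lemma~\ref{lem:P-FTRL-stability}. Throughout, the loss vectors are $c_n=\wtg_n$ with $\tnorm{c_n}_\star\le G$ (Assumption~\ref{assump:gradient-norm-bound}), the lags are $\lambda_n=\wtddual_n$, and the numbers of outstanding lags are $\nu_n=\wtsigdual_n$. The last identification is justified by Lemma~\ref{lem:CTM-dual-props}, since $\wtsigdual_n=|\{m: m-\wtddual_m\le n<m\}|$.

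\emph{General convex case.} Theorem~\ref{thm:OCO-decomp} gives $R_T(u)\le \eR_T(u)+G\eD_T$, and we bound the two terms separately.
\begin{itemize}
\item For $\eR_T(u)$, use the regret bound of Theorem~\ref{thm:linear-OCO}. Since $\eta_n$ is non-increasing, the coefficients $\tfrac12(1/\wteta_n-1/\wteta_{n-1})$ are nonnegative (with $1/\wteta_0=0$). Bounding $\tnorm{z_n-u}^2\le D^2$ (Assumption~\ref{assump:finite-diameter}) makes the sum telescope to $D^2/(2\wteta_T)$. The remaining term is $\tfrac12\sum_n\wteta_n\tnorm{c_n}_\star^2\le \tfrac{G^2}{2}\sum_n\wteta_n$.
\item For the drift, use the first drift bound of Theorem~\ref{thm:linear-OCO}: $\eD_T\le\sum_n\wteta_n\wtsigdual_n G$, so $G\eD_T\le G^2\sum_n\wteta_n\wtsigdual_n$.
\end{itemize}
Adding these gives $D^2/(2\wteta_T)+G^2\sum_n\wteta_n(\wtsigdual_n+\tfrac12)$, which is at most the claimed $G^2\sum_n\wteta_n(\wtsigdual_n+1)$.

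\emph{Strongly convex case.} The second part of Theorem~\ref{thm:OCO-decomp} gives $R_T(u)\le \eR_T(u)+3G\eD_T-\tfrac\lambda2\sum_n\tnorm{z_n-u}^2$. We again bound the pieces separately.
\begin{itemize}
\item For $\eR_T(u)$, apply the regret bound of Theorem~\ref{thm:linear-OCO} but keep the distance terms rather than bounding them by $D^2$. Merging them with $-\tfrac\lambda2\sum_n\tnorm{z_n-u}^2$ produces exactly $\sum_n\tfrac{1/\wteta_n-1/\wteta_{n-1}-\lambda}{2}\tnorm{z_n-u}^2$. The leftover term is $\tfrac{G^2}{2}\sum_n\wteta_n$.
\item For the drift, use Lemma~\ref{lem:P-FTRL-stability}, which gives $\eD_T\le\sum_n\wteta_n\tnorm{\sum_{m=n-\wtddual_n}^{n-1}\wtg_m}_\star+DH_\eta$. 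The inner sum has exactly $\wtddual_n$ terms, so the triangle inequality bounds its norm by $G\wtddual_n$. Hence $3G\eD_T\le 3G^2\sum_n\wteta_n\wtddual_n+3GDH_\eta$.
\end{itemize}
Collecting terms, $\tfrac{G^2}{2}\sum_n\wteta_n+3G^2\sum_n\wteta_n\wtddual_n\le 3G^2\sum_n\wteta_n(\wtddual_n+1)$, which is the claimed bound.

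There is no real obstacle here. The point that needs care is matching the abstract drift-penalized quantities ($\lambda_n,\nu_n,H_\eta$) to their observation-ordered counterparts, since the lags in the statement's $H_\eta$ are $\wtddual_n$. We also need to check that the range $\{n-\wtddual_n,\dots,n-1\}$ in Lemma~\ref{lem:P-FTRL-stability} is valid, which holds because $\wtddual_n\le n-1$ (as noted in the proof of Fact~\ref{fact:telescopic-sums-with-duals}).
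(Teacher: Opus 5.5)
Your proposal is correct and follows the paper's proof essentially step for step. In the convex case you combine Theorem~\ref{thm:OCO-decomp} with the regret and $\nu_n$-drift bounds of Theorem~\ref{thm:linear-OCO}, and in the strongly convex case you use Lemma~\ref{lem:P-FTRL-stability} with the triangle inequality on the $\wtddual_n$-term gradient sum; the only cosmetic difference is that you keep the factor $\tfrac12$ on $\sum_n \wteta_n\tnorm{c_n}_\star^2$, which the paper absorbs into the final constant.
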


\begin{proof}
    Both results follow from Theorem~\ref{thm:OCO-decomp} once we bound the standard regret $\eR_T(u) = \tsum_{n=1}^T \pair{\wtg_n,\, \sz_{n} - u}$ and prediction drift $\eD_T = \tsum_{n=1}^T \tnorm{\sz_{n-\wtddual_n} - \sz_n}$ components.
    
    To control $\eR_T(u)$ in both cases, apply Theorem~\ref{thm:linear-OCO} for $\tnorm{\wtg_n}_{\star} \le G$ and $\tnorm{z_n - u} \le D$, as follows
    \begin{align*}
        \eR_T(u) 
        &\le \tsum_{n=1}^{T} \tfrac{1/\wteta_n - 1/\wteta_{n-1}}{2} \norm{\sz_m - u}^2 + \frac12 \sum_{n=1}^{T} \wteta_n \tnorm{g_n}_{\star}^2 \le \tfrac{D^2}{2\wteta_T} + G^2 \tsum_{n=1}^T \wteta_n.
    \end{align*}

    To control $\eD_T$ in the general convex case, use Theorem~\ref{thm:linear-OCO} to write
    \begin{align*}
        \eD_T 
        &\le \tsum_{n=1}^T \wteta_n \wtsigdual_n \tnorm{\wtg_n}_{\star} \le G \tsum_{n=1}^T \wteta_n \wtsigdual_n.
    \end{align*}
    
    Then, the general convex case follows from the corresponding inequality in Theorem~\ref{thm:OCO-decomp}:
    \begin{align*}
        R_T(u) 
        \le \eR_T(u) + G\,\eD_T
        \le \tfrac{D^2}{2\wteta_T} + G^2 \tsum_{n=1}^T \wteta_n + G^2 \tsum_{n=1}^{T} \wteta_n\, \wtsigdual_n
        \le \tfrac{D^2}{2\wteta_T} + G^2\,\tsum_{n=1}^{T} \wteta_n\, (\wtsigdual_n+1).
    \end{align*}

    To control $\eD_T$ in the strongly convex case, use Lemma~\ref{lem:P-FTRL-stability} to write
    \begin{align*}
        \eD_T \le \tsum_{n=1}^T \wteta_{n} \tnorm{\tsum_{m=n-\wtddual_n}^{n-1} \wtg_m}_{\star} + D\, H_{\eta} \le G\,\tsum_{n=1}^T \wteta_{n} \wtddual_n  + D\, H_{\eta},
    \end{align*}
    where $H_{\eta} = \tsum_{n=1}^T (1 - \wteta_n/{\wteta_{n-\wtddual_n}})$ in this case with lags $\lambda_n = \wtddual_n$.

    Finally, the strongly convex case follows from the corresponding inequality in Theorem~\ref{thm:OCO-decomp}:
    \begin{align*}
        R_T(u) 
        &\le \eR_T(u) - \tfrac{\lambda}{2} \tsum_{n=1}^{T} \tnorm{\sz_{n} - u}^2 + 3G\,\eD_T\\
        &\le \tsum_{n=1}^T \tfrac{1/\wteta_n - 1/\wteta_{n-1} - \lambda}{2} \norm{\sz_n\!-\!u}^2 + G^2\tsum_{n=1}^{T} \wteta_n + 3G^2\tsum_{n=1}^{T} \wteta_n \wtddual_n + 3GD H_{\eta}\\
        &\le \tsum_{n=1}^T \tfrac{1/\wteta_n - 1/\wteta_{n-1} - \lambda}{2} \norm{\sz_n\!-\!u}^2 + 3G^2\tsum_{n=1}^{T} \wteta_n (\wtddual_n\!+\!1) + 3GD H_{\eta}.
    \end{align*}

    This concludes the proof for both cases.
\end{proof}

\begin{theorem}\label{thm:OCO-decomp-OMD}
    $\cW_{\textnormal{OCO}}(\cB)$, where $\cB$ executes OMD \eqref{eq:OMD} updates, guarantees that in the 
    \begin{align*}
    \text{general convex convex} \colon& \,\, R_T(u) \le \tfrac{D^2}{2\wteta_T} + G^2\,\tsum_{n=1}^{T} \wteta_n\, (\wtsigdual_n+1),\\
    \text{strongly convex case (\ref{assump:strong-convexity})} \colon& \,\, R_T(u) \le \tsum_{n=1}^T \tfrac{1/\wteta_n - 1/\wteta_{n-1} - \lambda}{2} \norm{\sz_n - u}^2 + 3G^2\tsum_{n=1}^{T} \wteta_n (\wtsigdual_n+1).
    \end{align*}
\end{theorem}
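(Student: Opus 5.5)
This mirrors the proof of Theorem~\ref{thm:OCO-decomp-FTRL}. The only change is that the drift is always controlled through the single-step stability of OMD (Theorem~\ref{thm:linear-OCO}), never through Lemma~\ref{lem:P-FTRL-stability}, which is specific to P-FTRL. The starting point is Theorem~\ref{thm:OCO-decomp} with loss vectors $c_n=\wtg_n$ and lags $\lambda_n=\wtddual_n$. By Lemma~\ref{lem:CTM-dual-props}, the number of outstanding lags is $\nu_n=\wtsigdual_n$. In both cases the bound splits into the standard regret $\eR_T(u)$ and the drift $\eD_T$.

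\textbf{Bounding the regret term.} Theorem~\ref{thm:linear-OCO} for OMD, with $\tnorm{\wtg_n}_\star\le G$, gives
\[
\eR_T(u)\le \sum_{n=1}^T \tfrac{1/\wteta_n-1/\wteta_{n-1}}{2}\norm{\sz_n-u}^2+\tfrac{G^2}{2}\sum_{n=1}^T\wteta_n .
\]
In the general convex case, use $\norm{\sz_n-u}\le D$ and telescope the first sum (with $1/\wteta_0=0$) to get $\tfrac{D^2}{2\wteta_T}$. In the strongly convex case, keep the first sum in its raw form.

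\textbf{Bounding the drift term.} Theorem~\ref{thm:linear-OCO} gives $\eD_T\le\sum_n\wteta_n\nu_n\tnorm{\wtg_n}_\star\le G\sum_n\wteta_n\wtsigdual_n$. This follows from the one-step bound $\tnorm{\sz_{n+1}-\sz_n}\le\wteta_n\tnorm{c_n}_\star$ of Theorem~\ref{thm:OMD}, combined with the counting identity $\nu_m=|\{n:n-\lambda_n\le m<n\}|$.

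\textbf{General convex case.} Combining with $R_T(u)\le\eR_T(u)+G\eD_T$ gives
\[
R_T(u)\le\tfrac{D^2}{2\wteta_T}+\tfrac{G^2}{2}\sum_n\wteta_n+G^2\sum_n\wteta_n\wtsigdual_n ,
\]
which is at most the claimed $\tfrac{D^2}{2\wteta_T}+G^2\sum_n\wteta_n(\wtsigdual_n+1)$.

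\textbf{Strongly convex case.} Use the second inequality of Theorem~\ref{thm:OCO-decomp}:
\[
R_T(u)\le\eR_T(u)+3G\eD_T-\tfrac\lambda2\sum_n\norm{\sz_n-u}^2 .
\]
Merging the $-\tfrac\lambda2$ terms into the regret sum yields the coefficient $\tfrac{1/\wteta_n-1/\wteta_{n-1}-\lambda}{2}$. The remaining terms are $\tfrac{G^2}{2}\sum_n\wteta_n+3G^2\sum_n\wteta_n\wtsigdual_n\le 3G^2\sum_n\wteta_n(\wtsigdual_n+1)$.

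The argument is mostly routine. The only step needing care is confirming that the lags fed to $\cB$ are exactly $\wtddual_n$ and that their outstanding counts coincide with $\wtsigdual_n$. Lemma~\ref{lem:CTM-dual-props} gives precisely this, so the single-step drift decomposition applies verbatim, with no $H_\eta$ term.
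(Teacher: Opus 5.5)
Your proof is correct and follows the paper's argument essentially line by line. You bound $\eR_T(u)$ and $\eD_T$ via Theorem~\ref{thm:linear-OCO}, using $\nu_n=\wtsigdual_n$ from Lemma~\ref{lem:CTM-dual-props}, and plug these into the two inequalities of Theorem~\ref{thm:OCO-decomp}; the paper does the same, except that it loosens $\tfrac{G^2}{2}\sum_n\wteta_n$ to $G^2\sum_n\wteta_n$ before absorbing it into the $(\wtsigdual_n+1)$ term, which is immaterial.
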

\begin{proof}
    As in the proof of Theorem~\ref{thm:OCO-decomp-FTRL}, we use Theorem~\ref{thm:linear-OCO} to show that
    \begin{align*}
        \eR_T(u) 
        \le \tsum_{n=1}^{T} \tfrac{1/\wteta_n - 1/\wteta_{n-1}}{2} \norm{\sz_m - u}^2 + G^2 \tsum_{n=1}^T \wteta_n
        \quad \text{and} \quad
        \eD_T \le G\,\tsum_{n=1}^{T} \wteta_n\, \wtsigdual_n.
    \end{align*}

    The general convex case similarly follows from its corresponding inequality in Theorem~\ref{thm:OCO-decomp}:
    \begin{align*}
        R_T(u) 
        \le \eR_T(u) + G\,\eD_T
        \le \tfrac{D^2}{2\wteta_T} + G^2 \tsum_{n=1}^T \wteta_n + G^2 \tsum_{n=1}^{T} \wteta_n\, \wtsigdual_n
        \le \tfrac{D^2}{2\wteta_T} + G^2\,\tsum_{n=1}^{T} \wteta_n\, (\wtsigdual_n+1).
    \end{align*}

    The strongly convex case also follows from its corresponding inequality in Theorem~\ref{thm:OCO-decomp}:
    \begin{align*}
        R_T(u) 
        &\le \eR_T(u) - \tfrac{\lambda}{2} \tsum_{n=1}^{T} \tnorm{\sz_{n} - u}^2 + 3G\,\eD_T\\
        &\le \tsum_{n=1}^T \tfrac{1/\wteta_n - 1/\wteta_{n-1} - \lambda}{2} \norm{\sz_n\!-\!u}^2 + G^2\tsum_{n=1}^{T} \wteta_n + 3G^2\tsum_{n=1}^{T} \wteta_n \wtsigdual_n\\
        &\le \tsum_{n=1}^T \tfrac{1/\wteta_n - 1/\wteta_{n-1} - \lambda}{2} \norm{\sz_n\!-\!u}^2 + 3G^2\tsum_{n=1}^{T} \wteta_n (\wtsigdual_n + 1).
    \end{align*}

    This concludes the proof for both cases.
\end{proof}

\restatecorollary{cor:OCO-regular}{
    $\cW_{\textnormal{OCO}}(\cB)$, where $\cB$ runs P-FTRL~\eqref{eq:P-FTRL} or OMD~\eqref{eq:OMD} with $\wteta_n = \frac{D/G}{\sqrt{n + \sum_{m=1}^{n} \wtsigdual_m}}$, guarantees
    \begin{align*}
        R_T(x^*) = O\rbr{ G D\, \sbr{\sqrt{\dtot} + \sqrt{T}}}.
    \end{align*}
}
\begin{proof}
    Using the general convex case of Theorem~\ref{thm:OCO-decomp-FTRL} for P-FTRL and Theorem~\ref{thm:OCO-decomp-OMD} for OMD, we can write 
    \begin{align*}
        R_T(x^*) 
        &\le \tfrac{D^2}{2 \wteta_{T}} +  G^2 \tsum_{n=1}^T \wteta_n (\wtsigdual_n+1)\\
        &\myle{a} GD\, \sqrt{T + \tsum_{n=1}^T \wtsigdual_n} + 2 GD\, \sqrt{\tsum_{n=1}^T (\wtsigdual_n+1)}\\
        &\myeq{b} 3 GD\, \sqrt{T + \dtot}\\
        &\le 6 GD \sbr{\sqrt{\dtot} +\sqrt{T}},
    \end{align*}
    where (a) substitutes learning rate values and applies Fact~\ref{fact:sqrt-telescope} for $X_n = \wtsigdual_n + 1$ and (b) uses the indentity $\tsum_{t=1}^T \sigdual_t = \dtot$ from Theorem~\ref{thm:CTM}.
\end{proof}

\restatecorollary{cor:OCO-sc}{
    Under $\lambda$-strong convexity (\ref{assump:strong-convexity}), $\cW_{\textnormal{OCO}}(\cB)$, where $\cB$ runs P-FTRL~\eqref{eq:P-FTRL} with $\wteta_n = \tfrac{1}{n\lambda}$, guarantees
    \begin{align*}
         R_T(x^*) =  O \rbr{\tfrac{G^2}{\lambda}\sbr{\min\cbr{\sigmax\ln T, \, \sqrt{\dtot}} + \ln T} }.
    \end{align*}
    Running OMD~\eqref{eq:OMD} with $\wteta_n = \frac{1}{n\lambda}$ instead guarantees
    \begin{align*}
         R_T(x^*) =  O\rbr{\tfrac{G^2}{\lambda} (\sigmax\!+\!1)\ln T}.
    \end{align*}
}

\begin{proof}
    We prove the result for P-FTRL first. For learning rates $\eta_n = \tfrac{1}{n\lambda}$, by letting $D \le \frac{2G}{\lambda}$ (see Fact~\ref{fact:sc-diameter}), term $H_{\eta} = \tsum_{n=1}^T \wteta_n (1/\wteta_n - 1/{\wteta_{n-\wtddual_n}})$ can be bounded from above as
    \begin{align*}
        H_{\eta} = \tsum_{n=1}^T \wteta_n (\lambda n - \lambda (n-\wtddual_n)) = \lambda \tsum_{n=1}^T \wteta_n \wtddual_n \le (2G/D) \tsum_{n=1}^T \wteta_n \wtddual_n.
    \end{align*}

    Then, using the strongly convex case of Theorem~\ref{thm:OCO-decomp-FTRL}, we write 
    \begin{align*}
        R_T(x^*) 
        &\le \tsum_{n=1}^T \tfrac{1/\wteta_n - 1/\wteta_{n-1} - \lambda}{2} \norm{\sz_n\!-\!x^*}^2 + 3G^2\tsum_{n=1}^{T} \wteta_n (\wtddual_n\!+\!1) + 3GD H_{\eta}\\
        &\myle{a} 0 + 3G^2\tsum_{n=1}^{T} \wteta_n (\wtddual_n\!+\!1) + 6G^2 \tsum_{n=1}^T \wteta_n \wtddual_n\\
        &\myle{b} 9G^2 \tsum_{n=1}^T \tfrac{\wtddual_n + 1}{\lambda n}\\
        &\myle{c} \tfrac{9G^2}{\lambda} \rbr{\min\{\sigmax \ln(eT),\, 2\sqrt{\dtot}\} + \ln(eT)},
    \end{align*}
    where (a) substitutes learning rate values in the first term and applies the above bound on $H_{\eta}$, (b)  substitutes $\wteta_n = 1/(\lambda n)$, and (c) uses Fact~\ref{fact:telescopic-sums-with-duals} for $\wtddual_n$ and Fact~\ref{fact:log-telescope} to get the desired bound.

    For OMD, using the strongly convex case of Theorem~\ref{thm:OCO-decomp-OMD}, we write 
    \begin{align*}
        R_T(x^*) 
        &\le \tsum_{n=1}^T \tfrac{1/\wteta_n - 1/\wteta_{n-1} - \lambda}{2} \norm{\sz_n\!-\!x^*}^2 + 3G^2\tsum_{n=1}^{T} \wteta_n (\wtsigdual_n+1)\\
        &\myle{a} 0 + 3G^2 \tsum_{n=1}^T \tfrac{\wtsigdual_n + 1}{\lambda n}\\
        &\myle{b} \tfrac{3G^2}{\lambda} \rbr{\sigmax \ln(eT) + \ln(eT)},
    \end{align*}
    where (a) substitutes learning rate values and (b) uses Fact~\ref{fact:telescopic-sums-with-duals} for $\wtsigdual_n$ to get the final bound.
\end{proof}

\subsection{Proof of Theorem~\ref{thm:BCO-decomp}}

In this section, we prove Theorem~\ref{thm:BCO-decomp}. We begin by recalling notation from the main body.

Algorithm $\cB$ generates base-predictions $\{z_n\}_{n=1}^T$, and in round $t$ the wrapper $\cW_{BCO}(\cB)$ produces prediction $x_t = (1 - \delta_t/r)\, y_t + \delta_t u_t$, where $y_t = \zcur$ is the current base-prediction and $u_t \sim \Unif(\Sphere)$ is sampled independently. Define the shrunk suggestions $\ydelta_t = (1 - \delta_t/r)\, y_t \in (1 - \delta_t/r)\cK$, so that $x_t = \ydelta_t + \delta_t u_t$.

Under observation-ordering (Definition~\ref{def:observation-ordering}), these quantities become
\begin{align*}
    \wty_n = \sz_{n-\wtddual_n}, \quad
    \wtydelta_n = (1 - \tfrac{\wtdelta_n}{r})\, \sz_{n-\wtddual_n}, \quad
    \wtx_n = \wtydelta_n + \wtdelta_n \wtu_n.
\end{align*}
For the smoothed losses $\wtf_n^{\delta}(x) = \E_{v\sim \Unif(\Ball)}[\wtf_n(x + \wtdelta_n v)]$ defined on $(1 - \tfrac{\wtdelta_n}{r})\cK$, let $\wtgdelta_n = \nabla \wtf^{\delta}_n(\wtydelta_n)$.\\ By Theorem~\ref{thm:SPGE}, the single-point estimator $\wthtgdelta_n = \tfrac{\dk}{\wtdelta_n} \wtf_n(\wtx_n) \wtu_n$ satisfies $\E[\wthtgdelta_n \mid \sz_{n-\wtddual_n}] = \wtgdelta_n$.

\restatetheorem{thm:BCO-decomp}{
    For any base-algorithm $\cB$, non-increasing sequence $(\delta_t)_{t\ge 1} \subset (0,r]$, and comparator $u \in \cK$, $\cW_{\textnormal{BCO}}(\cB)$ guarantees
    \begin{align*}
       \widebar{R}_T(u) &\le \E\sbr{\eR^{\textnormal{drift}}_{T}(u; G)} + \tfrac{6GR\deltatot}{r},
    \end{align*}
    where $\eR^{\textnormal{drift}}_{T}(u; W)$ is the drift-penalized regret of $\cB$ on loss vectors $c_n = \wthtgdelta_n$ with lags $\lambda_n = \wtddual_n$, and $\deltatot = \tsum_{t=1}^T \delta_t$.

    Under $\lambda$-strong convexity (\ref{assump:strong-convexity}), it further holds that
    \begin{align*}
        \widebar{R}_T(u) &\le \E\!\sbr{\eR^{\textnormal{drift}}_{T}(u; 3G)\!-\!\tfrac{\lambda}{2}\!\tsum_{n=1}^{T}  \tnorm{\sz_{n}\!-\!u}^2}\!+\!\tfrac{10 G R \deltatot}{r}.
    \end{align*}
}

\begin{proof}
    
    Fix comparator $v \in \cK$. For $n\in [T]$, let $\wtvdelta_n = (1-\wtdelta_n / r) v$ which satisfies $\tnorm{\wtvdelta_n - v} = \tfrac{\wtdelta_n \norm{v}}{r}  \le \tfrac{R \wtdelta_n}{r}$. This gives
    \begin{align}
        \widebar{R}_T(v)
        &= \E\sbr{\tsum_{n=1}^{T} (\wtf_n(\wtx_n) - \wtf_n(v))}\notag\\
        &= \E\sbr{\tsum_{n=1}^{T} (\wtf^{\delta}_n(\wtydelta_n) - \wtf^{\delta}_n(\wtvdelta_n))} \notag\\
        &+ \E\sbr{\tsum_{n=1}^{T} (\wtf_n(\wtx_n) - \wtf_n(\wtydelta_n))} + \E\sbr{\tsum_{n=1}^{T} (\wtf_n(\wtydelta_n) - \wtfdelta_n(\wtydelta_n))}\notag\\
        &+ \E\sbr{\tsum_{n=1}^{T} (\wtfdelta_n(\wtvdelta_n) - \wtf_n(\wtvdelta_n))} + \E\sbr{\tsum_{n=1}^{T} (\wtf_n(\wtvdelta_n) - \wtf_n(v))}\notag\\
        &\myle{a} \E\sbr{\tsum_{n=1}^{T} (\wtfdelta_n(\wtydelta_n) - \wtfdelta_n(\wtvdelta_n))} + \tsum_{n=1}^T \rbr{G\wtdelta_n + G\wtdelta_n  + G\wtdelta_n  + \tfrac{GR}{r}\wtdelta_n}\notag\\
        &\myle{b} \E\sbr{\tsum_{n=1}^{T} \rbr{\pair{\wtgdelta_{n},\, \wtydelta_n - \wtvdelta_n} - \tfrac{\lambda}{2} \tnorm{\wtydelta_n - \wtvdelta_n}^2}} + \tfrac{4GR \deltatot}{r}\notag\\
        &\myeq{c} \E\sbr{\tsum_{n=1}^{T} \rbr{(1-\tfrac{\wtdelta_n}{r})\,\pair{\wtgdelta_{n},\, \sz_{n-\wtddual_n} - v} - \tfrac{\lambda}{2} (1-\tfrac{\wtdelta_n}{r})^2\, \tnorm{\sz_{n-\wtddual_n} - v}^2}} + \tfrac{4GR \deltatot}{r}\notag\\
        &= \E\sbr{\tsum_{n=1}^{T} \rbr{\pair{\wtgdelta_{n},\, \sz_{n} - v} - \tfrac{\lambda}{2} \tnorm{\sz_{n-\wtddual_n} - v}^2 + \pair{\wtgdelta_{n},\, \sz_{n-\wtddual_n} - \sz_n}}}  + \tfrac{4GR\deltatot}{r} \notag\\
        &+ \E\sbr{\tsum_{n=1}^T (\tfrac{\wtdelta_n}{r} \pair{\wtgdelta_{n},\, v - \sz_{n-\wtddual_n}} + (\tfrac{\lambda \wtdelta_n}{r} - \tfrac{\lambda \wtdelta_n^2}{2r^2})\,  \tnorm{\sz_{n-\wtddual_n} - v}^2 )}.\label{eq:BCO-proof-main}
    \end{align}
    where (a) uses $G$-Lipschitzness of $\wtf_n$ and Theorem~\ref{thm:SPGE} to bound the last four terms in the previous line, (b) uses $\lambda$-strong convexity of $\wtf^{\delta}_n$, and (c) substitutes $\wtydelta_{n} = (1-\tfrac{\wtdelta_n}{r}) \sz_{n-\wtddual_n}$ and $\wtvdelta_n = (1-\tfrac{\wtdelta_n}{r}) v$.
    
    The final term in \eqref{eq:BCO-proof-main} can be bound using Cauchy-Schwartz inequality as follows:
    \begin{align}
        &\E\sbr{\tsum_{n=1}^T (\tfrac{\wtdelta_n}{r} \pair{\wtgdelta_{n},\, v- \sz_{n-\wtddual_n}} + (\tfrac{\lambda \wtdelta_n}{r} - \tfrac{\lambda \wtdelta_n^2}{2r^2})\,  \tnorm{\sz_{n-\wtddual_n} - v}^2 )}\notag\\
        &\quad \quad \quad \quad \quad \le \E\sbr{\tsum_{n=1}^T (\tfrac{\wtdelta_n}{r} \tnorm{\wtgdelta_{n}}_{\star} \tnorm{\sz_{n-\wtddual_n}-v} + \tfrac{\lambda \wtdelta_n}{r}\,  \tnorm{\sz_{n-\wtddual_n} - v}^2 )}\notag\\
        &\quad \quad \quad \quad \quad \myle{d} \tsum_{n=1}^T \rbr{\tfrac{GD\wtdelta_n}{r} + \tfrac{\lambda D^2\wtdelta_n}{r} } = \tfrac{(G+\lambda D) D\deltatot}{r},\label{eq:BCO-proof-last-term}
    \end{align}
    where (d) follows from the facts that $\tnorm{\sz_{n-\wtddual_n} - v} \le D$ and $ \tnorm{\wtgdelta_{n}}_{\star} \le G$ due to Theorem~\ref{thm:SPGE}.

    For every $n \in [T]$, the direction $\wtu_n$ and base-prediction $\sz_n$ are independent: $\wtu_n$ is sampled when making the $n$-th prediction and only affects the $n$-th observation $\wtf_n(\wtx_n) = \wtf_n((1-\wtdelta_n/r)\, \sz_{n-\wtddual_n} + \wtdelta_n \wtu_n)$, whereas $\sz_n$ is the output of $\cB$ after processing the $(n-1)$-th observation. By Theorem~\ref{thm:SPGE},
    \begin{align*}
        \E[\wthtgdelta_n \mid \sz_n] = \E\sbr{\tfrac{\dk}{\wtdelta_n}\wtf_n(\wtx_n)\wtu_n \mid \sz_n} = \nabla \wtfdelta_n\rbr{(1-\tfrac{\wtdelta_n}{r})\, \sz_{n-\wtddual_n}} = \wtgdelta_n.
    \end{align*}
    Hence, the expected regret term $\E[\eR(u)]$ satisfies
    \begin{align}  
        \E\sbr{\eR(u)} = \E\sbr{\tsum_{n=1}^{T} \pair{\wthtgdelta_{n},\, \sz_{n} - u}} \myeq{e} \E\sbr{\tsum_{n=1}^{T} \pair{\wtgdelta_{n},\, \sz_{n} - u}}.\label{eq:BCO-proof-aux-regret}
    \end{align}

    Combining \eqref{eq:BCO-proof-main}, \eqref{eq:BCO-proof-last-term}, \eqref{eq:BCO-proof-aux-regret} and the facts that $D \le 2R$ and $\pair{\wtgdelta_{n},\, \sz_{n-\wtddual_n} - \sz_n} \le G \tnorm{\sz_{n-\wtddual_n} - \sz_n}$, we have
    \begin{align}
        \widebar{R}_T(v)\le \E\sbr{\eR_T(v) - \tfrac{\lambda}{2} \tsum_{n=1}^{T}  \tnorm{\sz_{n-\wtddual_n} - v}^2 + G \eD_T}  + \tfrac{(6G + 2\lambda D) R \deltatot}{r}.\label{eq:BCO-proof-prefinal}
    \end{align}

    This concludes the proof for the convex case ($\lambda = 0$). It remains to consider the strongly convex case ($\lambda > 0$).

    For all $n \in [T]$,
    \begin{align*}
        \tfrac{\lambda}{2}\rbr{\tnorm{\sz_{n} - u}^2 - \tnorm{\sz_{n-\wtddual_n} - u}^2} 
        &= \tfrac{\lambda}{2}\pair{\sz_{n-\wtddual_n} + \sz_n - 2u,\, \sz_n - \sz_{n-\wtddual_n}} \le \lambda D\, \tnorm{\sz_{n-\wtddual_n} - \sz_n}.
    \end{align*}
    Combining with \eqref{eq:BCO-proof-prefinal} and letting $D \le \frac{2G}{\lambda}$ (see Fact~\ref{fact:sc-diameter}) yields
    \begin{align*}
        \widebar{R}_T(u) \le \E\sbr{\eR_T(u) - \tfrac{\lambda}{2} \tsum_{n=1}^{T} \tnorm{\sz_{n} - u}^2 + 3G\, \eD_T} + \tfrac{10 G R \deltatot}{r}.
    \end{align*}
    This completes the proof.
\end{proof}

\subsection{Proofs of Corollaries~\ref{cor:BCO-regular} and \ref{cor:BCO-sc}}

\begin{theorem}\label{thm:BCO-decomp-FTRL}
    $\cW_{\textnormal{BCO}}(\cB)$, where $\cB$ executes P-FTRL \eqref{eq:P-FTRL} updates, guarantees that in the 
    \begin{align*}
    \text{general convex convex} \colon \, \widebar{R}_T(u) &\le \tfrac{D^2}{2\wteta_T} + \tsum_{n=1}^T \wteta_n \rbr{G\,\E[\tnorm{\Gamma_n}] + ({\dk M}/{\wtdelta_n})^2} + GD H_{\eta} + \tfrac{6GR\deltatot}{r},\\
    \text{strongly convex case (\ref{assump:strong-convexity})} \colon \, \widebar{R}_T(u) &\le \tsum_{n=1}^T \tfrac{1/\wteta_n - 1/\wteta_{n-1} - \lambda}{2} \,\E[\norm{\sz_n\!-\!u}^2]\\
    &+ \tsum_{n=1}^T \wteta_n \rbr{3G\,\E[\tnorm{\Gamma_n}] + ({\dk M}/{\wtdelta_n})^2} + 3GD H_{\eta} + \tfrac{10 G R \deltatot}{r}, 
    \end{align*}
    where $\Gamma_n = \tsum_{m = n - \wtddual_n}^{n-1} \wthtgdelta_m$ and $H_{\eta} = \tsum_{n=1}^T (1 - \wteta_n/{\wteta_{n-\wtddual_n}})$.
\end{theorem}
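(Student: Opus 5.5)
The plan mirrors the proof of Theorem~\ref{thm:OCO-decomp-FTRL}, starting from Theorem~\ref{thm:BCO-decomp} instead of Theorem~\ref{thm:OCO-decomp}. Taking expectations of the bounds from Theorem~\ref{thm:linear-OCO} and Lemma~\ref{lem:P-FTRL-stability} on $\eR_T(u)$ and $\eD_T$, with $c_n = \wthtgdelta_n$ and $\lambda_n = \wtddual_n$, the BCO guarantee reduces to bounding $\E[\eR_T(u)] + W\,\E[\eD_T]$ with $W\in\{G,3G\}$, plus the bias term $\tfrac{6GR\deltatot}{r}$ or $\tfrac{10GR\deltatot}{r}$ that Theorem~\ref{thm:BCO-decomp} already supplies.

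\textbf{Regret component.} I will apply the P-FTRL regret bound of Theorem~\ref{thm:linear-OCO} pathwise:
\[
\eR_T(u) \le \tsum_{n=1}^T \tfrac{1/\wteta_n - 1/\wteta_{n-1}}{2}\tnorm{\sz_n-u}^2 + \tfrac12\tsum_{n=1}^T \wteta_n \tnorm{\wthtgdelta_n}_{\star}^2 .
\]
The estimator norm satisfies $\tnorm{\wthtgdelta_n}_2 = \tfrac{\dk}{\wtdelta_n}|\wtf_n(\wtx_n)|\,\tnorm{\wtu_n}_2 \le \dk M/\wtdelta_n$ by Assumption~\ref{assump:max-val}. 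Hence the second sum is at most $\tsum_n \wteta_n(\dk M/\wtdelta_n)^2$; the factor $\tfrac12$ can be dropped, which only loosens the bound.

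In the convex case, $\tnorm{\sz_n-u}\le D$, and telescoping the first sum gives $\tfrac{D^2}{2\wteta_T}$. In the strongly convex case, I keep the first sum inside the expectation and merge it with the $-\tfrac{\lambda}{2}\tsum_n\tnorm{\sz_n-u}^2$ term from Theorem~\ref{thm:BCO-decomp}. This produces the coefficient $\tfrac{1/\wteta_n-1/\wteta_{n-1}-\lambda}{2}$ multiplying $\E[\tnorm{\sz_n-u}^2]$.

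\textbf{Drift component.} I will use Lemma~\ref{lem:P-FTRL-stability} rather than the $\nu_n$-based bound, because the estimator norms scale like $1/\wtdelta_n$ and are therefore too large. The lemma gives pathwise
\[
\eD_T \le \tsum_{n=1}^T \wteta_n \tnorm{\Gamma_n} + D H_\eta .
\]
The learning rates are deterministic functions of the delay structure and of the $\wtdelta$'s, and the adversary is oblivious, so $\wteta_n$ and $H_\eta$ are non-random. Taking expectations therefore yields $\tsum_n \wteta_n\E[\tnorm{\Gamma_n}] + DH_\eta$. Multiplying by $W$ gives the terms $G\,\E[\tnorm{\Gamma_n}]$ (resp.\ $3G$) and $GDH_\eta$ (resp.\ $3GDH_\eta$).

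\textbf{Main obstacle.} The only delicate step is justifying that $\wteta_n$ and $H_\eta$ can be pulled outside the expectations. In Corollary~\ref{cor:BCO-regular}, $\wteta_n$ depends on $\wtsigdual_m$ and $\delta'_m$, and these are functions of the fixed delays and the fixed $\delta$-schedule only, not of the random directions $u_t$. With that observation every term combines directly, giving both displayed inequalities. No concentration argument is needed here, since $\E[\tnorm{\Gamma_n}]$ is left unbounded in this statement and handled later.
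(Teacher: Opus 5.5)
Your proposal is correct and matches the paper's proof. Both plug into Theorem~\ref{thm:BCO-decomp} the pathwise bound on $\eR_T(u)$ from Theorem~\ref{thm:linear-OCO}, using $\tnorm{\wthtgdelta_n}\le \dk M/\wtdelta_n$, and the pathwise bound $\eD_T\le\sum_n\wteta_n\tnorm{\Gamma_n}+DH_\eta$ from Lemma~\ref{lem:P-FTRL-stability}. Your explicit remark that $\wteta_n$ and $H_\eta$ are non-random under the oblivious adversary, and so can be pulled outside the expectations, states openly what the paper's proof uses implicitly.
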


\begin{proof}
    Both results follow from Theorem~\ref{thm:BCO-decomp} once we bound $\eR_T(u)$ and $\eD_T$ components of $\eR^{\textnormal{drift}}_T(u;W)$.

    To control $\eR_T(u)$ in both cases, apply Theorem~\ref{thm:linear-OCO} for $\tnorm{\wthtgdelta_n} \le \dk M/\wtdelta_n$ and $\tnorm{z_n - u} \le D$, as follows
    \begin{align*}
        \eR_T(u) 
        \le \tsum_{n=1}^{T} \tfrac{1/\wteta_n - 1/\wteta_{n-1}}{2} \norm{\sz_m - u}^2 + \frac12 \sum_{n=1}^{T} \wteta_n \tnorm{\wthtgdelta_n}^2
        \le \tfrac{D^2}{2\wteta_T} + \tsum_{n=1}^T \wteta_n (\dk M/\wtdelta_n)^2.
    \end{align*}

    To control $\eD_T = \tsum_{n=1}^T \tnorm{\sz_{n}\!-\!\sz_{n-\wtddual_n}}$, we use Lemma~\ref{lem:P-FTRL-stability} to write
    \begin{align*}
        \eD_T \le \tsum_{n=1}^T \wteta_{n} \rbr{ \tnorm{\tsum_{m=n-\wtddual_n}^{n-1} \wthtgdelta_m}_{\star} + (1/\wteta_{n} - 1/\wteta_{n-\wtddual_n})D} \le \tsum_{n=1}^T \wteta_n \tnorm{\Gamma_n} + D H_{\eta}. 
    \end{align*}
    
    Then, the general convex case follows from the corresponding inequality in Theorem~\ref{thm:BCO-decomp}, as follows:
    \begin{align*}
        \widebar{R}_T(u) 
        &\le \E\sbr{\eR_T(u) + G\,\eD_T} + \tfrac{6GR\deltatot}{r}\\
        &\le \tfrac{D^2}{2\wteta_T} + \tsum_{n=1}^T \wteta_n (\dk M/\wtdelta_n)^2 + G \tsum_{n=1}^T \wteta_n \E[\tnorm{\Gamma_n}] + G D H_{\eta} + \tfrac{6GR\deltatot}{r}\\
        &= \tfrac{D^2}{2\wteta_T} + \tsum_{n=1}^T \wteta_n \rbr{G\,\E[\tnorm{\Gamma_n}] + ({\dk M}/{\wtdelta_n})^2} + GD H_{\eta} + \tfrac{6GR\deltatot}{r}.
    \end{align*}

    The strongly convex case follows from the corresponding inequality in Theorem~\ref{thm:BCO-decomp}:
    \begin{align*}
        \widebar{R}_T(u) 
        &\le \E\sbr{\eR_T(u) - \tfrac{\lambda}{2} \tsum_{n=1}^{T}  \tnorm{\sz_{n} - u}^2 + 3G \eD_T}  + \tfrac{10 G R \deltatot}{r}\\
        &\le \tsum_{n=1}^T \tfrac{1/\wteta_n - 1/\wteta_{n-1} - \lambda}{2} \E[\norm{\sz_n\!-\!u}^2]\!+\!\tsum_{n=1}^T \wteta_n (\tfrac{\dk M}{\wtdelta_n})^2 \!+\!3G\rbr{\tsum_{n=1}^T \wteta_n \E[\tnorm{\Gamma_n}]\!+\!D H_{\eta}}\!+\!\tfrac{10GR\deltatot}{r}\\
        &= \tsum_{n=1}^T \tfrac{1/\wteta_n - 1/\wteta_{n-1} - \lambda}{2} \,\E[\norm{\sz_n\!-\!u}^2] + \tsum_{n=1}^T \wteta_n \rbr{3G\,\E[\tnorm{\Gamma_n}]\!+\!({\dk M}/{\wtdelta_n})^2}\!+\!3GD H_{\eta}\!+\!\tfrac{10 G R \deltatot}{r}.
    \end{align*}
    This concludes the proof for both cases.
\end{proof}

\begin{lemma}\label{lem:BCO-meandering}
    For $\Gamma_n = \tsum_{m = n - \wtddual_n}^{n-1} \wthtgdelta_m$ from Theorem~\ref{thm:BCO-decomp-FTRL} and $\delta'_n = \delta_{\rho(n)+d_{\rho(n)}}$, it holds that
    \begin{align*}
        G\,\E\sbr{\norm{\Gamma_n}} \le 4 (G^2 \, \wtddual_n + ({\dk M}/{\delta'_n})^2).
    \end{align*} 
\end{lemma}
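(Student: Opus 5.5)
The plan is to split each estimate in $\Gamma_n$ into its conditional mean and a zero-mean noise term. The conditional means give a bias part of size at most $G\wtddual_n$. The noise terms form an orthogonal sum whose second moment we can control. We work in the Euclidean norm, as is assumed for BCO. For $m\in[T]$ write
\[
\wthtgdelta_m = \wtgdelta_m + \xi_m, \qquad \xi_m = \wthtgdelta_m - \wtgdelta_m ,
\]
where $\wtgdelta_m=\nabla\wtfdelta_m(\wtydelta_m)$ is the conditional mean given $\sz_{m-\wtddual_m}$, by Theorem~\ref{thm:SPGE}. By the triangle inequality,
\[
\tnorm{\Gamma_n} \le \tsum_{m=n-\wtddual_n}^{n-1}\tnorm{\wtgdelta_m} + \tnorm{\tsum_{m=n-\wtddual_n}^{n-1}\xi_m}.
\]
The first sum has $\wtddual_n$ terms, each of norm at most $G$ by Theorem~\ref{thm:SPGE}. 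So it is at most $G\wtddual_n$, and it contributes $G^2\wtddual_n$ to $G\,\E[\tnorm{\Gamma_n}]$.

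For the noise sum we use Jensen, $\E\tnorm{\sum\xi_m}\le(\E\tnorm{\sum\xi_m}^2)^{1/2}$. We then show that the cross terms vanish: $\E\pair{\xi_m,\xi_k}=0$ for $m<k$ in observation order. This is the step that needs care, since the noise terms are indexed by observation order while the $u$'s are sampled in prediction order.

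The argument for orthogonality goes as follows. Condition on all exploration directions $\{\wtu_j\}_{j\ne k}$. The point $\sz_{k-\wtddual_k}$ is built from observations that precede time $\wtepa_k$ (Lemma~\ref{lem:CTM-dual-props}). Observation $k$ is not among them, so $\sz_{k-\wtddual_k}$ is a function of $\{\wtu_j\}_{j\ne k}$, while $\wtu_k$ is independent of these. Hence $\E[\xi_k\mid\{\wtu_j\}_{j\ne k}]=0$.

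On the other hand, $\xi_m$ depends only on $\wtu_m$ and on $\sz_{m-\wtddual_m}$. The latter uses only observations with index less than $m<k$, so it does not involve $\wtu_k$. Thus $\xi_m$ is measurable with respect to $\{\wtu_j\}_{j\ne k}$, and the cross term vanishes. (The losses are fixed by an oblivious adversary, which is used here.) It follows that
\[
\E\tnorm{\tsum_m\xi_m}^2=\tsum_m\E\tnorm{\xi_m}^2\le\tsum_m\E\tnorm{\wthtgdelta_m}^2\le\tsum_m(\dk M/\wtdelta_m)^2 ,
\]
using that variance is at most the second moment and that $\tnorm{\wthtgdelta_m}\le\dk M/\wtdelta_m$.

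Next we replace $\wtdelta_m$ by $\delta'_n$. For $m<n$, observation $m$ arrives no later than observation $n$. Its prediction round $\rho(m)$ is therefore at most the observation round $\rho(n)+d_{\rho(n)}$. Since $(\delta_t)$ is non-increasing, $\wtdelta_m=\delta_{\rho(m)}\ge\delta'_n$. Hence $\E\tnorm{\sum\xi_m}\le\sqrt{\wtddual_n}\,\dk M/\delta'_n$.

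Finally, AM--GM gives $G\sqrt{\wtddual_n}\,\dk M/\delta'_n\le\tfrac12\big(G^2\wtddual_n+(\dk M/\delta'_n)^2\big)$. Adding this to the bias contribution $G^2\wtddual_n$ yields at most $\tfrac32 G^2\wtddual_n+\tfrac12(\dk M/\delta'_n)^2$, which is comfortably within the stated constant $4$.
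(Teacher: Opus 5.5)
Your proof is correct and follows the same structure as the paper's: the bias/noise split bounding the bias by $G\wtddual_n$, Jensen on the noise sum, orthogonality of the noise terms, monotonicity of $\delta_t$ to pass to $\delta'_n$, and a final AM--GM step. The differences are minor: you get orthogonality by leave-one-out conditioning in observation order rather than the paper's prediction-order filtration $\sigma(u_1,\dots,u_{t-1})$, and you bound each noise term's second moment by $(\dk M/\wtdelta_m)^2$ (variance at most second moment) instead of the almost-sure bound $(G+\dk M/\delta'_n)^2$, which gives slightly better constants.
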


\begin{proof}
    Note that $\tnorm{\wtgdelta_n} \le G$ and $\tnorm{\wthtgdelta_n} \le \frac{\dk M}{\wtdelta_n}$. With probability one, it holds that
    \begin{align*}
        \norm{\Gamma_n}
        \le \tsum_{m=n-\wtddual_n}^{n-1} \tnorm{\wtgdelta_m} + \tnorm{\underbrace{\tsum_{m=n-\wtddual_n}^{n-1} (\wthtgdelta_m - \wtgdelta_m)}_{=: M_n} }
        \le G \wtddual_n + \tnorm{M_n}.
    \end{align*}
    Since $\delta'_n = \delta_{\rho(n)+d_{\rho(n)}}$, it holds that $\delta'_n \le \delta_t$ for all $t \in \{s: \epb_s \in I_{\rho(n)}\}$, because for all such $t$ we have $\epa_t < \epb_t < \epb_{\rho(n)} < \epa_{\rho(n)+d_{\rho(n)}+1}$ (i.e., $t \le \rho(n)+d_{\rho(n)}$) and sequence $(\delta_t)_{t=1}^T$ is non-increasing.

    Let $G' = G +\dk M / \delta'_n$ so that $\tnorm{\htgdelta_t - g^{\delta}_t} \le G'$ for all $t \in \{s: \epb_s \in I_{\rho(n)}\}$.
    
    Let $v_t = (\htgdelta_t - g^{\delta}_t) \ind(\epb_t \in I_{\rho(n)})$ for $t\in [T]$, so that $M_n = \tsum_{t=1}^T v_t$, $\E[v_t] = 0$, and $\tnorm{v_t} \le G'\, \ind(\epb_t \in I_{\rho(n)})$. Consider sigma-algebras $\cF^{u}_t = \sigma(u_1, ..., u_{t-1})$. Then, for all $t\in[T]$, $v_t \in \cF^u_{t+1}$ and $\E[v_{t}|\cF_t^u] = 0$. Therefore, it holds that
    \begin{align*}
        \E\sbr{\norm{M_n}^2} &= \tsum_{t=1}^T \E\sbr{\tnorm{v_t}^2} + 2 \tsum_{t=1}^T \tsum_{s=t+1}^T \E\sbr{ \pair{v_t, v_s} }\\
        &\le \tsum_{t=1}^T (G')^2 \ind(\epb_t \in I_{\rho(n)}) + 2 \tsum_{t=1}^T \tsum_{s=t+1}^T \E\sbr{ \pair{v_t, \E[v_s | \cF^{u}_{s}]}}\\
        &= (G')^2 \,\wtddual_n + 0.
    \end{align*}
    Combining these results and applying Jensen's inequality, we prove the inequality
    \begin{align*}
        G\,\E\sbr{\norm{\Gamma_n}} 
        &\le G^2\, \wtddual_n + G\sqrt{\E[\norm{M_n}^2]} 
        \le G^2\, \wtddual_n  + G (G + \dk M/\delta'_n) \sqrt{\wtddual_n}\\
        &\le 2G^2 \, \wtddual_n + (G^2 \wtddual_n)^{1/2}  (\dk M/\delta'_n)
        \le 4 (G^2 \, \wtddual_n + ({\dk M}/{\delta'_n})^2).
    \end{align*}
    This concludes the proof of the lemma.
\end{proof}

\restatecorollary{cor:BCO-regular}{
    $\cW_{\textnormal{BCO}}(\cB)$, where $(\delta_t)_{t\ge 1}$ is set to either a round-dependent schedule $\delta_t = r\,\min\cbr{ 1, \frac{\sqrt{\nu \dk}}{t^{1/4}}}$ or a fixed schedule $\delta_t = r\,\min\cbr{ 1, \frac{\sqrt{\nu \dk}}{T^{1/4}}}$ and $\cB$ runs P-FTRL~\eqref{eq:P-FTRL} with $\wteta_n = \frac{D/G}{\sqrt{n + \sum_{m=1}^n (\wtsigdual_m  + ({\nu \dk r}/{\delta'_m})^2)} }$, guarantees
    \begin{align*}
        \widebar{R}_T(x^*) = O\rbr{GD\sbr{\sqrt{\dtot} + T^{3/4} \sqrt{\nu \dk}} }.
    \end{align*}
}
\begin{proof}
    The proof for a fixed schedule is the same as for a varying schedule, so we only present the latter.
    From Theorem~\ref{thm:BCO-decomp-FTRL} and Lemma~\ref{lem:BCO-meandering}, we have the following 
     \begin{align*}
        \widebar{R}_T(x^*) 
        &\le \tfrac{D^2}{2\wteta_T} + \tsum_{n=1}^T \wteta_n \rbr{G\,\E[\tnorm{\Gamma_n}] + ({\dk M}/{\wtdelta_n})^2} + GD H_{\eta} + \tfrac{6GR\deltatot}{r}\\
        &\le \tfrac{D^2}{\wteta_T} + 5G^2\tsum_{n=1}^T \wteta_n \rbr{\wtddual_n + ({\nu \dk r}/{\delta'_n})^2} + GD H_{\eta} +  \tfrac{6GD\deltatot}{r}.
    \end{align*}

    We bound the terms $1/\wteta_T$, $\tsum_{n=1}^T \wteta_n \rbr{\wtddual_n + ({\nu \dk r}/{\delta'_n})^2}$, and $H_{\eta}$ separately.

    \begin{enumerate}[before=\vspace{-0.2cm}]
        \item As $\delta_T = \min_{t\in[T]}\delta_t$ and $\tsum_{n=1}^T \wtsigdual_n = \dtot$, it holds that
        \begin{align*}
            \tfrac{1}{\wteta_T} =  \tfrac{G}{D} \sqrt{T + \tsum_{m=1}^T (\wtsigdual_m  + ({\nu \dk r}/{\delta'_m})^2)}  \le \tfrac{G}{D} \rbr{\sqrt{T + \dtot} + \tfrac{\nu \dk r}{\delta_T}\sqrt{T}}.
        \end{align*}
        \item As $\tsum_{m=1}^n \wtddual_m \le \tsum_{m=1}^n \wtsigdual_n$ for all $n\in [T]$ according to Fact~\ref{fact:dual-sums}, we write
        \begin{align*}
            \tsum_{n=1}^T \wteta_n \rbr{\wtddual_n + ({\nu \dk r}/{\delta'_n})^2} 
            &\le \tfrac{D}{G} \tsum_{n=1}^T \frac{\wtddual_n + ({\nu \dk r}/{\delta'_n})^2}{\sqrt{\sum_{m=1}^n (\wtddual_m + ({\nu \dk r}/{\delta'_m})^2) }} \\
            &\le \tfrac{2D}{G}  \sqrt{\tsum_{n=1}^T (\wtddual_n + ({\nu \dk r}/{\delta'_n})^2)} \\
            &\le \tfrac{2D}{G} \rbr{ \sqrt{T + \dtot} +  ({\nu \dk r}/{\delta_T})\sqrt{T}},
        \end{align*}
        where the second inequality follows from Fact~\ref{fact:sqrt-telescope} with $X_n = \wtddual_n + ({\nu \dk r}/{\delta'_n})^2$.
        \item For all $n \in [T]$, it holds that
        \begin{align*}
            \tsum_{m=n}^{n+\wtsigdual_n} \wteta_m 
            \le \tsum_{m=n}^{n+\wtsigdual_n} \tfrac{D/G}{\sqrt{m + \sum_{k=n}^{m} \wtsigdual_k }}
            \myle{a} \tsum_{m=0}^{\wtsigdual_n} \tfrac{D/G}{\sqrt{(m+1)+\sum_{k=0}^{m} (\wtsigdual_n - k)}}
            \myle{b} \tsum_{m=0}^{\wtsigdual_n} \tfrac{2D/G}{\sqrt{(m+1)(\wtsigdual_n+1)}} 
            \myle{c} \tfrac{4D}{G},
        \end{align*}
        where (a) follows from Fact~\ref{fact:backlog-increments}, (b) applies inequality $\tsum_{k=0}^{m} (\wtsigdual_n - k) = \tfrac{(m+1)(2\wtsigdual_n - m)}{2} \ge \tfrac{(m+1)\wtsigdual_n}{2}$, and (c)  uses Fact~\ref{fact:sqrt-telescope} to conclude $\tsum_{m=0}^{\wtsigdual_n} \frac{1}{\sqrt{m+1}} \le 2\sqrt{\wtsigdual_n+1}$. 
        
        Therefore, by Lemma~\ref{lem:H-eta-bound} and the bound on $1/\wteta_T$ above, we have
        \begin{align*}
            H_{\eta}
            &\le \tfrac{\max_{n\in [T]} {\sum_{m=n}^{n+\wtsigdual_n} \wteta_m } }{\wteta_T}
            \le 8 \rbr{\sqrt{T + \dtot} + ({\nu \dk r}/{\delta_T}) \sqrt{T}}.
        \end{align*}
    \end{enumerate}

    Combining all these results, we have
    \begin{align*}
        \widebar{R}_T(x^*) = O\rbr{GD \rbr{\sqrt{T + \dtot} + ({\nu \dk r}/{\delta_T})\sqrt{T} + \deltatot/r}}.
    \end{align*}

    Next, we provide the bounds on $r/\delta_T$ and $\deltatot/r$ in order to derive the final bound. Trivially, it holds that ${r}/{\delta_T} \le \max\cbr{1, \tfrac{T^{1/4}}{\sqrt{\nu \dk}}}$. Also, note that $\sum_{t=1}^T \frac{1}{t^{1/4}} \le 1 + \int_1^T x^{-1/4}\,dx = 1 + \frac{4}{3}(T^{3/4}-1) \le \frac{4}{3}T^{3/4}$, so that
    \begin{align*}
        \deltatot / r = \tsum_{t=1}^T \min\{ 1, \frac{\sqrt{\nu \dk}}{t^{1/4}}\} \le \tsum_{t=1}^T \tfrac{\sqrt{\nu \dk}}{t^{1/4}} \le \frac{4}{3}T^{3/4} \sqrt{\nu \dk}.
    \end{align*}
    Thus, we have $r/\delta_T \le \max\cbr{1, \tfrac{T^{1/4}}{\sqrt{\nu \dk}}}$ and ${\deltatot}/{r} \le \tfrac{4}{3} T^{3/4} \sqrt{\nu \dk}$.

    Finally, substituting the bounds for $r/\delta_T$ and $\deltatot/r$, we have
    \begin{align*}
        \widebar{R}_T(x^*)
        = O\rbr{GD \rbr{\sqrt{T + \dtot} + T^{3/4}\sqrt{\nu \dk} + T^{1/2}(\nu {\dk})}} = O\rbr{GD \rbr{\sqrt{\dtot} + T^{3/4}\sqrt{\nu \dk}}}. 
    \end{align*}
\end{proof}

\restatecorollary{cor:BCO-sc}{
    Under $\lambda$-strong convexity (\ref{assump:strong-convexity}), $\cW_{\textnormal{BCO}}(\cB)$, where $(\delta_t)_{t\ge 1}$ is set to either a round-dependent schedule $\delta_t = r\,\min\cbr{1,(\tfrac{\nu^2 {\dk}^2 \ln t}{t})^{1/3} }$ or a fixed schedule $\delta_t = r\,\min\cbr{1,(\tfrac{\nu^2 {\dk}^2 \ln T}{T})^{1/3}}$ and $\cB$ runs P-FTRL~\eqref{eq:P-FTRL} with $\wteta_n = \tfrac{1}{n\lambda}$, guarantees
    \begin{align*}
    \widebar{R}_T(x^*)
    &= O\!\left(
    \tfrac{G^2}{\lambda}
    \left[
    \min\{\sigmax \ln T,\, \sqrt{\dtot}\}
     + T^{2/3} (\ln T)^{1/3} (\nu {\dk})^{2/3}
    \right]
    \right).
    \end{align*}
}
\begin{proof}
    The proof for a fixed schedule is the same as for a varying schedule, so we only present the latter.
    From Theorem~\ref{thm:BCO-decomp-FTRL} and Lemma~\ref{lem:BCO-meandering}, noting that $1/\wteta_n - 1/\wteta_{n-1}-\lambda = 0$, we have the following 
     \begin{align*}
        \widebar{R}_T(x^*) 
        &\le  0 + \tsum_{n=1}^T \wteta_n \rbr{3G\,\E[\tnorm{\Gamma_n}] + ({\dk M}/{\wtdelta_n})^2} + 3GD H_{\eta} + \tfrac{10 G R \deltatot}{r}, \\
        &\le 13 G^2\tsum_{n=1}^T \wteta_n \rbr{\wtddual_n + ({\nu \dk r}/{\delta'_n})^2} + 3GD H_{\eta} +  \tfrac{10GD\deltatot}{r}.
    \end{align*}
    We bound the terms $\tsum_{n=1}^T \wteta_n \rbr{\wtddual_n + ({\nu \dk r}/{\delta'_n})^2}$ and $H_{\eta}$ separately.
    \begin{enumerate}[before=\vspace{-0.2cm}]
        \item Using Fact~\ref{fact:telescopic-sums-with-duals} for dual delays $\wtddual_n$ and Fact~\ref{fact:log-telescope}, we write
        \begin{align*}
            \tsum_{n=1}^T \wteta_n \rbr{\wtddual_n + ({\nu \dk r}/{\delta'_n})^2} 
            &\le \tfrac{1}{\lambda} \min\cbr{\sigmax \ln(eT),\, 2\sqrt{\dtot}} + \tfrac{1}{\lambda} ({\nu \dk r}/{\delta_T})^2 \ln(eT).
        \end{align*}
        \item Substituting learning rate values into $H_{\eta}$ and applying Fact~\ref{fact:telescopic-sums-with-duals} for dual delays $\wtddual_n$ yields 
        \begin{align*}
            H_{\eta} = \tsum_{n=1}^T (1 - \tfrac{\wteta_n}{\wteta_{n-\wtddual_n}}) = \tsum_{n=1}^T \tfrac{\wtddual_n}{n} \le \min\cbr{\sigmax \ln(eT),\, 2\sqrt{\dtot}}.
        \end{align*}
    \end{enumerate}

    Combining all these results and letting $D \le \frac{2G}{\lambda}$ (see Fact~\ref{fact:sc-diameter}), we have
    \begin{align*}
        \widebar{R}_T(x^*) 
        &\le \rbr{\tfrac{13G^2}{\lambda} + 3GD} \min\cbr{\sigmax \ln(eT),\, 2\sqrt{\dtot}} +  \tfrac{13G^2}{\lambda} ({\nu \dk r}/{\delta_T})^2 \ln(eT) + \tfrac{10GD\deltatot}{r}\\
        & = O\rbr{\tfrac{G^2}{\lambda} \rbr{\min\cbr{\sigmax \ln(T),\, \sqrt{\dtot}} + ({\nu \dk r}/{\delta_T})^2 \ln(T) + \deltatot/r}}.
    \end{align*}

    Next, we provide the bounds on $r/\delta_T$ and $\deltatot/r$ in order to derive the final bound. Trivially, it holds that ${r}/{\delta_T} \le \max\cbr{1, (\tfrac{T}{\nu^2 {\dk}^2 \ln T})^{1/3}}$. Also, note that $\sum_{t=1}^T \frac{1}{t^{1/3}} \le 1 + \int_1^T x^{-1/3}\,dx = 1 + \frac{3}{2}(T^{2/3}-1) \le \frac{3}{2}T^{2/3}$, so that
    \begin{align*}
        \deltatot / r = \tsum_{t=1}^T \min\cbr{1,(\tfrac{\nu^2 {\dk}^2 \ln t}{t})^{1/3} }\le \tsum_{t=1}^T \tfrac{\sqrt{(\nu^2 {\dk}^2 \ln T)^{1/3}}}{t^{1/3}} \le \frac{3}{2} T^{2/3} \ln(T)^{1/3} (\nu {\dk})^{2/3}.
    \end{align*}
    Thus, we have $r/\delta_T \le \max\cbr{1, (\tfrac{T}{\nu^2 {\dk}^2 \ln T})^{1/3}}$ and ${\deltatot}/{r} \le \frac{3}{2} T^{2/3} \ln(T)^{1/3} (\nu {\dk})^{2/3}$.

    Finally, substituting the bounds for $r/\delta_T$ and $\deltatot/r$, we have
    \begin{align*}
        \widebar{R}_T(x^*)
        &= O\rbr{\tfrac{G^2}{\lambda} \rbr{\min\{\sigmax \ln(T),\, \sqrt{\dtot}\} + T^{2/3} \ln(T)^{1/3} (\nu \dk)^{2/3} + \nu^2 {\dk}^2 \ln(T)}}\\
        &= O\rbr{\tfrac{G^2}{\lambda} \rbr{\min\{\sigmax \ln(T),\, \sqrt{\dtot}\} + T^{2/3} \ln(T)^{1/3} (\nu \dk)^{2/3}}}.
    \end{align*}
\end{proof}

\section{Skipping Scheme Proofs}\label{app:skipping}

\restatetheorem{thm:skipping}{
    The regret $R_T(u)$ of $\cS_{\textnormal{skip}}(\cA)$ against any comparator $u \in \cK$ satisfies
    \begin{align*}
        R_T(u) &\leq R'_T(u) + G D |Q^*|,
    \end{align*}
    where $R'_T(u) = \tsum_{t=1}^T (f'_t(x_t)-f'_t(u))$ is the regret of $\cA$ on delayed OCO problem with losses and delays $\{f'_t, d'_t\}_{t=1}^T$. Moreover, it holds that
    \begin{align*}
        |Q^*| + \sqrt{\tsum_{t=1}^T d'_t} &= O\rbr{\min_{Q \subseteq [T]} \cbr{ |Q| + \sqrt{\tsum_{t \notin Q} d_t} }}.
    \end{align*}
}
\begin{proof}
    The regret bound follows directly from the observation that each skipped round contributes at most $GD$ to the regret. Specifically, for all $t \in [T]$, $f_t(x_t) - f_t(u) \le f'_t(x_t) - f'_t(u) + GD \cdot \ind(t \in Q^*)$. Summing over $t$ yields the first claim.

    It remains to establish the second claim regarding the skipped set $R$. We introduce additional notation: let $S_t$ denote the state of the tracking set $S$ at the start of round $t$. By construction, $d'_t = \tsum_{s=1}^T \ind(t \in S_s) \le d_t$. Then, $\dtot' = \tsum_{t=1}^T d'_t = \tsum_{t=1}^T |S_t|$. As $\cD_t = \cD_{t-1} + |S_t|$, it also holds that $\dtot' = \cD_T$. Furthermore, by the preemption rule, round $t$ is not removed before round $t + d'_t - 1$, which implies $d'_t \le \sqrt{\cD_{t-1}} + 1 \le \sqrt{\dtot'} + 1$ for all $t \in [T]$.

    \textbf{Part 1 (Bounding $|Q^*|$):} We show that $|Q^*| \le 2\sqrt{\dtot'}$. Let $Q^* = \{t_1, \ldots, t_{|Q^*|}\}$ ordered by time of skipping. We claim that $d'_{t_s} \ge s/2$ for all $s \in [|Q^*|]$. Indeed, for each $s \in [|Q^*|]$,
    \begin{align*}
        d'_{t_s} > \sqrt{\cD_{t_s + d'_{t_s}}} = \sqrt{\tsum_{t=1}^T \min\{d'_t, t_s + d'_{t_s} - t\}} \ge \sqrt{\tsum_{k=1}^{s} d'_{t_k}},
    \end{align*}
    which rearranges to $d'_{t_s} \ge \frac{1 + \sqrt{1 + 4 \sum_{k=1}^{s-1} d'_{t_k}}}{2}$ by solving the quadratic inequaltiy.

    We proceed by induction. The base case gives $d'_{t_1} \ge 1$. Assuming $d'_{t_k} \ge k/2$ for all $k < s$, we obtain
    \begin{align*}
        d'_{t_s} \ge \frac{1 + \sqrt{1 + 4 \sum_{k=1}^{s-1} \frac{k}{2}}}{2} = \frac{1 + \sqrt{1 + s(s-1)}}{2} \ge \frac{s}{2}.
    \end{align*}
    Therefore, we have
    \begin{align*}
        \sqrt{\dtot'} \ge \sqrt{\tsum_{s=1}^{|Q^*|} d'_{t_s}} \ge \sqrt{\tsum_{s=1}^{|Q^*|} \frac{s}{2}} = \sqrt{\tfrac{|Q^*|(|Q^*|+1)}{4}} \ge \frac{|Q^*|}{2}.
    \end{align*}

    \textbf{Part 2 (Bounding $\dtot'$):} Fix an arbitrary $Q \subseteq [T]$. Using $d'_t \le \sqrt{\dtot'} + 1$ for all $t$, we have
    \begin{align*}
        \tsum_{t \notin Q} d_t \ge \tsum_{t \notin Q} d'_t \ge \max\cbr{0,\, \dtot' - |Q|(\sqrt{\dtot'} + 1)}.
    \end{align*}
    Consequently, we can write
    \begin{align*}
        |Q| + \sqrt{\tsum_{t \notin Q} d_t} 
        &\ge \min_{q \ge 0} \cbr{q + \sqrt{\max\{0,\, \dtot' - q(\sqrt{\dtot'} + 1)\}}} \\
        &\ge \min\cbr{\sqrt{\dtot'} - 1,\, \min_{q \in [0, \sqrt{\dtot'} - 1]} \cbr{q + \sqrt{\dtot' - q(\sqrt{\dtot'} + 1)}}} \\
        &= \sqrt{\dtot'} - 1.
    \end{align*}

    Combining both parts and minimizing over $Q \subseteq [T]$, we conclude
    \begin{align*}
        |Q^*| + \sqrt{\tsum_{t=1}^T d'_t} \le 2\sqrt{\dtot'} + \sqrt{\dtot'} = 3\sqrt{\dtot'} = O\rbr{\min_{Q \subseteq [T]} \cbr{|Q| + \sqrt{\tsum_{t \notin Q} d_t}}},
    \end{align*}
    completing the proof.
\end{proof}

\restatecorollary{cor:skipping-OCO-BCO}{
    Under the conditions of Corollaries~\ref{cor:OCO-regular} and~\ref{cor:BCO-regular}, applying the skipping wrapper yields:
    \begin{align*}
        &R_T(x^*) = O\bigg(\!GD \bigg[\!\min_{Q \subseteq [T]} \Big\{\!|Q|\!+\!\sqrt{\!\tsum_{t\notin Q} d_t\!}\Big\}\!+\!\sqrt{T}\bigg]\!\bigg),\\
        &\widebar{R}_T(x^*) = O\bigg(\!GD \bigg[\!\min_{Q \subseteq [T]} \Big\{\!|Q|\!+\!\sqrt{\!\tsum_{t\notin Q} d_t\!}\Big\}\!+\!T^{\frac{3}{4}}\sqrt{\!\nu \dk}\bigg]\!\bigg),
    \end{align*}
    for $\cS_{\textnormal{skip}}(\cW_{\textnormal{OCO}}(\cB))$ and $\cS_{\textnormal{skip}}(\cW_{\textnormal{BCO}}(\cB))$, respectively.
}
\begin{proof}
    Follows immediately from Theorem~\ref{thm:skipping} combined with Corollaries~\ref{cor:OCO-regular} and \ref{cor:BCO-regular} for controlling $R'_T(x^*)$.    
\end{proof}

\section{Bandit Convex Optimization with Two-Point Delayed Feedback}\label{app:two-point-BCO}

In this section, we extend the analysis of BCO with delayed feedback (see Section~\ref{sec:setting}) to the two-point feedback setting \cite{agarwal2010}. In each round $t$, the player selects a pair of points $x_t^1,x_t^2 \in \cK$ (the same
domain as Section~\ref{sec:setting}), incurs loss $\tfrac{1}{2}\bigl(f_t(x_t^1)+f_t(x_t^2)\bigr)$, and later receives the delayed feedback $(t, f_t(x_t^1), f_t(x_t^2))$ in round $t + d_t$.

\begin{figure}[H]
\centering
\fbox{\parbox{0.97\textwidth}{
    \textbf{BCO with Two-Point Delayed Feedback}\vspace{5pt}\\
    $\bullet$ \textit{Latent Parameters:} number of rounds $T$.\\
    $\bullet$ \textit{Pre-game:} adversary selects loss functions $f_{t}:\cK \to \R$ and delays $d_t \in [T-t]$ for all $t \in [T]$.
    
    \vspace{5pt}
    \noindent For each round $t=1, 2, \ldots, T$:
    \begin{enumerate}[leftmargin=1cm,noitemsep,before=\vspace{-0.3cm},after=\vspace{-0.3cm}]
        \item The player predicts points $(x^1_t, x^2_t)$ and incurs loss $\frac{1}{2}(f_t(x^1_t) + f_t(x^2_t))$.
        \item For all $s \le t$ such that $s+d_s = t$, the environment reveals $(s, f_s(x_s^1), f_s(x_s^2))$.
    \end{enumerate}
}}
\captionsetup{font=footnotesize}
\caption{\small Bandit convex optimization with two-point delayed feedback.}
\label{fig:setting-2pt}
\end{figure}

The two-point regret against a comparator $u\in\cK$ is defined as
 $\Rtwo_T(u) = \tsum_{t=1}^T (\frac{1}{2}(f_t(x^1_t) + f_t(x^2_t)) - f_t(u))$ and its expectation as $\wbRtwo_T(u) = \E[\Rtwo_T(u)]$. Letting $x^* \in \argmin_{x\in\cK} \tsum_{t=1}^T f_t(x)$ denote the best action in hindsight, the goal is to minimize $\wbRtwo_T(x^*)$.

\paragraph{Preliminaries on two-point gradient estimation.}
We briefly review the standard two-point gradient estimator. By Theorem~\ref{thm:SPGE}, the gradient of the $\delta$-smoothed loss $f^{\delta}$ admits a spherical expectation representation; by symmetry of $u\sim\Unif(\Sphere)$, this can be written in the two-point form
\[
\nabla f^{\delta}(x)
= \E_{u\sim \Unif(\Sphere)}\!\left[\tfrac{\dk}{2\delta}\bigl(f(x+\delta u)-f(x-\delta u)\bigr)u\right].
\]
Accordingly, the estimator $\tfrac{\dk}{2\delta}\bigl(f(x+\delta u)-f(x-\delta u)\bigr)u$ is unbiased for $\nabla f^{\delta}(x)$ \citep{agarwal2010,shamir2015}. Moreover, unlike the one-point estimator, its second moment does not scale with $1/\delta$, as captured by the following lemma.

\begin{lemma}[Lemma 5, \citet{shamir2015}]\label{lem:shamir}
There exists a numerical constant $c>0$ such that for every $G$-Lipschitz function $f:\cK\to\R$, $\delta \in (0,r]$, and $x \in (1-\delta/r)\cK$, it holds that
\begin{align*}
\E_{u\sim\Unif(\Sphere)}\!\left[\left\|\tfrac{\dk}{2\delta}\bigl(f(x+\delta u)-f(x-\delta u)\bigr)u\right\|^2\right]
\le c\,\dk G^2.
\end{align*}
\end{lemma}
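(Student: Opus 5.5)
The plan is to reduce the second moment to the variance of a Lipschitz function on the sphere, and then bound that variance with a dimension-free concentration (Poincaré) inequality on $\Sphere$. This gain of a factor $1/\dk$ is what cancels one of the two factors of $\dk$ in front.

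\textbf{Step 1: reduce to a variance.} Since $\tnorm{u}=1$, the quantity to bound is
\begin{align*}
\tfrac{\dk^2}{4\delta^2}\,\E\bigl[(h(u)-h(-u))^2\bigr], \qquad h(u) = f(x+\delta u).
\end{align*}
The function $h$ is well defined because $x \in (1-\delta/r)\cK$ implies $x \pm \delta u \in \cK$. Let $\alpha = \E[h(u)]$. Then
\begin{align*}
(h(u)-h(-u))^2 \le 2(h(u)-\alpha)^2 + 2(h(-u)-\alpha)^2 .
\end{align*}
Because $-u$ has the same law as $u$, taking expectations gives $\E[(h(u)-h(-u))^2] \le 4\,\mathrm{Var}(h(u))$.

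\textbf{Step 2: dimension-free variance bound.} The map $h$ is $G\delta$-Lipschitz on $\Sphere$, since $|h(u)-h(v)| \le G\delta\tnorm{u-v}$. I would use the Poincaré inequality on the unit sphere $\Sphere$ (for $\dk \ge 2$), whose spectral gap is $\dk-1$:
\begin{align*}
\mathrm{Var}(h) \le \tfrac{1}{\dk-1}\,\E\bigl[\tnorm{\nabla_{\Sphere} h}^2\bigr] \le \tfrac{G^2\delta^2}{\dk-1}.
\end{align*}
Combining with Step 1, the second moment is at most $\tfrac{\dk^2}{\delta^2}\cdot\tfrac{G^2\delta^2}{\dk-1} = \tfrac{\dk^2 G^2}{\dk-1} \le 2\dk G^2$.

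\textbf{Case $\dk=1$.} This is handled directly. The bound $|f(x+\delta u)-f(x-\delta u)| \le 2G\delta$ gives second moment at most $G^2$.

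\textbf{Conclusion and main obstacle.} Together the two cases give the lemma with $c=2$. The main obstacle is justifying Step 2 for a function that is merely Lipschitz rather than smooth. I would first invoke Rademacher's theorem, so that $h$ is differentiable a.e. with tangential gradient norm at most $G\delta$, and approximate $h$ by smooth functions to apply Poincaré. Alternatively, I would use Lévy's concentration inequality, $\Pr(|h-\mathrm{med}\,h|>s) \le 2e^{-(\dk-2)s^2/(2G^2\delta^2)}$. Integrating this tail yields $\mathrm{Var}(h) = O(G^2\delta^2/\dk)$ with some absolute constant, and since the lemma only asserts a numerical constant $c$, that route suffices as well.
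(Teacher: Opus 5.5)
Your proof is correct. The paper gives no proof of its own and only cites Shamir's Lemma 5. Shamir's argument follows your route: the symmetrization $(a-b)^2\le 2(a-\alpha)^2+2(b-\alpha)^2$ with $\alpha=\E[f(x+\delta u)]$ and $u\mapsto -u$, then dimension-free concentration of the $G\delta$-Lipschitz map $u\mapsto f(x+\delta u)$ on $\Sphere$. He integrates a L\'evy-type tail bound, passing to a fourth moment via Cauchy--Schwarz so that general $\ell_q$ norms of $u$ are covered. Your Poincar\'e step with spectral gap $\dk-1$ is the cleaner Euclidean specialization and gives the explicit constant $c=2$.
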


\paragraph{Two-point Delayed BCO via Drift-Penalized OLO.} The wrapper $\cW_{\textnormal{2p-BCO}}$ (Algorithm~\ref{alg:reduction-TP-BCO}) follows the structure of $\cW_{\textnormal{BCO}}$, with two-point gradient estimates instead of single-point estimates. As before, the algorithm takes as input a non-increasing sequence of smoothing parameters $(\delta_t)_{t\ge 1} \subset (0,r]$ fixed in advance. In round $t$, given the current base-prediction $y_t = \zcur$ from $\cB$, the algorithm samples an exploration direction $u_t \sim \Unif(\Sphere)$ and predicts $x^1_t = x_t + \delta_t u_t$, $x^2_t = x_t - \delta_t u_t$ for $x_t = (1-\delta_t/r)y_t$. Upon receiving observations $(s, f_s(x^1_s), f_s(x^2_s))$, it constructs the two-point gradient estimate $\htgdelta_s = \tfrac{\dk}{2\delta_s} (f_s(x^1_s) - f_s(x^2_s)) u_s$. This estimate satisfies $\E[\htgdelta_t \mid x_t] = \nabla f^{\delta}_t(x_t)$ with reduced variance compared to single-point estimation.

\begin{algorithm}[H]
\caption{$\cW_{\textnormal{2p-BCO}}(\cB)$}\label{alg:reduction-TP-BCO}
\SetKwInOut{Input}{Input}
\SetKwInOut{Access}{Access}
\Access{ Algorithm $\cB$ for OLO with drift penalization.}
\SetKwInOut{Parameters}{Parameters}
\Parameters{Non-increasing sequence $(\delta_t)_{t\ge 1} \subset (0,r]$.}

\vspace{0.1cm}
Initialize $\zcur$ with initial output of $\cB$.

\textbf{For} round $t = 1, 2, \ldots, T$:
\begin{enumerate}[leftmargin=0.65cm,noitemsep,before=\vspace{-0.3cm},after=\vspace{-0.3cm}]
    \item Select $x_t = (1-\delta_t/r)\zcur$, sample $u_t \sim \Unif(\Sphere)$, and predict $x_t^1 = x_t + \delta_t u_t$, $x_t^2 = x_t - \delta_t u_t$.
    \item For each round $s$ such that $s+d_s = t$, in the order of observation:
    \begin{itemize}[leftmargin=0.15cm,before=\vspace{-0.0cm},noitemsep]
        \item Receive $(s, f_s(x^1_s), f_s(x^2_s))$, forward $(\frac{\dk}{2\delta_s} (f_s(x^1_s) - f_s(x^2_s)) u_s, \ddual_s, \sigdual_s)$ to $\cB$, and set $\zcur$ to the new output of $\cB$.
    \end{itemize}
\end{enumerate}
\end{algorithm}

The analysis follows the same structure as for $\cW_{\textnormal{BCO}}$. The algorithm updates $\cB$ sequentially on tuples $(\wthtgdelta_n, \wtddual_n, \wtsigdual_n)$ for $n \in [T]$, where $\wthtgdelta_n = \tfrac{\dk}{2\wtdelta_n} (\wtf_n(\wtx^1_n) - \wtf_n(\wtx^2_n)) \wtu_n$, producing base-predictions $z_n$ that yield predictions $\wtx^1_n = \wtx_n + \wtdelta_n \wtu_n$ and $\wtx^2_n = \wtx_n - \wtdelta_n \wtu_n$ for $\wtx_n = (1-\wtdelta_n/r) \wty_n$ and $\wty_n = z_{n-\wtddual_n}$. The following theorem bounds the expected two-point regret of $\cW_{\textnormal{2p-BCO}}(\cB)$ in terms of the drift-penalized regret of $\cB$, with an additional bias of order $\deltatot = \sum_{t=1}^T \delta_t$ due to smoothing.

\begin{theorem}[Regret of Algorithm~\ref{alg:reduction-TP-BCO}]\label{thm:TP-BCO-decomp}
    For any base-algorithm $\cB$, non-increasing sequence $(\delta_t)_{t\ge 1} \subset (0,r]$, and comparator $u \in \cK$, $\cW_{\textnormal{2p-BCO}}(\cB)$ guarantees
    \begin{align*}
       \wbRtwo_T(u) &\le \E\sbr{\eR^{\textnormal{drift}}_{T}(u; G)} + \tfrac{7GR\deltatot}{r},
    \end{align*}
    where $\eR^{\textnormal{drift}}_{T}(u; W)$ is the drift-penalized regret \eqref{eq:drift-regret} of $\cB$ for loss vectors $c_n = \wthtgdelta_n$ and lags $\lambda_n = \wtddual_n$. 

    Under $\lambda$-strong convexity (\ref{assump:strong-convexity}), it further holds that
    \begin{align*}
        \wbRtwo(u) &\le \E\!\sbr{\eR^{\textnormal{drift}}_{T}(u; 3G)\!-\!\tfrac{\lambda}{2}\!\tsum_{n=1}^{T}  \tnorm{\sz_{n}\!-\!u}^2}\!+\!\tfrac{11 G R \deltatot}{r}.
    \end{align*}
\end{theorem}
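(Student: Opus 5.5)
We follow the proof of Theorem~\ref{thm:BCO-decomp} almost line by line, changing only the handling of the played points. In observation order we have $\wtx_n = (1-\wtdelta_n/r)\,\sz_{n-\wtddual_n} = \wtydelta_n$, and the played points are $\wtx^{1,2}_n = \wtx_n \pm \wtdelta_n \wtu_n$. Fix a comparator $v\in\cK$ and set $\wtvdelta_n = (1-\wtdelta_n/r)v$, so that $\tnorm{\wtvdelta_n - v}\le R\wtdelta_n/r$. We split the per-round two-point loss $\tfrac12(\wtf_n(\wtx^1_n)+\wtf_n(\wtx^2_n)) - \wtf_n(v)$ into five pieces:
\begin{itemize}
\item[(i)] $\tfrac12(\wtf_n(\wtx^1_n)+\wtf_n(\wtx^2_n)) - \wtf_n(\wtx_n)$, which is at most $G\wtdelta_n$ by Lipschitzness;
\item[(ii)] $\wtf_n(\wtx_n)-\wtfdelta_n(\wtx_n)$ and $\wtfdelta_n(\wtvdelta_n)-\wtf_n(\wtvdelta_n)$, each at most $G\wtdelta_n$ by Theorem~\ref{thm:SPGE};
\item[(iii)] $\wtf_n(\wtvdelta_n)-\wtf_n(v)$, which is at most $GR\wtdelta_n/r$;
\item[(iv)] the main term $\wtfdelta_n(\wtx_n)-\wtfdelta_n(\wtvdelta_n)$.
\end{itemize}
Using $r<R$, the bias pieces sum to at most $4GR\deltatot/r$, as in \eqref{eq:BCO-proof-main}.

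For (iv) we apply (strong) convexity of $\wtfdelta_n$ from Theorem~\ref{thm:SPGE}. This gives $\pair{\wtgdelta_n, \wtx_n-\wtvdelta_n} - \tfrac{\lambda}{2}\tnorm{\wtx_n-\wtvdelta_n}^2$ with $\wtgdelta_n=\nabla\wtfdelta_n(\wtx_n)$. We then factor out $(1-\wtdelta_n/r)$ exactly as in step (c) of that proof. This yields $\pair{\wtgdelta_n,\sz_{n-\wtddual_n}-v}-\tfrac\lambda2\tnorm{\sz_{n-\wtddual_n}-v}^2$ plus a correction bounded by $(G+\lambda D)D\deltatot/r$, as in \eqref{eq:BCO-proof-last-term}. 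We write $\sz_{n-\wtddual_n}-v = (\sz_n - v) + (\sz_{n-\wtddual_n}-\sz_n)$ and bound the second part by $G\tnorm{\sz_{n-\wtddual_n}-\sz_n}$ using $\tnorm{\wtgdelta_n}\le G$.

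The first part becomes $\E\pair{\wthtgdelta_n,\sz_n-v}$ by unbiasedness. By symmetry of $\Unif(\Sphere)$ (the preliminaries above), $\E[\wthtgdelta_n\mid \sz_n,\sz_{n-\wtddual_n}]=\wtgdelta_n$. The conditioning is valid because $\wtu_n$ is drawn at prediction time and is independent of $\sz_n$: the base iterate $\sz_n$ depends only on the first $n-1$ observations, and $\wtu_n$ enters only through the $n$-th observation. This is the one point that needs care, and the argument is identical to the one-point case. Note that no variance bound (Lemma~\ref{lem:shamir}) is needed here; it is only used later for the corollaries.

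Collecting the terms with $D\le 2R$ gives
\[
\wbRtwo_T(v)\le \E\big[\eR_T(v)-\tfrac\lambda2\tsum_n\tnorm{\sz_{n-\wtddual_n}-v}^2+G\eD_T\big]+\tfrac{(4G+2G+\cdots)R\deltatot}{r}.
\]
Careful bookkeeping should give the constant $7$ in the convex case, one more than in Theorem~\ref{thm:BCO-decomp} because of the additional averaging term (i). For the strongly convex case we convert $\tnorm{\sz_{n-\wtddual_n}-v}^2$ to $\tnorm{\sz_n-v}^2$ at cost $\lambda D\tnorm{\sz_{n-\wtddual_n}-\sz_n}\le 2G\tnorm{\cdot}$ via Fact~\ref{fact:sc-diameter}. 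This raises the drift weight to $3G$, and the term $2\lambda DR\deltatot/r \le 4GR\deltatot/r$ brings the constant to $11$.

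The main obstacle is only the constant accounting: checking that the extra piece (i) contributes exactly $GR\deltatot/r$ beyond the one-point bound. Everything else is inherited verbatim from Theorem~\ref{thm:BCO-decomp}.
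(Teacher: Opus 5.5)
Your proposal is correct and follows the paper's route. The paper reuses the proof of Theorem~\ref{thm:BCO-decomp} for the center point. It then pays an extra $G\deltatot \le GR\deltatot/r$ for the Lipschitz comparison between the two-point average and the center, which is exactly your piece (i).

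One bookkeeping remark: in your decomposition, piece (i) replaces the one-point exploration term $\wtf_n(\wtx_n)-\wtf_n(\wtydelta_n)$ rather than adding to it. Your pieces therefore give the slightly tighter constants $6$ and $10$, so the stated $7$ and $11$ follow trivially and the ``obstacle'' you flag is not a real one.
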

\begin{proof}
The argument follows the proof of Theorem~\ref{thm:BCO-decomp}. Here the played center $x_t$ plays the role of $y_t$ in that theorem, so the same steps bound $\tsum_{t=1}^T\bigl(f_t(x_t)-f_t(u)\bigr)$. It remains to relate the two-point loss to the center loss: by $G$-Lipschitzness,
\[
\left|\tfrac{1}{2}\bigl(f_t(x_t^1)+f_t(x_t^2)\bigr)-f_t(x_t)\right|
\le \tfrac{G}{2}\bigl(\|x_t^1-x_t\|+\|x_t^2-x_t\|\bigr)
= G\delta_t.
\]
Summing over $t$ yields
$\left|\tsum_{t=1}^T\left[\tfrac{1}{2}\bigl(f_t(x_t^1)+f_t(x_t^2)\bigr)-f_t(x_t)\right]\right|\le G\deltatot \le \frac{GR\deltatot}{r}$,
which gives the claim.
\end{proof}

Selecting an appropriate smoothing sequence $(\delta_t)_{t=1}^T$ and instantiating $\cB$ with P-FTRL~\eqref{eq:P-FTRL} yields concrete guarantees.

\begin{corollary}\label{cor:TP-BCO}
    $\cW_{\textnormal{2p-BCO}}(\cB)$ with $\delta_t = \frac{r}{\sqrt{t}}$, where $\cB$ runs P-FTRL~\eqref{eq:P-FTRL} with $\wteta_n = \tfrac{D/G}{\sqrt{n\dk + \sum_{m=1}^n \wtsigdual_m} }$, guarantees
    \begin{align*}
        \wbRtwo_T(x^*) = O\rbr{GD\sbr{\sqrt{\dtot} + \sqrt{T\dk}} }.
    \end{align*}
    Composing with the skipping scheme, $\cS_{\textnormal{skip}}(\cW_{\text{2p-BCO}}(\cB))$ gets $\wbRtwo_T(x^*) = O(GD [\min_{Q\subseteq [T]}\{|Q| + (\sum_{t\notin Q} d_t)^{\frac{1}{2}}\} + \sqrt{T\dk}])$.
\end{corollary}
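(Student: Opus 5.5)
Following the BCO analysis of Corollary~\ref{cor:BCO-regular}, I start from Theorem~\ref{thm:TP-BCO-decomp}. It gives
\[
\wbRtwo_T(x^*) \le \E\bigl[\eR_T(x^*) + G\,\eD_T\bigr] + \tfrac{7GR\deltatot}{r}.
\]
Because $\delta_t = r/\sqrt{t}$, we have $\deltatot/r \le 2\sqrt{T}$, so the bias is $O(GR\sqrt T)$. Assumption~\ref{assump:balls} suggests treating $R$ and $D$ as comparable, as in the one-point proof; then the bias is absorbed into $GD\sqrt{T\dk}$. Everything else reduces to bounding the P-FTRL regret and drift with $c_n = \wthtgdelta_n$. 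The key difference from the single-point case is that I use the second-moment bound of Lemma~\ref{lem:shamir}, $\E[\tnorm{\wthtgdelta_n}^2 \mid \sz_{n-\wtddual_n}] \le c\,\dk G^2$, in place of the almost-sure bound $(\dk M/\delta)^2$. Since $\wteta_n$ is a deterministic function of the delay structure, taking expectations termwise is legitimate.

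\textbf{Regret term.} Theorem~\ref{thm:linear-OCO} gives
\[
\E[\eR_T] \le \frac{D^2}{2\wteta_T} + \frac{c\,\dk G^2}{2}\sum_n \wteta_n .
\]
With $\wteta_n = \frac{D/G}{\sqrt{n\dk + \sum_{m\le n}\wtsigdual_m}}$, I apply Fact~\ref{fact:sqrt-telescope} with $X_n = \dk + \wtsigdual_n$. Together with $\sum_n \wtsigdual_n = \dtot$ from Theorem~\ref{thm:CTM}, both pieces are $O(GD\sqrt{T\dk + \dtot})$.

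\textbf{Drift term.} The simplest route is the general bound of Theorem~\ref{thm:linear-OCO}, $\eD_T \le \sum_n \wteta_n \wtsigdual_n \tnorm{\wthtgdelta_n}$. The step to check is that $\wtsigdual_n$ and $\wteta_n$ are deterministic, since the delays are oblivious. Jensen's inequality then gives $\E\tnorm{\wthtgdelta_n} \le \sqrt{c\,\dk}\,G$, so
\[
G\,\E[\eD_T] \le \sqrt{c\,\dk}\, G^2 \sum_n \wteta_n \wtsigdual_n .
\]
This carries a stray $\sqrt{\dk}$ factor on the delay term. I will remove it by writing $\sqrt{\dk}\,\wtsigdual_n \le \tfrac12(\dk + \wtsigdual_n) + \tfrac12 \wtsigdual_n(\wtsigdual_n - 1)$. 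The quadratic piece is not summable this way, so a cleaner alternative is needed.

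\textbf{Main obstacle: the $\sqrt{\dk}$ factor.} I would instead use Lemma~\ref{lem:P-FTRL-stability}, mirroring Lemma~\ref{lem:BCO-meandering}. Split $\Gamma_n = \sum_{m=n-\wtddual_n}^{n-1}\wthtgdelta_m$ into its mean part, bounded by $G\wtddual_n$, and a martingale part. By Lemma~\ref{lem:shamir} the martingale part has second moment at most $c\,\dk G^2 \wtddual_n$, so
\[
G\,\E\tnorm{\Gamma_n} \le G^2\wtddual_n + G^2\sqrt{c\,\dk\,\wtddual_n} \le G^2\bigl(2\wtddual_n + c\,\dk\bigr)
\]
by AM--GM. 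Summing $\wteta_n(\wtddual_n + \dk)$, I use Fact~\ref{fact:dual-sums} to replace prefix sums of $\wtddual$ by prefix sums of $\wtsigdual$ in the denominators, and then Fact~\ref{fact:sqrt-telescope}; this gives $O(GD\sqrt{T\dk + \dtot})$. The misalignment term $D H_\eta$ is handled by Lemma~\ref{lem:H-eta-bound}, exactly as in item 3 of the proof of Corollary~\ref{cor:BCO-regular}. Using Fact~\ref{fact:backlog-increments}, $\max_n \sum_{m=n}^{n+\wtsigdual_n}\wteta_m \le 4D/G$, so $H_\eta \le 4\sqrt{T\dk + \dtot}$ (the $n\dk$ terms only shrink $\wteta$).

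\textbf{Conclusion.} Collecting terms gives $O(GD[\sqrt{\dtot} + \sqrt{T\dk}])$. The skipping statement then follows from Theorem~\ref{thm:skipping}, applied as in Corollary~\ref{cor:skipping-OCO-BCO}: skipped rounds cost at most $GD$ each, and the reduced total delay $\dtot'$ is controlled by $\min_Q\{|Q| + (\sum_{t\notin Q} d_t)^{1/2}\}$.
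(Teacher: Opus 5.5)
Your proposal is correct and follows essentially the same route as the paper. Both start from Theorem~\ref{thm:TP-BCO-decomp}; bound the regret term via Theorem~\ref{thm:linear-OCO} with Lemma~\ref{lem:shamir}; bound the drift via Lemma~\ref{lem:P-FTRL-stability} using the mean/martingale split of $\Gamma_n$ (the paper's Lemma~\ref{lem:TP-BCO-meandering}); handle $\sum_n \wteta_n(\wtddual_n+\dk)$ with Fact~\ref{fact:dual-sums} and Fact~\ref{fact:sqrt-telescope}; control $H_\eta$ by the $4D/G$ window bound in Lemma~\ref{lem:H-eta-bound}; and get the skipping claim from Theorem~\ref{thm:skipping}. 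Your abandoned first drift attempt via the $\wtsigdual_n$-weighted bound is unnecessary and can simply be dropped.
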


\begin{corollary}\label{cor:TP-BCO-sc}
    Under $\lambda$-strong convexity (\ref{assump:strong-convexity}), $\cW_{\textnormal{2p-BCO}}(\cB)$ with $\delta_t = \frac{r}{t}$, where $\cB$ runs P-FTRL~\eqref{eq:P-FTRL} with $\wteta_n = \frac{1}{n\lambda}$, guarantees
    \begin{align*}
        \wbRtwo_T(x^*) = O\rbr{\tfrac{G^2}{\lambda} \sbr{\min\{\sigmax \ln(T),\, \sqrt{\dtot}\} + \dk \ln T}}.
    \end{align*}
\end{corollary}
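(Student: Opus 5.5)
The plan follows the proof of Corollary~\ref{cor:BCO-sc}, with the single-point variance term $(\dk M/\wtdelta_n)^2$ replaced by the $O(\dk G^2)$ second-moment bound of Lemma~\ref{lem:shamir}. I start from the strongly convex part of Theorem~\ref{thm:TP-BCO-decomp} with $u=x^*$ and $\cB$ running P-FTRL with $\wteta_n=\frac{1}{n\lambda}$. I bound $\eR_T$ and $\eD_T$ exactly as in Theorem~\ref{thm:BCO-decomp-FTRL}, using Theorem~\ref{thm:linear-OCO} for the regret and Lemma~\ref{lem:P-FTRL-stability} for the drift. Because $1/\wteta_n-1/\wteta_{n-1}-\lambda=0$, the distance terms cancel. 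This leaves
\[
\wbRtwo_T(x^*)\le \tsum_{n=1}^T \wteta_n\bigl(3G\,\E[\tnorm{\Gamma_n}]+\tfrac12\E[\tnorm{\wthtgdelta_n}^2]\bigr)+3GD H_\eta+\tfrac{11GR\deltatot}{r},
\]
with $\Gamma_n=\tsum_{m=n-\wtddual_n}^{n-1}\wthtgdelta_m$. One check is needed in the regret step: $\wthtgdelta_n$ is no longer almost surely bounded. However, the P-FTRL regret bound of Theorem~\ref{thm:linear-OCO} holds pathwise, so taking expectations and applying Lemma~\ref{lem:shamir} conditionally on $\wtx_n$ gives $\E\tnorm{\wthtgdelta_n}^2\le c\dk G^2$.

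Next I need a two-point analogue of Lemma~\ref{lem:BCO-meandering}. I split $\Gamma_n$ into its conditional means $\wtgdelta_m$, each of norm at most $G$, plus a martingale-difference sum $M_n$, using the same filtration $\cF^u_t$ as in Lemma~\ref{lem:BCO-meandering}. The cross terms vanish, and each increment has second moment at most $2(c\dk G^2+G^2)$. Hence $\E\tnorm{M_n}^2=O(\dk G^2\wtddual_n)$, and
\[
G\,\E\tnorm{\Gamma_n}\le G^2\wtddual_n+G\sqrt{O(\dk G^2\wtddual_n)}=O\bigl(G^2(\wtddual_n+\dk)\bigr)
\]
by AM--GM. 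Unlike the one-point case, this step does not need the monotonicity argument about $\delta'_n$, because the variance bound does not depend on $\delta$.

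Combining these, the main sum is $O(G^2)\tsum_n \frac{\wtddual_n+\dk}{\lambda n}$. Fact~\ref{fact:telescopic-sums-with-duals} gives $\tsum_n\wtddual_n/n\le\min\{\sigmax\ln(eT),2\sqrt{\dtot}\}$, and Fact~\ref{fact:log-telescope} gives $\tsum_n 1/n\le\ln(eT)$, so the sum contributes $O\bigl(\frac{G^2}{\lambda}[\min\{\sigmax\ln T,\sqrt{\dtot}\}+\dk\ln T]\bigr)$. As in Corollary~\ref{cor:OCO-sc}, $H_\eta=\tsum_n\wtddual_n/n$ for these learning rates, and $D\le 2G/\lambda$ by Fact~\ref{fact:sc-diameter}, so $3GDH_\eta$ is of the same order.

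The main obstacle is the smoothing bias $\frac{11GR\deltatot}{r}$. With $\delta_t=r/t$ it equals $11GR\,\tsum_t 1/t=O(GR\ln T)$, and I must show this is $O(\frac{G^2}{\lambda}\ln T)$. This does not follow from $D\le 2G/\lambda$ alone, since $R$ is the outer radius and not the diameter. I would first revisit where $R$ enters the proof of Theorem~\ref{thm:BCO-decomp}: it appears through $\tnorm{\wtvdelta_n-v}\le R\wtdelta_n/r$. Centering the domain so that $R$ is comparable to $D$ (the origin lies in $\cK$, so $R$ can be taken at most $D$ after a suitable shift) would then give $R=O(G/\lambda)$. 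If that normalization is unavailable, I would state the bias term as an additive $O(GR\ln T)$, which is still within the claimed $\ln T$ order up to problem constants. Either way, $\deltatot/r=O(\ln T)$ is absorbed into the $\dk\ln T$ term, which completes the bound.
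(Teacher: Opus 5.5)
Your proposal is correct and follows the paper's proof. Both start from the strongly convex case of Theorem~\ref{thm:TP-BCO-decomp-FTRL}, where the distance terms cancel, use the martingale bound of Lemma~\ref{lem:TP-BCO-meandering} to get $O(G^2(\wtddual_n+\dk))$, and then finish with Fact~\ref{fact:telescopic-sums-with-duals}, Fact~\ref{fact:log-telescope}, $H_\eta=\tsum_n \wtddual_n/n$ and $D\le 2G/\lambda$. For the bias term you need neither a shift nor the fallback: $r\Ball\subseteq\cK$ gives $0\in\cK$, so every $v\in\cK$ has $\tnorm{v}\le D$ and one may take $R\le D$, which is exactly what the paper uses without comment when it writes $\tfrac{11GD\deltatot}{r}$.
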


\subsection{Proofs of Corollaries \ref{cor:TP-BCO} and \ref{cor:TP-BCO-sc}}

\begin{theorem}\label{thm:TP-BCO-decomp-FTRL}
    $\cW_{\textnormal{2p-BCO}}(\cB)$, where $\cB$ executes P-FTRL \eqref{eq:P-FTRL} updates, guarantees that in the 
    \begin{align*}
    \text{general convex convex} \colon \, \wbRtwo_T(u) &\le \tfrac{D^2}{2\wteta_T} + \tsum_{n=1}^T \wteta_n \rbr{G\,\E[\tnorm{\Gamma_n}] + G^2\, c\dk} + GD H_{\eta} + \tfrac{7GR\deltatot}{r},\\
    \text{strongly convex case (\ref{assump:strong-convexity})} \colon \, \wbRtwo_T(u) &\le \tsum_{n=1}^T \tfrac{1/\wteta_n - 1/\wteta_{n-1} - \lambda}{2} \,\E[\norm{\sz_n\!-\!u}^2]\\
    &+ \tsum_{n=1}^T \wteta_n \rbr{3G\,\E[\tnorm{\Gamma_n}] +  G^2\, c\dk} + 3GD H_{\eta} + \tfrac{11 G R \deltatot}{r}, 
    \end{align*}
    where $\Gamma_n = \tsum_{m = n - \wtddual_n}^{n-1} \wthtgdelta_m$, $H_{\eta} = \tsum_{n=1}^T (1 - \wteta_n /{\wteta_{n-\wtddual_n}})$, and $c$ is the constant from Lemma~\ref{lem:shamir}.
\end{theorem}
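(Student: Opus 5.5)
This follows the proof of Theorem~\ref{thm:BCO-decomp-FTRL} almost line by line, with Theorem~\ref{thm:TP-BCO-decomp} in place of Theorem~\ref{thm:BCO-decomp} and Lemma~\ref{lem:shamir} in place of the almost-sure bound $\tnorm{\wthtgdelta_n}\le \dk M/\wtdelta_n$. Throughout, the base algorithm is P-FTRL run on loss vectors $c_n=\wthtgdelta_n$ with lags $\lambda_n=\wtddual_n$. We start from Theorem~\ref{thm:TP-BCO-decomp}. In the convex case it gives
\[
\wbRtwo_T(u)\le \E[\eR_T(u)+G\eD_T]+\tfrac{7GR\deltatot}{r}.
\]
In the strongly convex case it gives
\[
\wbRtwo_T(u)\le \E\bigl[\eR_T(u)-\tfrac{\lambda}{2}\tsum_n\tnorm{z_n-u}^2+3G\eD_T\bigr]+\tfrac{11GR\deltatot}{r}.
\]
It then suffices to bound $\E[\eR_T(u)]$ and $\E[\eD_T]$ separately.

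\textbf{Regret term.} We apply Theorem~\ref{thm:linear-OCO}, which holds pathwise. This gives
\[
\eR_T(u)\le \tsum_n \tfrac{1/\wteta_n-1/\wteta_{n-1}}{2}\tnorm{z_n-u}^2+\tfrac12\tsum_n\wteta_n\tnorm{\wthtgdelta_n}^2.
\]
In the convex case we bound $\tnorm{z_n-u}\le D$ and telescope the first sum to $D^2/(2\wteta_T)$. In the strongly convex case we keep the first sum and merge it with $-\tfrac{\lambda}{2}\tnorm{z_n-u}^2$. For the second sum we need $\E[\wteta_n\tnorm{\wthtgdelta_n}^2]\le c\dk G^2\,\E[\wteta_n]$, which is the key step. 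To get it, we condition on everything except $\wtu_n$. The center $\wtx_n=(1-\wtdelta_n/r)z_{n-\wtddual_n}$ is determined before $\wtu_n$ is drawn, and it lies in $(1-\wtdelta_n/r)\cK$. The direction $\wtu_n$ is independent of that past, so Lemma~\ref{lem:shamir} applies conditionally.

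\textbf{Main obstacle.} The learning rate $\wteta_n$ may depend on the data. For the schedules actually used ($\wteta_n$ built from $n$ and the dual backlogs $\wtsigdual_m$), it depends only on the delay structure, which is fixed by the oblivious adversary. So $\wteta_n$ is deterministic and pulls out of the expectation. For a general non-increasing sequence, we would state the bound with $\wteta_n$ treated as deterministic, matching how the corollaries use it. With this, the second sum contributes at most $\tfrac12 c\dk G^2\tsum_n\wteta_n$, which is at most the $G^2c\dk\tsum_n\wteta_n$ in the statement.

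\textbf{Drift term.} We apply Lemma~\ref{lem:P-FTRL-stability} with $\lambda_n=\wtddual_n$. It gives $\eD_T\le\tsum_n\wteta_n\tnorm{\Gamma_n}+DH_\eta$ pathwise. Taking expectations and multiplying by $G$ or $3G$ gives $G\tsum_n\wteta_n\E[\tnorm{\Gamma_n}]+GDH_\eta$ in the convex case and the factor-3 version in the strongly convex case. Adding the pieces yields both displayed inequalities. The smoothing bias terms $7GR\deltatot/r$ and $11GR\deltatot/r$ carry over unchanged from Theorem~\ref{thm:TP-BCO-decomp}.
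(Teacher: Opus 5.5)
Your proposal is correct and follows essentially the paper's proof. Like the paper, you start from Theorem~\ref{thm:TP-BCO-decomp}, bound the regret term via Theorem~\ref{thm:linear-OCO} with the second-moment bound of Lemma~\ref{lem:shamir} replacing the almost-sure bound $\tnorm{\wthtgdelta_n}\le \dk M/\wtdelta_n$, and bound the drift via Lemma~\ref{lem:P-FTRL-stability}. Your two caveats are left implicit in the paper's one-line proof: conditioning on the past so that Lemma~\ref{lem:shamir} applies at the center $(1-\wtdelta_n/r)\sz_{n-\wtddual_n}$, and requiring $\wteta_n$ to be deterministic so it can be pulled out of the expectation.
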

\begin{proof}
    Follows identically to Theorem~\ref{thm:BCO-decomp-FTRL}, using the bound $\E[\tnorm{\wthtgdelta_n}^2] \le c\,\dk G^2$ from Lemma~\ref{lem:shamir}.
\end{proof}

\begin{lemma}\label{lem:TP-BCO-meandering}
    For the constant $c$ from Lemma~\ref{lem:shamir}, $\Gamma_n = \tsum_{m = n - \wtddual_n}^{n-1} \wthtgdelta_m$ satisfies
    \begin{align*}
        \E\sbr{\norm{\Gamma_n}_2} \le  2G\,(\wtddual_n + c \dk).
    \end{align*} 
\end{lemma}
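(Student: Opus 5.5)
I would follow the proof of Lemma~\ref{lem:BCO-meandering}, replacing the almost-sure bound $\dk M/\wtdelta_m$ on the one-point estimator with the second-moment bound of Lemma~\ref{lem:shamir}. The first step is to split $\Gamma_n$ into a bias part and a martingale part:
\begin{align*}
\tnorm{\Gamma_n} \le \tsum_{m=n-\wtddual_n}^{n-1} \tnorm{\wtgdelta_m} + \tnorm{M_n}, \qquad M_n = \tsum_{m=n-\wtddual_n}^{n-1} (\wthtgdelta_m - \wtgdelta_m).
\end{align*}
By Theorem~\ref{thm:SPGE}, $\tnorm{\wtgdelta_m} \le G$, so the first sum is at most $G\wtddual_n$.

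For the martingale part, I would reindex by prediction rounds as before. Set $v_t = (\htgdelta_t - g^\delta_t)\ind(\epb_t \in I_{\rho(n)})$, so that $M_n = \tsum_t v_t$. Use the filtration $\cF^u_t = \sigma(u_1,\dots,u_{t-1})$. The center $x_t$ is determined before $u_t$ is drawn, and the obliviousness of the adversary together with the deterministic $\delta_t$ schedule mean $x_t$ depends only on $u_1,\dots,u_{t-1}$. Hence $\E[v_t\mid \cF^u_t]=0$, the cross terms vanish, and
\begin{align*}
\E\tnorm{M_n}^2 = \tsum_t \E\tnorm{v_t}^2 \le \tsum_t \E\tnorm{\htgdelta_t}^2 \,\ind(\epb_t\in I_{\rho(n)}) \le c\dk G^2 \wtddual_n .
\end{align*}
The first inequality uses that a variance is at most a second moment. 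The second applies Lemma~\ref{lem:shamir} conditionally on $\cF^u_t$, which is valid because $x_t \in (1-\delta_t/r)\cK$ and $f_t$ is $G$-Lipschitz.

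Jensen then gives $\E\tnorm{M_n}\le G\sqrt{c\dk\,\wtddual_n}$. By AM--GM, $\sqrt{c\dk\,\wtddual_n}\le \tfrac12(\wtddual_n + c\dk)$. Combining,
\begin{align*}
\E\tnorm{\Gamma_n} \le G\wtddual_n + \tfrac{G}{2}(\wtddual_n + c\dk) \le 2G(\wtddual_n + c\dk).
\end{align*}

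The main obstacle is justifying the martingale-difference property. Membership in $\{t:\epb_t\in I_{\rho(n)}\}$ is fixed in advance by the delays and so is deterministic, but the estimator's conditional mean must be exactly $\nabla f_t^{\delta}(x_t)$ given the past directions. This holds because $x_t$, and hence $g^\delta_t$, is $\cF^u_t$-measurable. Unlike the one-point case, no $\delta'_n$ monotonicity argument is needed, since the second-moment bound does not depend on $\delta$.
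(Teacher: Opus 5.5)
Your proposal is correct and follows essentially the same route as the paper: you split $\Gamma_n$ into a bias part bounded by $G\wtddual_n$ and a martingale part, reindex by prediction rounds with the filtration $\sigma(u_1,\dots,u_{t-1})$ so the cross terms vanish, bound the second moment by Lemma~\ref{lem:shamir}, and finish with Jensen. Your explicit ``variance is at most second moment'' step and the AM--GM bound $\sqrt{c\dk\,\wtddual_n}\le\tfrac12(\wtddual_n+c\dk)$ simply spell out details the paper leaves implicit.
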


\begin{proof}
    Note that $\tnorm{\wtgdelta_n} \le G$, $\E[\wthtgdelta_n] = \wtgdelta_n$ by Theorem~\ref{thm:SPGE} and $\E[\tnorm{\wthtgdelta_n - \wtgdelta_n}^2] \le c\dk G^2$ by Lemma~\ref{lem:shamir}.
    
    With probability one, we write
    \begin{align*}
        \norm{\Gamma_n}
        \le \tsum_{m=n-\wtddual_n}^{n-1} \tnorm{\wtgdelta_m} + \tnorm{\underbrace{\tsum_{m=n-\wtddual_n}^{n-1} (\wthtgdelta_m - \wtgdelta_m)}_{=: M_n} }
        \le G \wtddual_n + \tnorm{M_n}.
    \end{align*}
    Let $v_t = (\htgdelta_t - g^{\delta}_t) \ind(\epb_t \in I_{\rho(n)})$, so that $M_n = \tsum_{t=1}^T v_t$, $\E[v_t] = 0$, and $\E[\tnorm{v_t}^2] \le c\dk G^2 \ind(\epb_t \in I_{\rho(n)})$.
    
    Let $\cF^{u}_t = \sigma(u_1, ..., u_{t-1})$. Then, for all $t\in[T]$, $v_t \in \cF^u_{t+1}$ and $\E[v_{t}|\cF_t^u] = 0$. Therefore, it holds that
    \begin{align*}
        \E\sbr{\norm{M_n}^2} &= \tsum_{t=1}^T \E\sbr{\tnorm{v_t}^2} + 2 \tsum_{t=1}^T \tsum_{s=t+1}^T \E\sbr{ \pair{v_t, v_s} }\\
        &\le \tsum_{t=1}^T c\dk G^2 \ind(\epb_t \in I_{\rho(n)}) + 2 \tsum_{t=1}^T \tsum_{s=t+1}^T \E\sbr{ \pair{v_t, \E[v_s | \cF^{u}_{s}]}}\\
        &= c\,\dk G^2 \,\wtddual_n + 0.
    \end{align*}
    Combining these results and applying Jensen's inequality, we prove the inequality
    \begin{align*}
        \E\sbr{\norm{\Gamma_n}} 
        \le G\, \wtddual_n + \sqrt{\E[\norm{M_n}^2]}
        \le G\, \wtddual_n  + G (c\dk)^{1/2} \sqrt{\wtddual_n}
        \le 2G\, (\wtddual_n + c\dk).
    \end{align*}
    This concludes the proof of the lemma.
\end{proof}

\restatecorollary{cor:TP-BCO-sc}{
    $\cW_{\textnormal{2p-BCO}}(\cB)$ with $\delta_t = \frac{r}{\sqrt{t}}$, where $\cB$ runs P-FTRL~\eqref{eq:P-FTRL} with $\wteta_n = \tfrac{D/G}{\sqrt{n\dk + \sum_{m=1}^n \wtsigdual_m} }$, guarantees
    \begin{align*}
        \wbRtwo_T(x^*) = O\rbr{GD\sbr{\sqrt{\dtot} + \sqrt{T\dk}} }.
    \end{align*}
    Composing with the skipping scheme, $\cS_{\textnormal{skip}}(\cW_{\textnormal{2p-BCO}}(\cB))$ gets $\wbRtwo_T(x^*) = O(GD [\min_{Q\subseteq [T]}\{|Q| + (\sum_{t\notin Q} d_t)^{\frac{1}{2}}\} + \sqrt{T\dk}])$.
}
\begin{proof}
    
    From Theorem~\ref{thm:TP-BCO-decomp-FTRL} and Lemma~\ref{lem:TP-BCO-meandering}, we have the following 
     \begin{align*}
        \wbRtwo_T(x^*) 
        &\le \tfrac{D^2}{2\wteta_T} + \tsum_{n=1}^T \wteta_n \rbr{G\,\E[\tnorm{\Gamma_n}] + G^2\, c\dk} + GD H_{\eta} + \tfrac{7GR\deltatot}{r}\\
        &\le \tfrac{D^2}{\wteta_T} + 5(c+1)G^2\tsum_{n=1}^T \wteta_n (\wtddual_n + \dk) + GD H_{\eta} +  \tfrac{7GD\deltatot}{r}.
    \end{align*}

    We bound the terms $1/\wteta_T$, $\tsum_{n=1}^T \wteta_n (\wtddual_n + \dk)$, and $H_{\eta}$ separately.

    \begin{enumerate}[before=\vspace{-0.2cm}]
        \item As $\tsum_{n=1}^T \wtsigdual_n = \dtot$ according to Theorem~\ref{thm:CTM}, it holds that
        \begin{align*}
            \tfrac{1}{\wteta_T} =  \tfrac{G}{D} \sqrt{T\dk + \tsum_{m=1}^T \wtsigdual_m} \le \tfrac{G}{D} \rbr{\sqrt{\dtot} + \sqrt{T\dk}}.
        \end{align*}
        \item As $\tsum_{m=1}^n \wtddual_m \le \tsum_{m=1}^n \wtsigdual_n$ for all $n\in [T]$ according to Fact~\ref{fact:dual-sums}, we write
        \begin{align*}
            \tsum_{n=1}^T \wteta_n (\wtddual_n + \dk) 
            &\le \tfrac{D}{G} \tsum_{n=1}^T \frac{\wtddual_n + \dk}{\sqrt{\sum_{m=1}^n (\wtddual_m + \dk) }} \\
            &\le \tfrac{2D}{G}  \sqrt{\tsum_{n=1}^T (\wtddual_n + \dk)} \\
            &\le \tfrac{2D}{G} \rbr{ \sqrt{\dtot} +  \sqrt{T\dk}},
        \end{align*}
        where the second inequality follows from Fact~\ref{fact:sqrt-telescope} with $X_n = \wtddual_n + \dk$.
        \item Following the similar argument from the proof of Corollary~\ref{cor:BCO-regular}, it holds that that $\max_{n\in[T]} \tsum_{m=n}^{n+\wtsigdual_n} \wteta_m \le \tfrac{4D}{G}$. Then, by Lemma~\ref{lem:H-eta-bound} and the bound on $1/\wteta_T$ above, we have
        \begin{align*}
            H_{\eta}
            &\le \tfrac{\max_{n\in [T]} {\sum_{m=n}^{n+\wtsigdual_n} \wteta_m } }{\wteta_T}
            \le 4 \rbr{\sqrt{\dtot} + \sqrt{T\dk}}.
        \end{align*}
    \end{enumerate}

    For our choice of $\delta_t$, we have $\deltatot/r = \tsum_{t=1}^T \tfrac{1}{t^{1/2}} \le 2\sqrt{T}$, which follows immediately from Fact~\ref{fact:sqrt-telescope}.
    
    Combining all these results, we have
    \begin{align*}
        \wbRtwo_T(x^*) = O\rbr{GD \sbr{\sqrt{\dtot} + \sqrt{T\dk}}}.
    \end{align*}

    The result for the skipping scheme follows from Theorem~\ref{thm:skipping}.
\end{proof}

\restatecorollary{cor:TP-BCO-sc}{
    Under $\lambda$-strong convexity (\ref{assump:strong-convexity}), $\cW_{\textnormal{2p-BCO}}(\cB)$ with $\delta_t = \frac{r}{t}$, where $\cB$ runs P-FTRL~\eqref{eq:P-FTRL} with $\wteta_n = \frac{1}{n\lambda}$, guarantees
    \begin{align*}
        \wbRtwo_T(x^*) = O\rbr{\tfrac{G^2}{\lambda} \sbr{\min\{\sigmax \ln T,\, \sqrt{\dtot}\} + \dk \ln T}}.
    \end{align*}
}
\begin{proof}
    From Theorem~\ref{thm:TP-BCO-decomp-FTRL} and Lemma~\ref{lem:TP-BCO-meandering}, noting that $1/\wteta_n - 1/\wteta_{n-1}-\lambda = 0$, we have the following 
     \begin{align*}
        \wbRtwo_T(x^*) 
        &\le  0 + \tsum_{n=1}^T \wteta_n \rbr{3G\,\E[\tnorm{\Gamma_n}] + G^2\, c\dk} + 3GD H_{\eta} + \tfrac{11 G R \deltatot}{r}, \\
        &\le 13 (c+1) G^2\tsum_{n=1}^T \wteta_n (\wtddual_n + \dk) + 3GD H_{\eta} +  \tfrac{11GD\deltatot}{r}.
    \end{align*}
    We bound the terms $\tsum_{n=1}^T \wteta_n (\wtddual_n + \dk)$ and $H_{\eta}$ separately.
    \begin{enumerate}[before=\vspace{-0.2cm}]
        \item Using Fact~\ref{fact:telescopic-sums-with-duals} for dual delays $\wtddual_n$ and Fact~\ref{fact:log-telescope}, we write
        \begin{align*}
            \tsum_{n=1}^T \wteta_n (\wtddual_n + \dk) 
            &\le \tfrac{1}{\lambda} \min\cbr{\sigmax \ln(eT),\, 2\sqrt{\dtot}} + \tfrac{1}{\lambda} \dk \ln(eT).
        \end{align*}
        \item Substituting learning rate values into $H_{\eta}$ and applying Fact~\ref{fact:telescopic-sums-with-duals} for dual delays $\wtddual_n$ yields 
        \begin{align*}
            H_{\eta} = \tsum_{n=1}^T (1 - \tfrac{\wteta_n}{\wteta_{n-\wtddual_n}}) = \tsum_{n=1}^T \tfrac{\wtddual_n}{n} \le \min\cbr{\sigmax \ln(eT),\, 2\sqrt{\dtot}}.
        \end{align*}
    \end{enumerate}

    For our choice of $\delta_t$, we have $\deltatot/r = \tsum_{t=1}^T \tfrac{1}{t} \le \ln(eT)$, which follows immediately from Fact~\ref{fact:log-telescope}.

    Combining all these results and letting $D \le \frac{2G}{\lambda}$ (see Fact~\ref{fact:sc-diameter}), we have
    \begin{align*}
        \wbRtwo_T(x^*) 
        & = O\rbr{\tfrac{G^2}{\lambda} \sbr{\min\{\sigmax \ln T ,\, \sqrt{\dtot}\} + \dk \ln T  }}.
    \end{align*}
    This concludes the proof of the corollary.
\end{proof}

\end{document}